\def\sloppy{%
  \tolerance 500
  \emergencystretch 3em%
  \hfuzz .5pt
  \vfuzz\hfuzz}
\definecolor{mydarkblue}{rgb}{0,0.08,0.45}
\crefname{appsec}{Appendix}{Appendixces}
\numberwithin{figure}{chapter}
\renewcommand\p@subfigure{\thefigure}
\newcommand{\Prob}{\mathbb{P}}
\newcommand{\E}{\mathbb{E}}
\newcommand{\reals}{\mathbb{R}}
\newcommand{\X}{\mathcal{X}}    
\newcommand{\Y}{\mathcal{Y}}    
\newcommand{\D}{\mathcal{D}}    
\newcommand{\A}{\mathcal{A}}    
\newcommand{\HH}{\mathcal{H}}    
\newcommand{\cR}{\mathcal{R}}
\newcommand{\R}{\mathbb{R}}
\newcommand{\risk}{\mathrm{R}}
\newcommand{\emprisk}{\hat{\risk}}
\newcommand{\gap}{\mathcal{E}_{\text{gap}}}
\newcommand{\errapp}{\mathcal{E}_\text{app}}
\newcommand{\errest}{\mathcal{E}_\text{est}}
\newcommand{\VC}{\mathrm{VC}}
\newcommand{\Rad}{\mathfrak{R}_m}
\newcommand{\Var}{\mathrm{Var}}
\newcommand{\var}{\mathrm{Var}}
\newcommand{\Bias}{\mathrm{Bias}}
\newcommand{\Tr}{\text{Tr}}
\newcommand{\ebias}{\mathcal{E}_\text{bias}}
\newcommand{\evar}{\mathcal{E}_\text{variance}}
\newcommand{\enoise}{\mathcal{E}_\text{noise}}
\DeclareMathOperator{\rank}{rank}
\DeclareMathOperator{\rowspace}{rowspace}
\DeclareMathOperator{\nullspace}{nullspace}
\DeclareMathOperator*{\argmin}{arg\,min}
\newtheorem{assumption}{Assumption}
\newtheorem{definition}{Definition}
\newcommand{\beq}{\begin{equation}}
\newcommand{\eeq}{\end{equation}}
\newcommand{\be}{\begin{equation}}
\newcommand{\ee}{\end{equation}}
\newcommand{\beqa}{\begin{eqnarray}}
\newcommand{\eeqa}{\end{eqnarray}}
\newcommand{\bean}{\begin{eqnarray*}}
\newcommand{\eean}{\end{eqnarray*}}
\newtheorem{cor}{\corollaryname}[section]
\newtheorem{lemma}[cor]{\lemmaname}
\newtheorem{prop}[cor]{Proposition}
\newtheorem{theorem}[cor]{\theoremname}
\theoremstyle{definition}
\numberwithin{equation}{section}
\numberwithin{table}{chapter}
\numberwithin{figure}{chapter}
\begin{document}

\version{1}



\title{On the Bias-Variance Tradeoff:\\Textbooks Need an Update}

\titletwo{On the Bias-Variance Tradeoff: Textbooks Need an Update}

\author{Brady Neal}

\copyrightyear{2019}

\department{Department of Computer Science and Operations Research}
\faculty{Faculty of Arts and Sciences}
\degree{Master of Science (M.Sc.)}
\sujet{Computer Science}
\facultyto{Faculty of Graduate and Postdoctoral Studies}

\date{10 December 2019} 


\president{Aaron Courville}

\directeur{Ioannis Mitliagkas}


\membrejury{Gilles Brassard}






\pagenumbering{roman}

\maketitle

\maketitle


\anglais
\chapter*{Abstract}

The main goal of this thesis is to point out that the bias-variance tradeoff is not always true (e.g.\ in neural networks). We advocate for this lack of universality to be acknowledged in textbooks and taught in introductory courses that cover the tradeoff.

We first review the history of the bias-variance tradeoff, its prevalence in textbooks, and some of the main claims made about the bias-variance tradeoff.
Through extensive experiments and analysis, we show a lack of a bias-variance tradeoff in neural networks when increasing network width.
Our findings seem to contradict the claims of the landmark work by \citet{geman}.
Motivated by this contradiction, we revisit the experimental measurements in \citet{geman}.
We discuss that there was never strong evidence for a tradeoff in neural networks when varying the number of parameters.
We observe a similar phenomenon beyond supervised learning, with a set of deep reinforcement learning experiments.

We argue that textbook and lecture revisions are in order to convey this nuanced modern understanding of the bias-variance tradeoff.

\noindent \textbf{Keywords:} bias-variance tradeoff, neural networks, over-parameterization, generalization


\francais

\chapter*{Résumé}

L'objectif principal de cette thèse est de souligner que le compromis biais-variance n'est pas toujours vrai (p.~ex.~dans les réseaux neuronaux). Nous plaidons pour que ce manque d'universalité soit reconnu dans les manuels scolaires et enseigné dans les cours d'introduction qui couvrent le compromis.

Nous passons d'abord en revue l'historique du compromis entre les biais et les variances, sa prévalence dans les manuels scolaires et certaines des principales affirmations faites au sujet du compromis entre les biais et les variances.
Au moyen d'expériences et d'analyses approfondies, nous montrons qu'il n'y a pas de compromis entre la variance et le biais dans les réseaux de neurones lorsque la largeur du réseau augmente.
Nos conclusions semblent contredire les affirmations de l'œuvre historique de \citet{geman}.
Motivés par cette contradiction, nous revisitons les mesures expérimentales dans \citet{geman}.
Nous discutons du fait qu'il n'y a jamais eu de preuves solides d'un compromis dans les réseaux neuronaux lorsque le nombre de paramètres variait.
Nous observons un phénomène similaire au-delà de l'apprentissage supervisé, avec un ensemble d'expériences d'apprentissage de renforcement profond.

Nous soutenons que les révisions des manuels et des cours magistraux ont pour but de transmettre cette compréhension moderne nuancée de l'arbitrage entre les biais et les variances.

\noindent \textbf{Mots clés:} compromis biais-variance, réseaux de neurones, sur-paramétrage, généralisation


\anglais
\cleardoublepage
\pdfbookmark[chapter]{\contentsname}{toc}  
\tableofcontents
\cleardoublepage
\phantomsection  
\cleardoublepage
\phantomsection
\listoffigures


\chapter*{List of Acronyms and Abbreviations}
\begin{twocolumnlist}{.2\textwidth}{.7\textwidth}
    CIFAR10 & dataset from the Canadian Institute For Advanced Research \\
    KNN & K-Nearest Neighbors \\
    LBFGS & Limited-memory Broyden–Fletcher–Goldfarb–Shanno algorithm \\
    MNIST & Modified National Institute of Standards and Technology (dataset) \\
    SGD & Stochastic Gradient Descent \\
    SVHN & Street View House Numbers (dataset) \\
    VC dimension & Vapnik–Chervonenkis dimension \\
\end{twocolumnlist}


\hypersetup{colorlinks=true,allcolors=mydarkblue}

\chapter*{Acknowledgements}

I would like to thank my advisor, Ioannis Mitliagkas, for taking me on as his student, despite the apparent risk that came with that. I greatly appreciate how supportive he has been of me. He has been a fantastic advisor. I would like to thank Yoshua Bengio and Ioannis Mitliagkas for supporting my admission to the department. Without them, I would guess the university would not have accepted me until I completed the final year of my Bachelors degree. 

There are many other people at Mila who have been fantastic to interact with. I would like to thank all of the students I worked with, discussed with, and hung out with. I would like to thank Céline Bégin for greatly helping me navigate all the process-related items at a francophone university.

I would like to thank my girlfriend, Isabelle, who has had an immensely positive impact on me throughout my degree.

 %
 %

\NoChapterPageNumber
\cleardoublepage
\pagenumbering{arabic}


\chapter{Introduction}

\section{Motivation}

An important dogma in machine learning has been that ``the price to pay for achieving low bias is high variance'' \citep{geman}.
This is overwhelmingly the intuition among machine learning practitioners, despite some notable exceptions such as boosting \citep{Schapire1999,buhlmann2003boosting}.
The quantities of interest here are the bias and variance of a learned model's {\em prediction} on an unseen input, where the randomness comes from the sampling of the training data (see \cref{sec:bias-variance} for more detail). The basic idea is that too simple a model will underfit (high bias) while too complex a model will overfit (high variance) and that bias and variance trade off as model complexity is varied. This is commonly known as the \emph{bias-variance tradeoff} (\cref{fig:intro-bv-tradeoff-fortmann} and \cref{sec:bias-variance}).

A key consequence of the bias-variance tradeoff is that it implies that test error will be a U-shaped curve in model complexity (\cref{fig:intro-bv-tradeoff-fortmann}). Statistical learning theory \citep{vapnik1998statistical} also predicts a U-shaped test error curve for a number of classic machine learning models by identifying a notion of model capacity, understood as the main parameter controlling this tradeoff. However, there is a growing amount of empirical evidence that \textit{wider} networks generalize \textit{better} than their smaller counterparts \citep{DBLP:journals/corr/NeyshaburTS14,wide_resnet,novak2018sensitivity, lee2018deep,belkin2018, jamming,fisher-rao_metric,DBLP:journals/corr/CanzianiPC16}. In those cases no U-shaped test error curve is observed. In \cref{fig:neyshabur}, we depict \citet{DBLP:journals/corr/NeyshaburTS14}'s example of this phenomenon.

The lack of a U-shaped test error curve in these prominent cases suggests that there may be something wrong with the bias-variance tradeoff. In this work, we seek to understand if there really is a bias-variance tradeoff in neural networks when varying the network width by explicitly measuring bias and variance. In their landmark work that highlighted the bias-variance tradeoff in neural networks, \citet{geman} claim that bias decreases and variance increases with network size. This is one of the main claims we refute.


\begin{figure}[t]
    \centering
    \subfigure[The bias-variance tradeoff predicts a U-shaped test error curve \citep{fortmann-roe_2012}.]{
        \includegraphics[width=.5\textwidth]{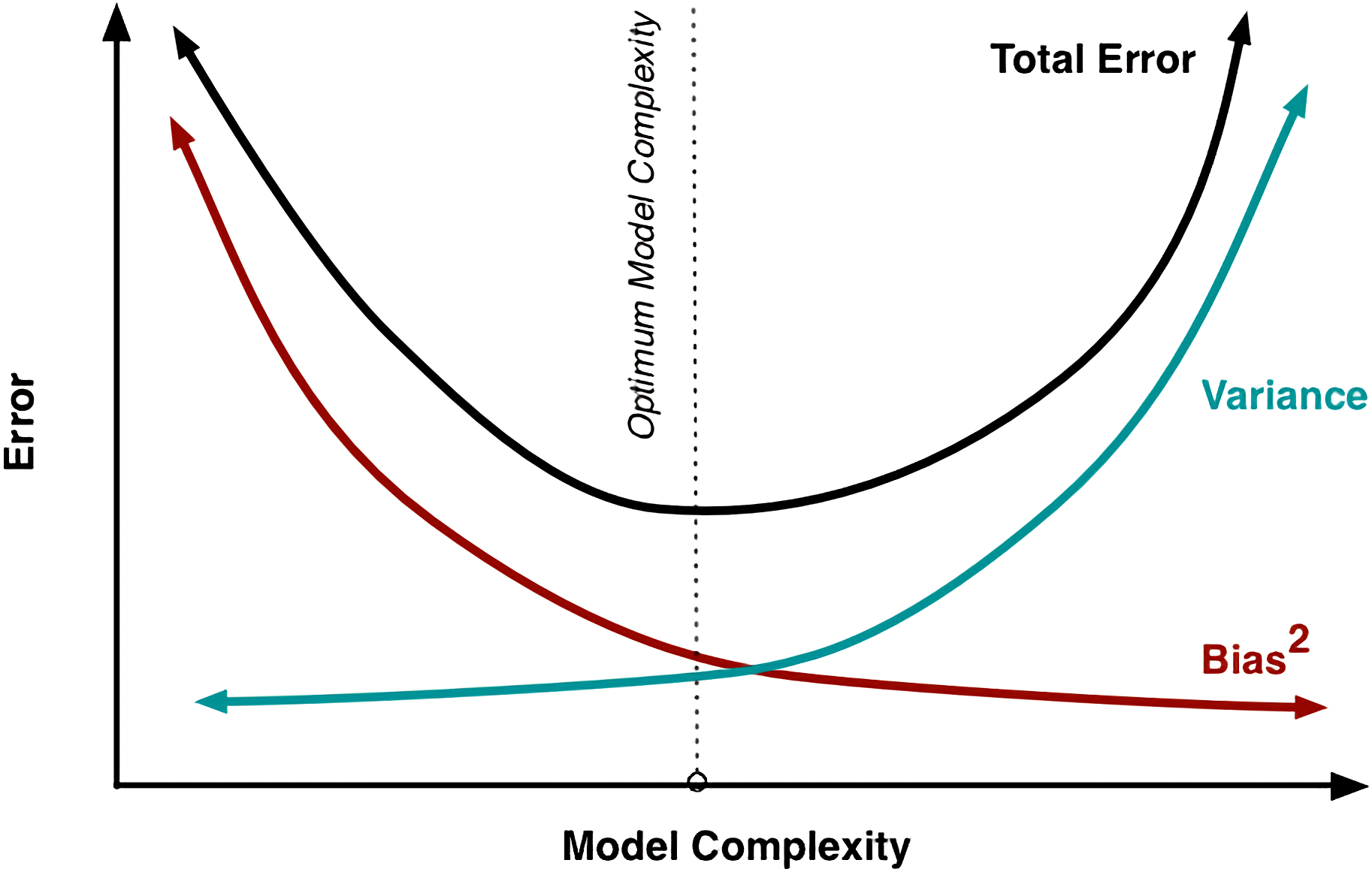}
        \label{fig:intro-bv-tradeoff-fortmann}
    }
    \hfill
    \subfigure[\citet{DBLP:journals/corr/NeyshaburTS14} found that test error actually decreases with neural network width.]{
        \includegraphics[width=.45\textwidth]{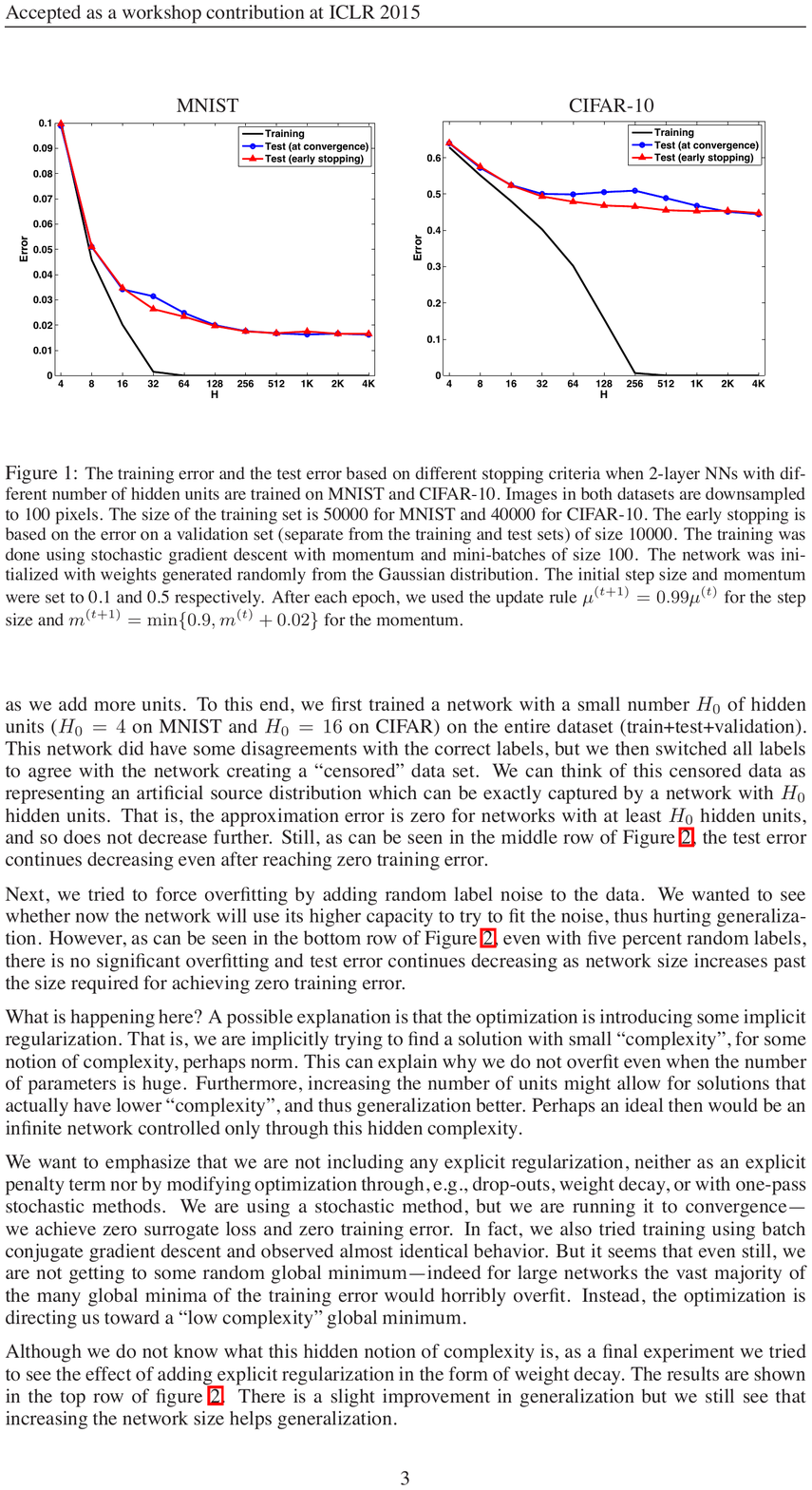}
        \label{fig:neyshabur}
    }
    \caption{Mismatch between test error predicted by bias-variance tradeoff and reality}
    \label{fig:geman2}
\end{figure}

\section{Objective of this Thesis}

The main objective of this thesis is to show that bias-variance tradeoff thinking can be wrong; researchers and practitioners who assume it to always be true may make incorrect predictions related to model selection. Therefore, we recommend that textbooks and machine learning courses are updated to not present the bias-variance tradeoff as universally true (though, it is accurate for some models, which we review in \cref{sec:bv-experimental-evidence}). Similarly, we recommend researchers and practitioners update to not universally assume the bias-variance tradeoff (see \cref{sec:need-to-qualify-claims}). 

Throughout this thesis, we will reference many textbooks, often using their figures and quoting them. This is to help illustrate what is taught in introductory machine learning courses and to ensure that we are not arguing against strawmen.

\section{Novel Contributions}

\begin{enumerate}

    \item We revisit the bias-variance analysis in the modern setting for neural networks and point out that it is not necessarily a tradeoff as \emph{both} bias \emph{and} variance decrease with network width, yielding better generalization (\cref{sec:width}).

    \item
    We perform a more fine-grain study of variance in neural networks by decomposing it into variance due to initialization and variance due to sampling. Variance due to initialization is significant in the under-parameterized regime and monotonically decreases with width in the over-parameterized regime. There, total variance is much lower and dominated by variance due to sampling (\cref{sec:width_var_decoupling}).

    \item We remark that this variance phenomenon is already present in over-parameterized linear models. In a simplified setting, inspired by linear models, we provide theoretical analysis in support of our empirical findings (\cref{sec:theory}).
    
    
\end{enumerate}

\section{Related Work}

\citet{DBLP:journals/corr/NeyshaburTS14} point out that because increasing network width does not lead to a U-shaped test error curve, there must be some form of implicit regularization controlling capacity. Then, the line of questioning becomes ``if not number of parameters, what is the correct measure of model complexity that, when varied, will yield a tradeoff in bias and variance?'' \citet{neyshaburthesis,neyshabur2018the} pursue this direction by studying how test error correlates with different measures of model complexity and by developing models in terms of those new complexity measures.

Our work is consistent with \citet{DBLP:journals/corr/NeyshaburTS14}'s finding, but rather than search for a more appealing measure of model complexity, we study whether it is \emph{necessary} to trade bias for variance. By varying network width (the measure of model complexity that \citet{geman} claimed shows a bias-variance tradeoff), we establish that it is \emph{not necessary} to trade bias for variance when increasing model complexity. To ensure that we are studying networks of increasing capacity, one of the experimental controls we use throughout \cref{article:bias-variance} is to verify that bias is decreasing.




In concurrent work, \citet{jamming, belkin2018} point out that generalization error acts according to conventional wisdom in the under-parameterized setting, that it decreases with capacity in the over-parameterized setting, and that there is a sharp transition between the two settings. While this transition can roughly be seen as the early hump in variance we observe in some of our graphs, we focus on the over-parameterized setting.
\citet{scaling,neyshabur2018the,fisher-rao_metric} work toward understanding why increasing over-parameterization does not lead to a U-shaped test error curve.
Our work is unique in that we explicitly analyze and experimentally measure the quantities of bias and variance. Interestingly, \citet{belkin2018}'s empirical study of test error provides some evidence that our bias-variance finding might not be unique to neural networks and might be found in other models such as decision trees.

\section{Organization}

In \cref{sec:background}, we cover relevant background: the setting in machine learning, the concept of generalization in machine learning, and the concept of model complexity. In \cref{sec:bias-variance}, we cover the bias-variance tradeoff in detail, including topics such as why the bias-variance tradeoff is convincing and its relation to the concepts of generalization and model complexity. Then, we argue the bias-variance tradeoff is applied too broadly in \cref{sec:lack-of-tradeoff} and give specific recommendations for changes in \cref{sec:need-to-qualify-claims}. In \cref{article:bias-variance}, we provide evidence for this in neural networks.


\chapter{Machine Learning Background}
\label{sec:background}

\section{Setting and Notation}

We consider the typical supervised learning task of predicting  an output $y \in \mathcal{Y}$ from an input $x \in \mathcal{X}$, where the pairs $(x, y)$ are drawn from some unknown joint distribution, $\mathcal{D}$. The learning problem consists of learning a function $h_S:  \X \to \Y$ from a finite training dataset $S$ of $m$ i.i.d.\  samples from $\D$. This learned function is also known as a hypothesis $h \in \HH$, which is chosen from a \emph{hypothesis class} $\HH$ of possible functions allowed by the model. Then, the learned function $h_S$ and the learning algorithm $\A : (\X \times \Y)^m \to \HH$ can be formalized as $h_S \leftarrow \A(S)$. Ideally, we would learn $h_S = f$, where $f$ denotes the ``true mapping'' from $\X$ to $\Y$. For some loss function $\ell : \mathcal{Y} \times \mathcal{Y} \to \R$, the quality of a predictor $h$ can quantified by the \emph{risk} (or \emph{expected error}):
\begin{equation*}
    \risk(h) = \E_{(x, y) \sim \mathcal{D}} \, \ell(h(x), y) \, .
\end{equation*}

The goal in supervised learning is to find $\min_{h \in \HH} \risk(h)$. However, we cannot compute $\risk(h)$ because we do not know $\D$. We only have access to the \emph{training error} (a type of \emph{empirical risk}):
\begin{equation*}
    \emprisk(h) = \E_{(x, y) \sim \mathcal{S}} \, \ell(h(x), y) \, .
\end{equation*}
This naturally leads to the concept of \emph{empirical risk minimization}: we learn $h_S$ by attempting to minimize $\emprisk(h_S)$ as a surrogate for $\risk(h_S)$.

\section{Generalization}

We would like that the learned function $h_S$ generalizes well from the training set $S \sim \D^m$ to other unseen data points drawn from $\D$. The name ``generalization'' comes from psychology; for example, if a dog is taught to sit with the verbal cue ``sit'' by its owner, and then told ``sit'' by another person, if the dog sits, it would be generalizing. If the dog were to only sit when it hears the exact same sound (made by its owner) it was trained on, it would be ``overfitting'' and failing to generalize. Overfitting is something to take very seriously in machine learning.

What can go wrong when minimizing the training error $\emprisk$ as a surrogate for minimizing the true risk $\risk$? If the hypothesis class $\HH$ allows for it, $h$ can fit the data sample too closely, leading to a higher true risk than some $h'$ that has a higher training error than $h$. More precisely, $h$ can be worse than $h'$ even when $\emprisk(h) < \emprisk(h')$ because $\risk(h) > \risk(h')$. This can be easily visualized by an example.

In \cref{fig:noisy_sinusoids}, we see data coming from a noisy sinusoid task \citep{wtf_is_bv}. On the left, a linear model is fit to the data. This leads to both high training error and high true risk. In other words, the model is not complex enough. On the right, a much more complex model is fit to the data. This leads to zero training error, as the learned function fits every training point. However, it will also lead to high true risk as it will not generalize well to unseen data. This is because it is fitting the data too closely, fitting the noise in the data, and, hence, overfitting. The linear model was too simple at the highly complex model was too complex. In the middle, we see a model of about the right complexity that learns a function that will generalize the best of the three.

One notion of generalization that is often seen in statistical learning theory is the \emph{generalization gap}. This is simply the difference between the true risk and the training error:
\begin{equation}
    \gap(h) = \risk(h) - \emprisk(h)
\end{equation}

\begin{figure}[t]
    \centering
    \begin{subfigure}
        \centering
        \includegraphics[width=.32\textwidth]{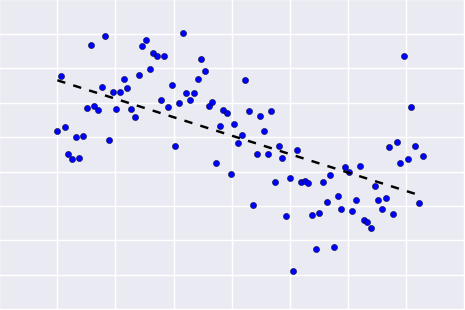}
    \end{subfigure}
    \hfill
    \begin{subfigure}
        \centering
         \includegraphics[width=.32\textwidth]{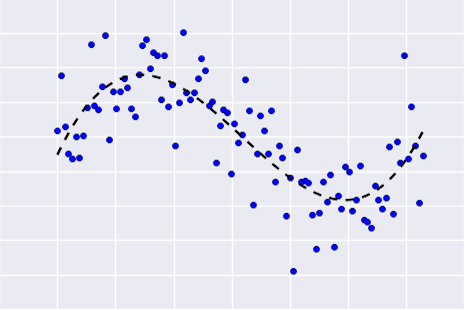}
    \end{subfigure}
    \hfill
    \begin{subfigure}
        \centering
         \includegraphics[width=.32\textwidth]{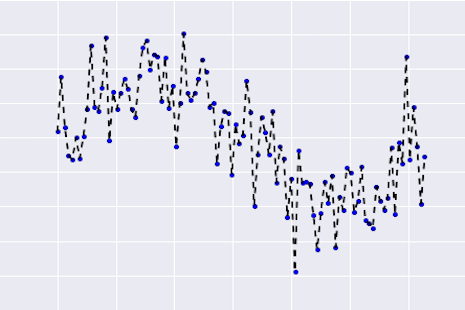}
    \end{subfigure}
    \caption[Increasingly complex models fit to sinusoidal data]{Increasingly complex models fit to sinusoidal data \citep{wtf_is_bv}}
    \label{fig:noisy_sinusoids}
\end{figure}

\section{Model Complexity}
\label{sec:model-complexity}

In \cref{fig:noisy_sinusoids}, the key concept that varies (increases from left to right) is \emph{model complexity} (i.e., the complexity of the hypothesis class $\HH$). Models that are not sufficiently complex will underfit, while models that are too complex will overfit (see, e.g., \cref{fig:noisy_sinusoids}). In terms of hypothesis classes, the larger $|\HH|$ is, the more functions exist that will fit the training data, but they might not perform well on unseen data. It is intuitive that the larger $|\HH|$ is, the more our model will overfit (see \cref{fig:bv-hypothesis-class} in \cref{sec:bv-intuition} for an illustration of this in the bias-variance framework). And indeed, there is theory that supports this intuition \citep[Theorem 2.2]{Mohri:2012}. For any $\delta > 0$, with probability at least $1 - \delta$,
\begin{equation}
    \forall h \in \HH, \qquad \risk(h) \leq \emprisk(h) + \sqrt{\frac{\log | \HH | + \log \frac{2}{\delta}}{2m}} \, .
\end{equation}

The quantity $|\HH|$ is one notion of model complexity. However, for many models (e.g. neural networks), $|\HH|$ is infinite. Therefore, a better notion of model complexity is needed. The VC dimension of $\HH$, $\VC(\HH)$, is a better notion of model complexity, which leads to a finite bound with infinite models classes \citep[Chapter 3.3]{Mohri:2012}. For any $\delta > 0$, with probability at least $1 - \delta$,
\begin{equation}
    \forall h \in \HH, \qquad \risk(h) \leq \emprisk(h) + \sqrt{\frac{8\VC(\HH) \log \frac{2em}{\VC(\HH)} + 8 \log \frac{4}{\delta}}{m}} \, .
\end{equation}

This bounds grows with $\VC(\HH)$. For many models the VC dimension ends up being roughly proportional to the number of parameters in the model. For example, the VC dimension of various kinds of neural networks grows with the number of parameters \citep{Baum, Karpinski, NIPS1998_1515, pmlr-v65-harvey17a}.

Rademacher complexity is another measure of model complexity. Intuitively, it measures the capacity of a model to fit random noise. Generalization bounds in terms of Rademacher complexity are also prevalent \citep[Theorem 3.2]{Mohri:2012}:
\begin{equation}
    \forall h \in \HH, \qquad \risk(h) \leq \emprisk(h) + \Rad(\HH) + \sqrt{\frac{\log \frac{1}{\delta}}{2m}}
\end{equation}
where $\Rad(\HH)$ denotes the Rademacher complexity of $\HH$. Known bounds on Rademacher complexity also grow with the number of parameters \citep{Bartlett:2003:RGC:944919.944944}.

These generalization bounds in terms of model complexity are important because of how they are interpreted. The general idea is that the model must be complex enough to achieve a low $\emprisk(h)$, but not too complex that the complexity measures such as $\VC(\HH)$ and $\Rad(\HH)$ will blow up, leading to high bounds on $\risk(h)$. For example, when interpreting the VC-based generalization bound, \citet[Chapter 2.2]{Abu-Mostafa:2012:LD:2207825} wrote,
``Although the bound is loose, it tends to be equally loose for different learning models, and hence is useful for comparing the generalization performance of these models. [...] In real applications, learning models with lower $\VC(\HH)$ tend to generalize better than those with higher $\VC(\HH)$. Because of this observation, the VC analysis proves
useful in practice [...] the VC bound can be used as a guideline for generalization, relatively if not absolutely.''

\chapter{The Bias-Variance Tradeoff}
\label{sec:bias-variance}

\section{What are Bias and Variance?}

The bias is a measure of how close the central tendency of a learner is to the true function $f$. If, on average (over training sets $S$), the learner learns the true function $f$, then the learner is unbiased. For some $x \sim \D$, the bias is 
\begin{equation*}
    \Bias(h_S) = \E_S[h_S(x)] - f(x) \, .
\end{equation*}

The variance is a measure of fluctuations of a learner around its central tendency, where the fluctuations result from different samplings of the training set. By definition, a learner that generalizes well does not learn dramatically different functions, depending on sampling of the training set. Let $\Y = \reals$ for simplicity; then for some $x \sim \D$, the variance is
\begin{equation*}
    \Var(h_S) = \E_S \left[ \left( h_S(x) - \E_S[h_S(x)] \right)^2 \right] \, .
\end{equation*}

\section{Intuition for the Tradeoff}
\label{sec:bv-intuition}

Similar to the idea that larger hypothesis classes lead to overfitting in \cref{sec:model-complexity}, in the bias-variance context, there is the idea that larger hypothesis classes lead to higher variance. This is illustrated in \cref{fig:bv-hypothesis-class}, which comes from \citet{Abu-Mostafa:2012:LD:2207825}. Also, illustrated is the idea that bias decreases when increasing the size of the hypothesis class because their will be more hypotheses that are closer to the true function $f$. In \cref{fig:bv-hypothesis-class}, a hypothesis class that contains only a single hypothesis is depicted on the left; this will, of course, lead to bias as that hypothesis does not match $f$, but it will also lead to zero variance, which is a positive. In contrast, on the right, there is a larger hypothesis class (\cref{fig:bv-hypothesis-class}); in this example, this leads to nearly zero bias, but it comes at the expense of incurring variance. The bottom of \cref{fig:bv-hypothesis-class} is shorthand that summarizes the idea that when you increase the size of the hypothesis class, you decreases bias and increase variance. \cref{fig:noisy_sinusoids} is another example of this: the leftmost learner has high bias and low variance, the rightmost learner has low bias and high variance, and the learner in the middle has something close to the optimal balance of bias and variance.

\begin{figure}[t]
    \centering
    \includegraphics[width=\textwidth]{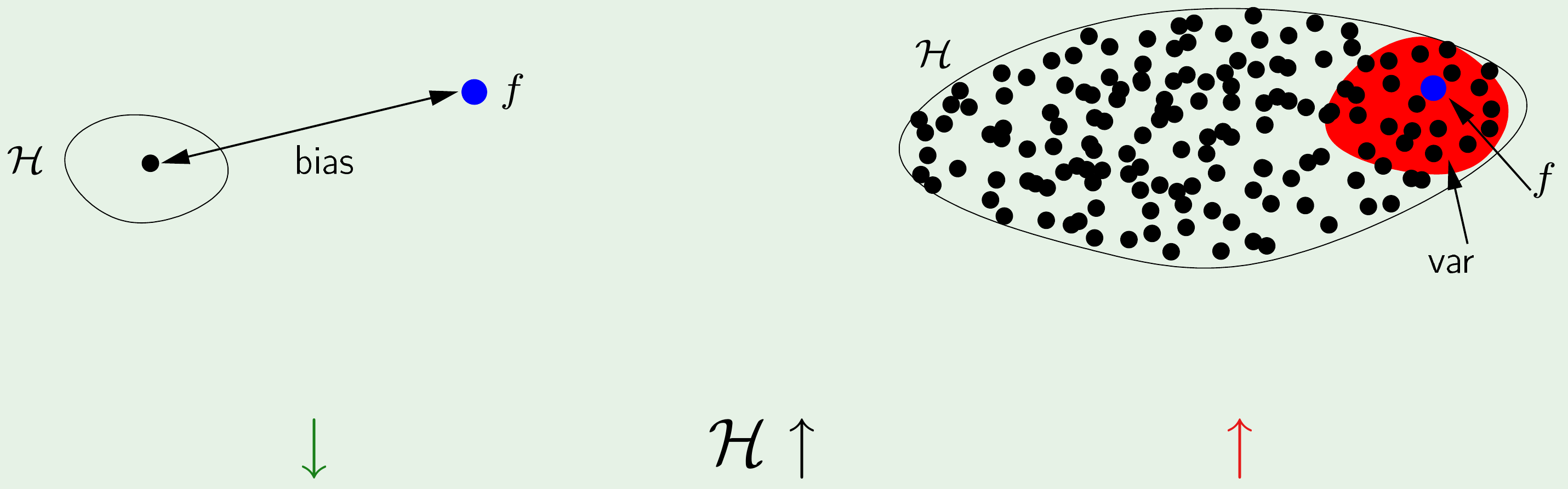}
    \caption[Bias-variance in simple vs.\ complex hypothesis class]{Bias-variance in simple vs.\ complex hypothesis class \citep{Abu-Mostafa:2012:LD:2207825}}
    \label{fig:bv-hypothesis-class}
\end{figure}

In their landmark paper, \citet{geman} capture the essence of the bias-variance tradeoff with the following claim: ``the price to pay for achieving low bias is high variance.'' In \cref{fig:bv-tradeoff-fortmann}, we see the common illustration of the bias-variance tradeoff (\citet{fortmann-roe_2012}).
Note the important U shape of the test error curve with increasing model complexity. The idea is that the optimal point on that U can be achieved by achieving the optimal balance of bias and variance. This tradeoff hypothesis is ubiquitious, as we will see in \cref{sec:textbooks}.

\begin{figure}[t]
    \centering
    \includegraphics[width=.7\textwidth]{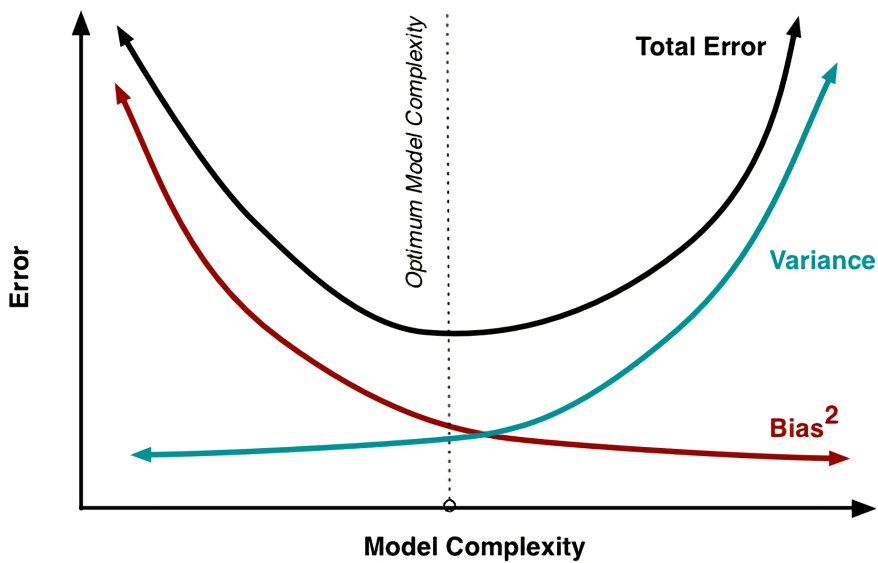}
    \caption[Illustration of the bias-variance tradeoff]{Illustration of the bias-variance tradeoff \citep{fortmann-roe_2012}}
    \label{fig:bv-tradeoff-fortmann}
\end{figure}

\section{The Bias-Variance Decomposition}
\label{sec:bias-variance-decomposition}

\citet{geman} considered the average case (over training sets) quantity $\E_S \risk(h_S)$ with squared-loss and showed that it can be cleanly decomposed into bias and variance components:
\begin{equation}
    \E_S \risk(h_S) = \ebias(h_S) + \evar(h_S) + \enoise
\end{equation}

Although a decomposition does not prove that the bias-variance tradeoff is true, it does show that the average error is made up of a sum of bias and variance components. Then, if the average error is held constant and bias is varied, variance must also vary (and vice versa). This greatly added to the strong intuition of the bias-variance tradeoff, and \citet{geman} became quite highly cited for their contribution.

Note that risks computed with classification losses (e.g cross-entropy or 0-1 loss)  do not have such a clean, additive bias-variance decomposition \citep{Domingos00aunified, James03varianceand}. However, because the concept of a tradeoff is not reliant on an additive decomposition (see \citet[Chapter 3.3]{hastie1990generalized} for presence of the bias-variance tradeoff before the bias-variance decomposition), the concept of the bias-variance tradeoff is applied extremely broadly (see \cref{sec:textbooks}), including settings where an additive decomposition does not seem possible.

\section{Why do we believe the Bias-Variance Tradeoff?}
\label{sec:why-we-believe-tradeoff}

A universal bias-variance tradeoff, without qualifications, is a mere hypothesis.
Potentially, its most significant appeal comes from its intuitiveness.
In this section, we review the history of the bias-variance tradeoff, the evidence in support of it, and its prevalence in textbooks (which also seem to contain much of the authoritative evidence).

\subsection{A History}

The concept that we know as the ``bias-variance tradeoff'' in machine learning has a long history, with its basis in statistics.
\textit{Neural Networks and the Bias/Variance Dilemma} \citep{geman} is the most cited work largely because it introduced the bias-variance decomposition to the machine learning community, provided convincing experiments with nonparametric methods, and popularized the bias-variance tradeoff in the neural network and machine learning  community. However, the bias-variance tradeoff was already present in a textbook in 1990 \citep{hastie1990generalized}, and it dates back at least as far back as 1952 in statistics when \citet{grenander1952} referred to the concept as an ``uncertainty principle.''

\subsubsection{Experimental Evidence}
\label{sec:bv-experimental-evidence}

\citet{geman} ran experiments using two nonparametric methods (KNN and kernel regression) and neural networks on a partially corrupted version of the handwritten digits \citet{guyon1988} collected (\cref{fig:geman}). The experiments on k-nearest neighbor (KNN) (\cref{fig:geman-knn}) and kernel regression (\cref{fig:geman-kernel-reg}) yield clear bias-variance tradeoff curves with U-shaped test error curves in their respective complexity parameters $k$ and $\sigma$. The experiment with neural networks (\cref{fig:geman-neural-net}) is substantially less conclusive. \citet{geman} maintain their claim that there is a bias-variance tradeoff in neural networks and explain their inconclusive experiments as a result of convergence issues:

\begin{quote}
The basic trend is what we expect: bias falls and variance increases with the number of hidden units. The effects are not perfectly demonstrated (notice, for example, the dip in variance in the experiments with the largest numbers of hidden units), presumably because the phenomenon of overfitting is complicated by convergence issues and perhaps also by our decision to stop the training prematurely.
\end{quote}

This is the first glimpse we see of the cracks in the bias-variance tradeoff hypothesis.

\begin{figure}[t]
    \centering
    \subfigure[K-nearest neighbor (KNN) (higher $k$ is \emph{less} complexity)]{
        \includegraphics[width=.3115\textwidth]{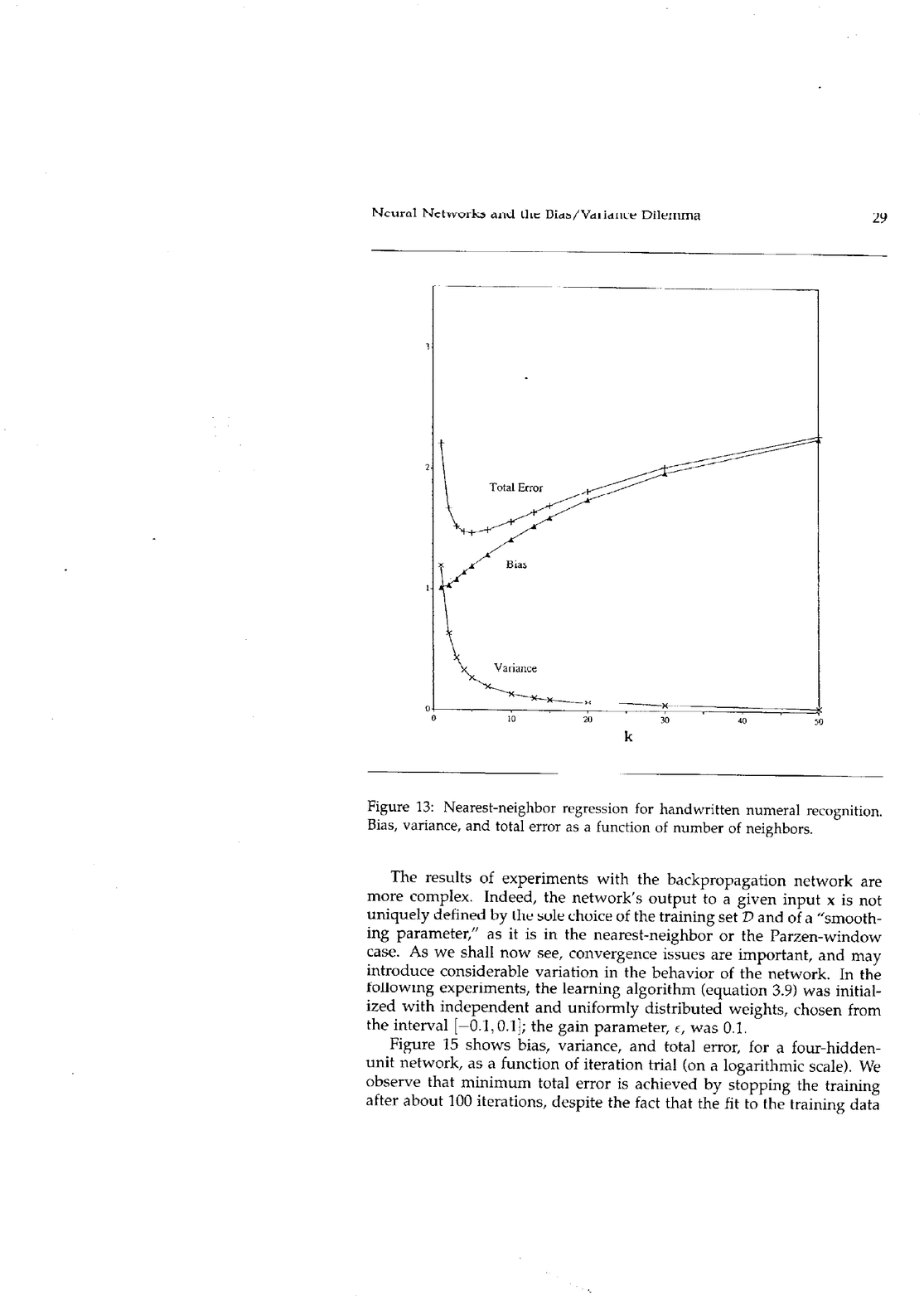}
        \label{fig:geman-knn}
    }
    \hfill
    \subfigure[Kernel regression (higher kernel width $\sigma$ is \emph{less} complexity)]{
        \includegraphics[width=.3115\textwidth]{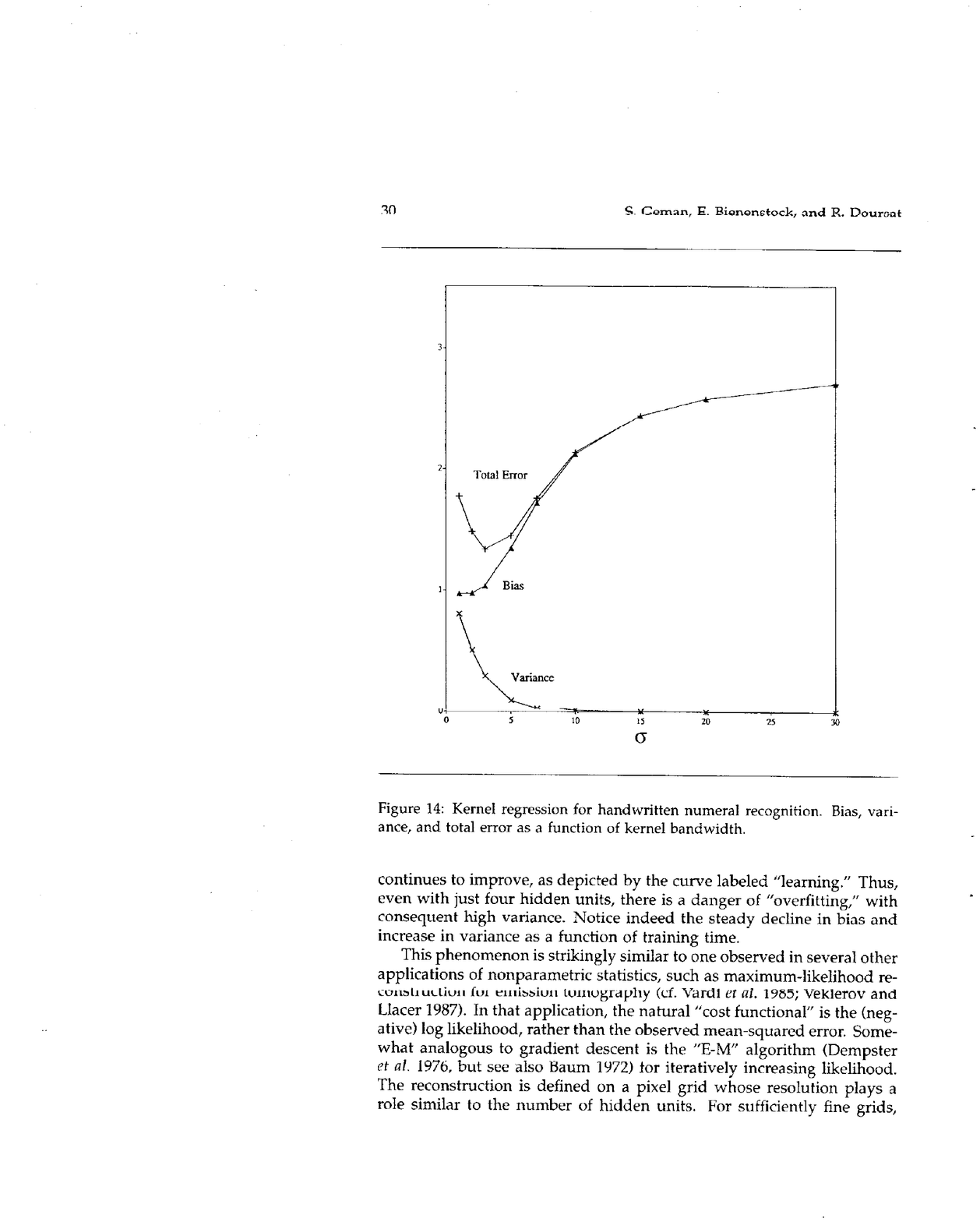}
        \label{fig:geman-kernel-reg}
    }
    \hfill
    \subfigure[Single hidden layer neural network (higher ``\# Hidden Units'' is \emph{more} complexity)]{
        \includegraphics[width=.3115\textwidth]{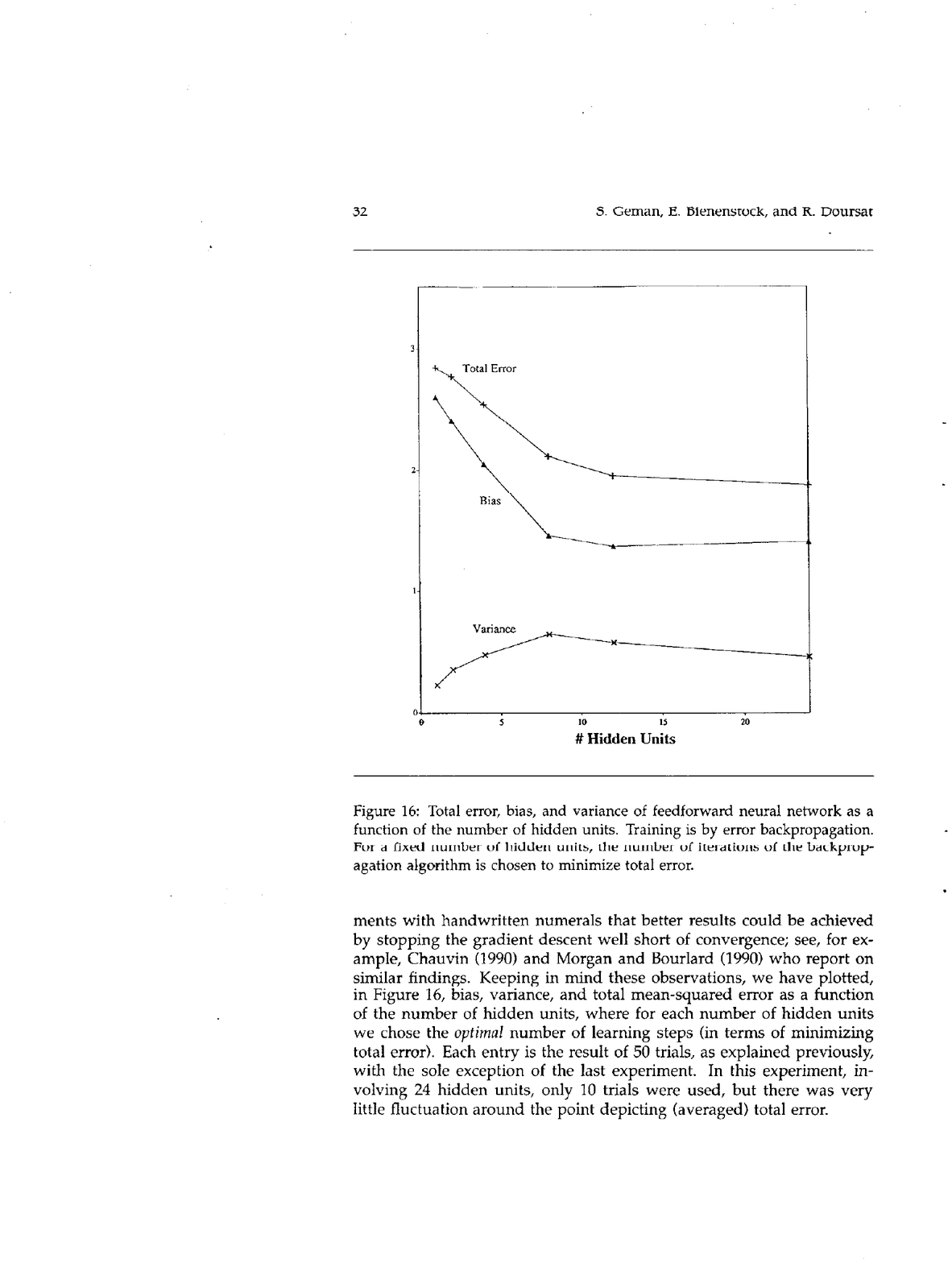}
        \label{fig:geman-neural-net}
    }
    \caption{\citet{geman}'s bias-variance experiments on handwritten digits}
    \label{fig:geman}
\end{figure}
There is a fair amount of empirical evidence for the bias-variance tradeoff in different complexity parameters in a variety of methods. \citet{Wahba1975} show a tradeoff in complexity with cubic splines when varying their smoothing parameter. \citet[Section 5.4.3]{geurts2002} show a bias-variance tradeoff in decision trees when varying tree size. \citet[Chapter 3.3]{hastie1990generalized} show a bias-variance tradeoff with a ``running-mean smoother'' (i.e. KNN, but in statistics) when varying $k$. \citet[Chapter 3.2]{Bishop:2006} show a bias-variance tradeoff with Gaussian basis function linear regression when varying the L2 regularization (weight decay) parameter $\lambda$. \citet[Chapters 5.2 and 5.3]{Goodfellow-et-al-2016} show a tradeoff in complexity with fitting polynomials when varying either the degree or the L2 regularization parameter.

\subsubsection{Supporting Theory}
\label{sec:bv-supporting-theory}

The first main theory that supports the bias-variance tradeoff is the sample of generalization upper bounds presented in \cref{sec:model-complexity} that grow with the number of parameters. This is simply because the bias-variance tradeoff is a clear conceptualization of one of the common interpretations of those bounds: models that are too simple will not perform well due to underfitting (high bias or high training error $\emprisk$) while models that are too complex will not perform well due overfitting (high variance caused by high model complexity such as $\VC(\HH)$). However, it should be noted that these upper bounds do not guarantee that a model with high $\VC(\HH)$ will actually have high variance (or high test error); a lower bound for cases seen in practice (not worst case) would be needed for that.

\citet[Chapter 7.3]{hastie01statisticallearning} show that you can derive closed-form expressions for the variance for simple models such as KNN and linear regression in the ``fixed-design'' setting where the design matrix $X$ is fixed. In this setting, $Y = f(X) + \epsilon$, where $f$ is the true function mapping examples $x$ (rows in $X$) to $y$ (elements in the vector $Y$). Then, the randomness in $Y$ is determined by the zero-mean random variable $\epsilon$. In this setting, \citet[Chapter 7.3]{hastie01statisticallearning} show that variance for KNN models scales as $\frac{1}{K}$. Similarly, they show that the variance for linear regression grows linearly with the number of parameters, assuming $X^T X$ is invertible. Note that in the over-parameterized setting ($X^T X$ is not invertible), we show that the variance of linear regression does not grow with the number of parameters (see \cref{sec:linear_models} in \cref{article:bias-variance}).

Because neural networks are much more complicated models than KNN and linear regression, we must resort to bounds on the variance of a neural network. \citet{Barron1994} derives an upper bound on the estimation error of a single hidden layer neural network that grows linearly with the number of hidden units. The estimation error is not the same thing as variance, but it is analogous (see \cref{sec:approx-est-tradeoff}). This kind of bound is similar to the bounds described in \cref{sec:model-complexity}. Again, it should be noted that because this is an upper bound, it does not actually imply that large neural networks will have high estimation error.


\subsection{The Textbooks}
\label{sec:textbooks}

The concept of the bias-variance tradeoff is ubiquitious, appearing in many of the textbooks that are used in machine learning education:
\citet[Chapters 2.9 and 7.3]{hastie01statisticallearning},
\citet[Chapter 3.2]{Bishop:2006}, \citet[Chapter 5.4.4]{Goodfellow-et-al-2016}),
\citet[Chapter 2.3]{Abu-Mostafa:2012:LD:2207825},
\citet[Chapter 2.2.2]{James:2014:ISL:2517747}, \citet[Chapter 3.3]{hastie1990generalized}, \citet[Chapter 9.3]{DudaHart2001}. Here are two excerpts:
\begin{itemize}
    \item ``As a general rule, as we use more flexible methods, the variance will
    increase and the bias will decrease. The relative rate of change of these
    two quantities determines whether the test MSE increases or decreases. As
    we increase the flexibility of a class of methods, the bias tends to initially
    decrease faster than the variance increases. Consequently, the expected
    test MSE declines. However, at some point increasing flexibility has little
    impact on the bias but starts to significantly increase the variance. When
    this happens the test MSE increases'' \citep[Chapter 2.2.2]{James:2014:ISL:2517747}.
    
    \item ``As the model complexity of our procedure is increased, the variance tends to increase and the squared bias tends to decrease'' \citep[Chapters 2.9]{hastie01statisticallearning}.
\end{itemize}

\section{Comparison to the Approximation-Estimation Tradeoff}
\label{sec:approx-est-tradeoff}

The bias-variance tradeoff is not the only tradeoff in machine learning that is related to generalization. For example, when $h_S$ is chosen from a hypothesis class $\HH$, $\risk(h_S)$ can be decomposed into approximation error and estimation error:
\begin{equation*}
    R(h_S) = \errapp + \errest
\end{equation*}
where $\errapp = \min_{h \in \HH} R(h)$ and $\errest = R(h_S) - \errapp$. \citet[Section 5.2]{shalev-shwartz_ben-david_2014} present this decomposition and frame it as a tradeoff. \citet{bottou2008} describe this as the ``well known tradeoff between approximation error and estimation error'' and present it in a slightly more lucid way as a decomposition of the \textit{excess risk}:
\begin{equation*}
    \E[R(h_S) - R(h^*)] = \E[R(h_{\HH}^*) - R(h^*)] + \E[R(h_S) - R(h_{\HH}^*)]
\end{equation*}
where $\risk(h^*)$ is the Bayes error and $h_{\HH}^* = \argmin_{h \in \HH} \risk(h)$ is the best hypothesis in $\HH$. The approximation error can then be interpreted as the distance of the best hypothesis in $\HH$ from the Bayes classifier, and the estimation error can be interpreted as the average distance of the learned hypothesis from the best hypothesis in $\HH$. It is common to associate larger $\HH$ with smaller approximation error and larger estimation error, just like it is common to associate larger $\HH$ with smaller bias and larger variance. While bias (variance) and approximation error (estimation error) are qualitatively similar, they are not the exact same.

\subsection{Universal Approximation Theorem for Neural Networks}

The commonly cited universal approximation property of neural networks \citep{Cybenko1989, HORNIK1991251, Leshno93multilayerfeedforward} means that the approximation error goes to 0 as the network width increases; these results do not say anything about estimation error. In other words, the universal approximation error does not imply that wider networks are better. It implies that wider networks yield lower \emph{approximation error}; traditional thinking suggests that this also means that wider networks yield \emph{higher estimation error}.


\chapter{The Lack of a Tradeoff}
\label{sec:lack-of-tradeoff}

\section{A Refutation of Geman et al.'s Claims}
\label{sec:geman-refutation}

In their highly influential paper, \citet{geman} make several claims of varying specificity. We will start with their most general claim, which is simply a statement of the bias-variance tradeoff:

\begin{quote}
the price to pay for achieving low bias is high variance.
\end{quote}

However, this claim is not true in the general case. The fact that the expected risk can be decomposed into squared bias and variance does imply that the two terms trade off. It is not necessary to trade bias for variance in all settings.
For example, it is not necessary to trade bias for variance in neural networks (\citet{neal2018}, \cref{article:bias-variance}).

\begin{figure}[t]
    \centering
    \includegraphics[width=\textwidth]{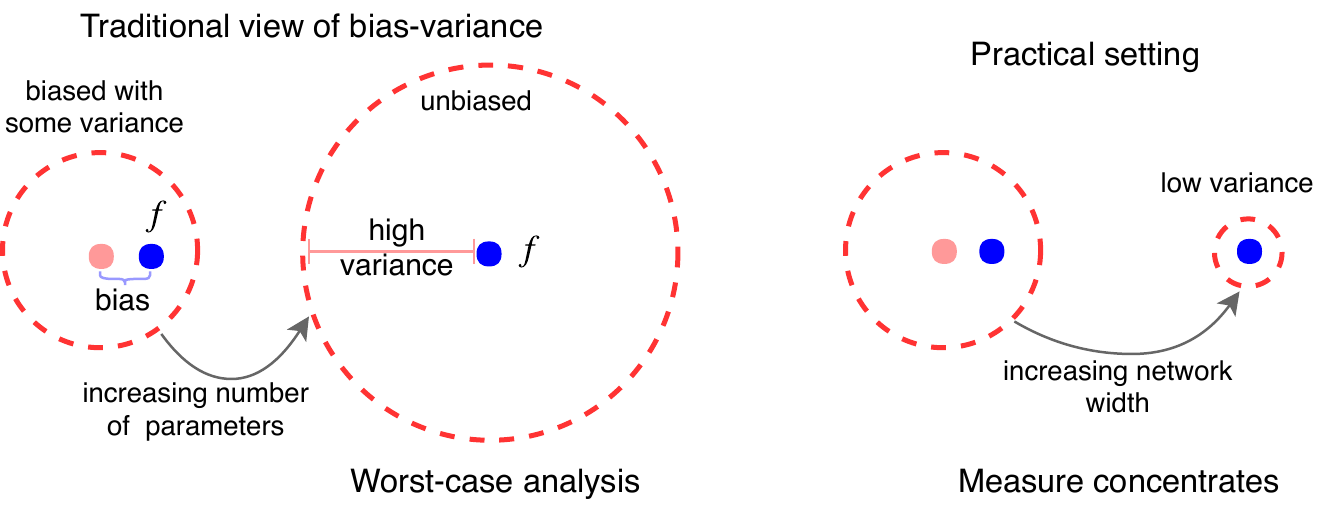}
    \caption{Bias-variance in simple vs.\ complex hypothesis class}
    \label{fig:bv-nn-hypothesis-class}
\end{figure}

As neural networks are a main focus of \citet{geman}'s work, they also make a clear claim relevant to neural networks: ``bias falls and variance increases with the number of hidden units.'' We directly test this claim in \cref{article:bias-variance} and show it to be false on a variety of datasets. \emph{Both} bias \emph{and} variance can decrease as network width increases. In \cref{fig:bv-nn-hypothesis-class} we contrast the common intuition about the bias-variance tradeoff (left, as inspired by \cref{fig:bv-hypothesis-class}) with what we observe in neural networks (right). This is a specific example of not having to pay any price of increased variance when decreasing bias.

In fact, \citet{geman}'s own experiments with neural networks do not even support their claim that ``bias falls and variance increases with the number of hidden units.'' \citet[Figures 16 and 8]{geman} run experiments with a handwritten digit recognition dataset (\cref{fig:geman-neural-net2}) and with a sinusoid dataset (\cref{fig:geman-neural-net-sin}). In both of these datasets, they see decreasing variance when increasing the number of hidden units (\cref{fig:geman-NN}). \citet[page 33]{geman} explain this seeming evidence against their claim as a product of ``convergence issues,'' and maintain their claim: ``The basic trend is what we expect: bias falls and variance increases with the number of hidden units. The effects are not perfectly demonstrated (notice, for example, the dip in variance in the experiments with the largest numbers of hidden units), presumably because the phenomenon of overfitting is complicated by convergence issues and perhaps also by our decision to stop the training prematurely.''

In their paper, \citet{geman} give the following prescription for choosing the width of a neural network: ``How big a network should we employ? A small network, with say one hidden unit, is likely to be biased, since the repertoire of available functions spanned by $f(x; w)$ over allowable weights will in this case be quite limited. If the true regression is poorly approximated within this class, there will necessarily be a substantial bias. On the other hand, if we overparameterize, via a large number of hidden units and associated weights, then the bias will be reduced (indeed, with enough weights and hidden units, the network will interpolate the data), but there is then the danger of a significant variance contribution to the mean-squared error.'' Although this fits the conventional wisdom laid out in \cref{sec:model-complexity} and \cref{sec:bias-variance}, we find this way of thinking to be misleading, leading researchers to incorrect predictions.

\begin{figure}[t]
    \centering
    \subfigure[Bias, variance, and total error as a function of number of hidden units in \citet{geman}'s handwritten digit recognition experiment. Note that variance is decreasing with width in roughly the last $\frac{2}{3}$ of the graph.]{
        \includegraphics[height=7cm]{figures/geman/Geman_NN.pdf}
        \label{fig:geman-neural-net2}
    }
    \qquad
    \subfigure[Bias (o), variance (x), and total error (+) as a function of number of hidden units in \citet{geman}'s noise-free (deterministic) and noisy (ambiguous) sinusoid classificiation experiments. Note that decreasing variance is already seen in the deterministic (top) experiment with neural networks as small as 7-15 hidden units.]{
        \includegraphics[height=7.1cm]{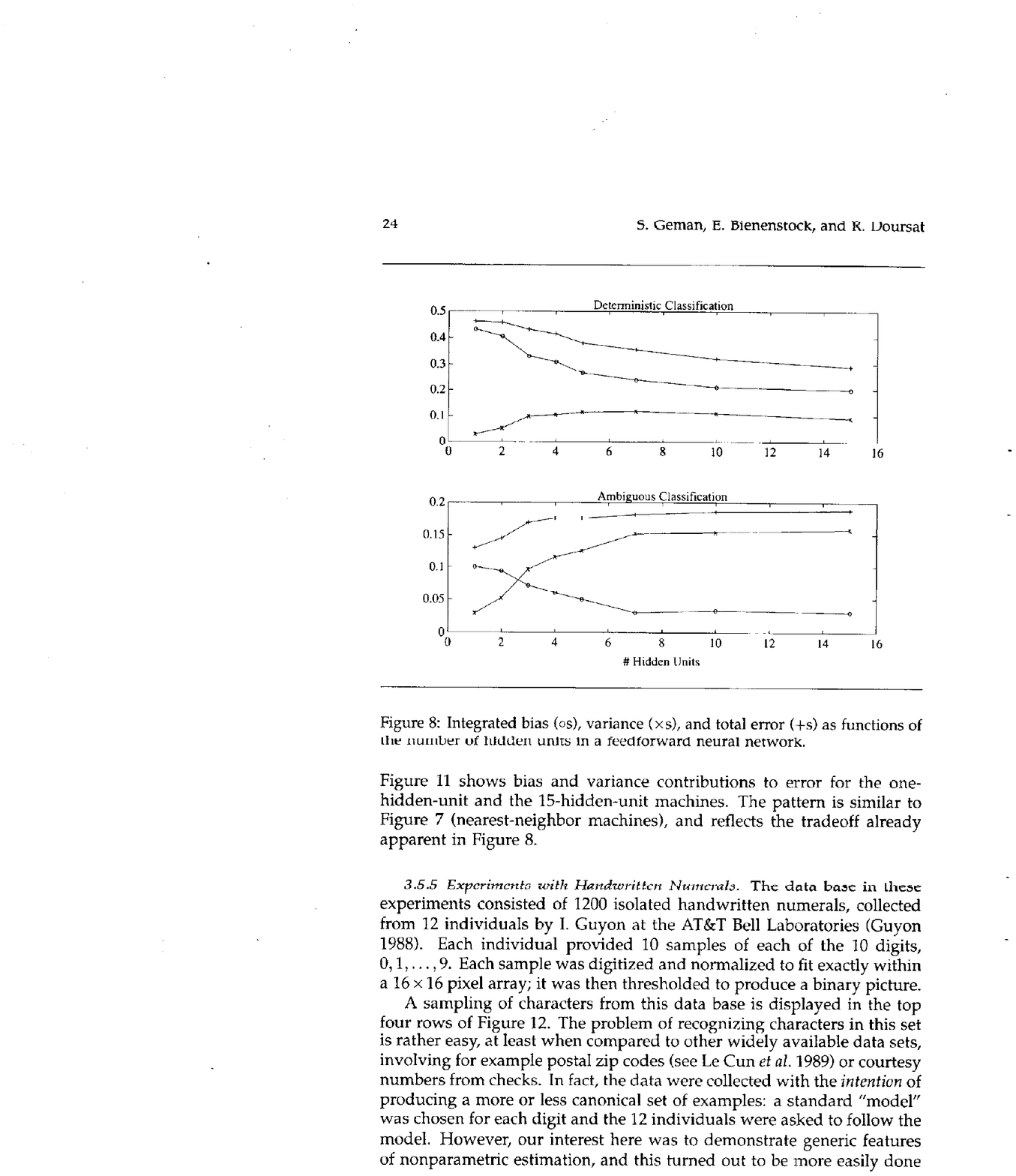}
        \label{fig:geman-neural-net-sin}
    }
    \caption{\citet{geman}'s neural network experiments}
    \label{fig:geman-NN}
\end{figure}

\section{Similar Observations in Reinforcement Learning}
\label{sec:rl-tradeoff}

\citet{DBLP:journals/corr/NeyshaburTS14} found that increasing the width of a single hidden layer neural network leads to \emph{decreasing} test error on MNIST and CIFAR10 until it levels off (never going back up). We explore whether this phenomenon extends to deep reinforcement learning. We provide some evidence that it does, finding that wider networks do seem to perform better than their smaller counterparts in deep reinforcement learning as well \citep{nealOverparamRL}. Combining that with the results from \cref{article:bias-variance}, we infer that very wide networks do not have to suffer from high variance (in exchange for low bias) in reinforcement learning either.

\section{Previous Work on Boosting}
\label{sec:boosting}

\citet{buhlmann2003boosting}'s work on the bias-variance tradeoff in boosting is motivated by ``boosting's resistance to overfitting'' when increasing the number of iterations. For example, in \citet[Figures 8-10]{Schapire1999}, they run experiments on many datasets, finding that, on some datasets, test error decreases and plateaus without increasing with more iterations of boosting. This is only the case in some of their experiments, as in roughly half of their experiments, \citet{Schapire1999} find that test error does eventually increase with number of iterations of boosting. Still, because roughly half of the experiments show ``boosting's resistance to overfitting,'' \citet{buhlmann2003boosting} study bias-variance in boosting.

\citet{buhlmann2003boosting} find that the lack of increasing test error when increasing number of iterations (``boosting's resistance to overfitting'') can be explained in terms of bias and variance. In Theorem 1, they show exponentially decaying bias and variance that grows at an exponentially decaying rate with number of iterations. There are some specifics to this that are related to the strength/weakness of the learner that is boosted, but this is how they explain why monotonically decreasing test error can sometimes be seen when increasing the number of iterations in boosting: ``(2) Provided that the learner is sufficiently weak, boosting always improves, as we show in Theorem 1'' \citep[Section 3.2.2]{buhlmann2003boosting}.

All this said, \citet{buhlmann2003boosting}'s work should not be interpreted as showing a lack of a bias-variance tradeoff in boosting. Rather, it shows a bias-variance tradeoff where the growth of variance with the complexity parameter is exponentially smaller than that in the traditional bias-variance tradeoff (see \cref{fig:boosting-vs-spline}), which implies that variance does not grow forever when increasing the number of iterations. Their work is an important example of a departure from the conventional bias-variance tradeoff.

\begin{figure}[t]
    \centering
    \subfigure[Test error as a function of the number of iterations of boosting. Although it decreases and then increases, it increases at a much slower rate than the classic bias-variance tradeoff would suggest (see figure on the right). \citet{buhlmann2003boosting} call this slower increase an ``a new exponential bias-variance tradeoff.'']{
        \includegraphics[width=.4\textwidth]{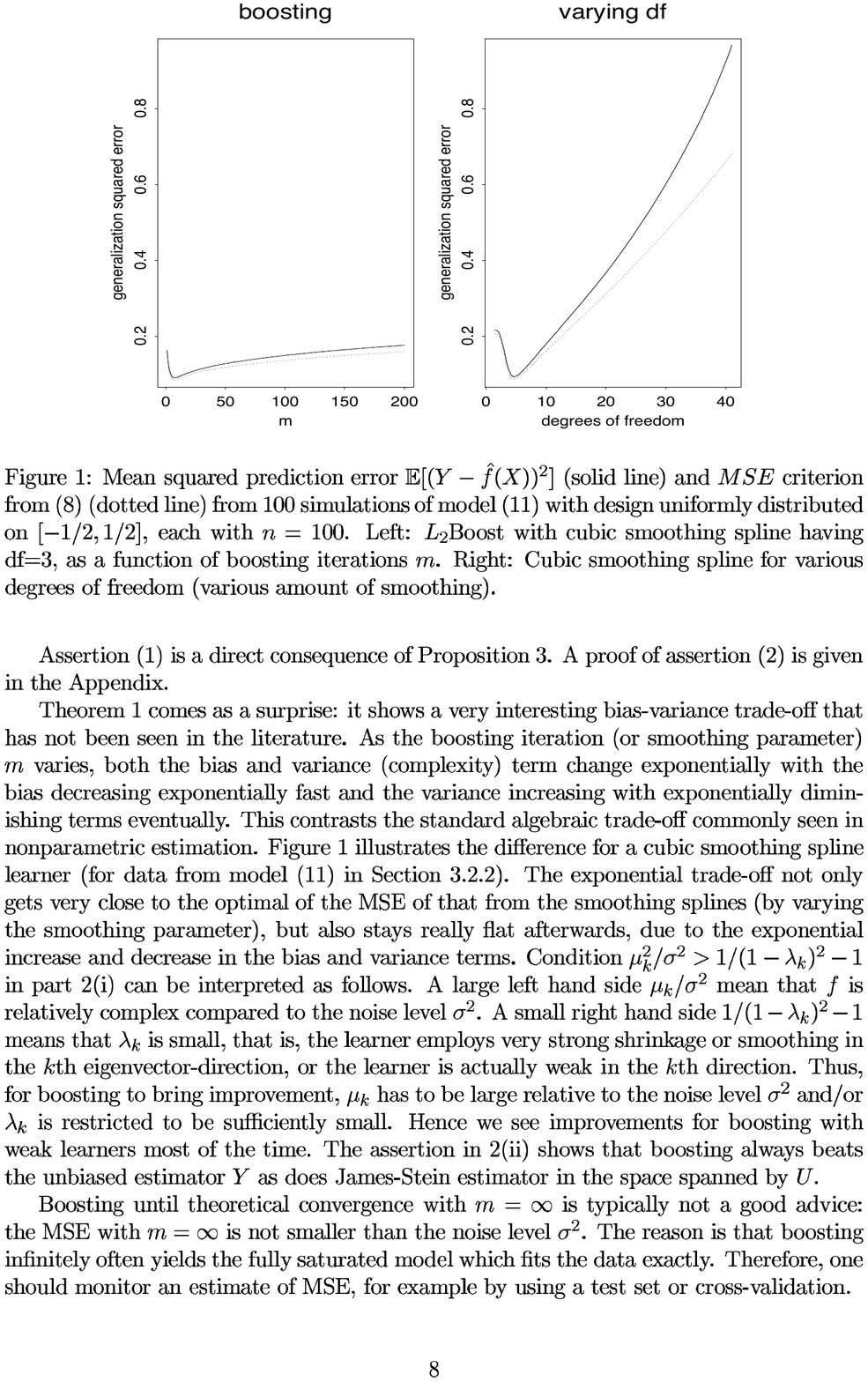}
        \label{fig:boosting-test-error}
    }
    \qquad
    \subfigure[Test error as a function of the amount of smoothing in a cubic spline. This is the test error curve that the classic bias-variance tradeoff would suggest; test error increases with the complexity parameter at a rate that is at least linear.]{
        \includegraphics[width=.4\textwidth]{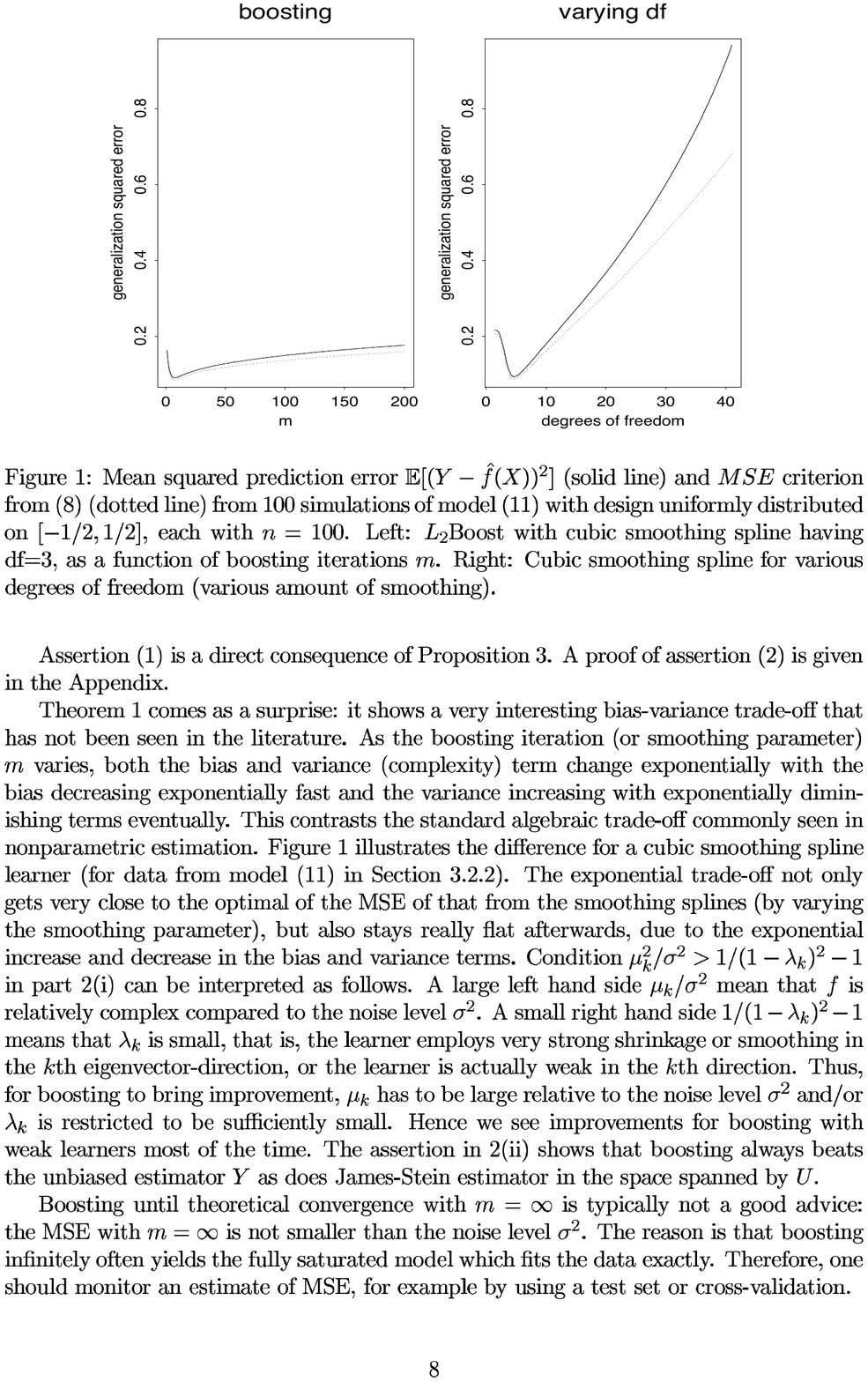}
        \label{fig:spline-test-error}
    }
    \caption[Exponential bias-variance tradeoff in boosting]{Exponential bias-variance tradeoff in boosting \citep{buhlmann2003boosting}}
    \label{fig:boosting-vs-spline}
\end{figure}

\section{The Double Descent Curve}
\label{sec:double-descent}

A conjecture that has recently gained popularity is the idea the risk behaves as a ``double descent'' curve in model complexity (\cref{fig:double-descent}). Specifically, the idea is that the risk behaves according to the classical bias-variance tradeoff wisdom (\cref{sec:bias-variance}) in the under-parameterized regime; the risk decreases with model complexity in the over-parameterized regime; and there is a sharp transition from the under-parameterized regime to the over-parameterized regime where the training error is 0. \citet{belkin2018} illustrate this in \cref{fig:double-descent}.

\begin{figure}[t]
    \centering
    \includegraphics[width=.7\textwidth]{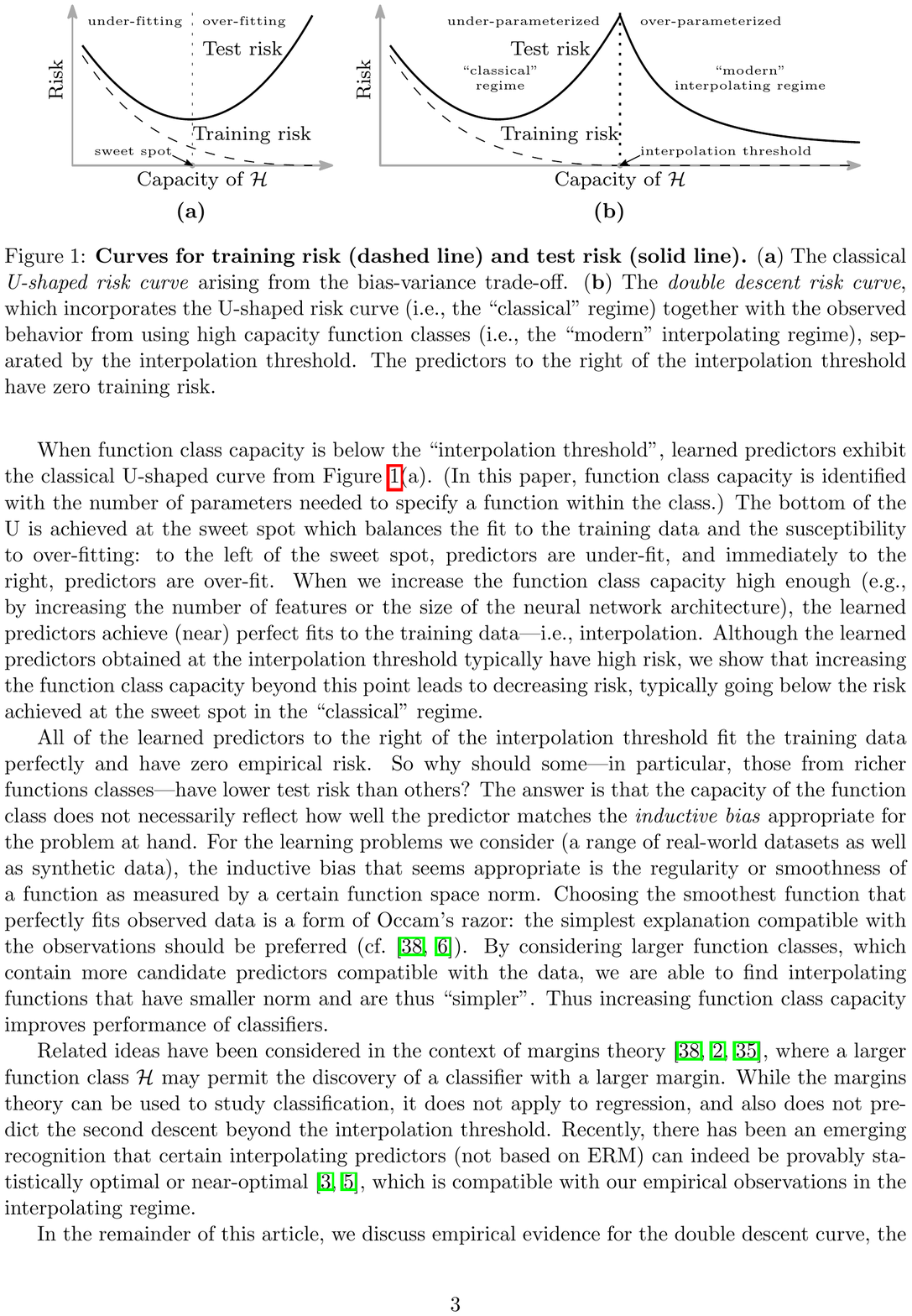}
    \caption[Double descent risk curve conjectured to replace traditional U-shaped curve]{Double descent curve, showing U-shaped risk curve in under-parameterized regime and decreasing curve in over-parameterized regime \citep{belkin2018}.}
    \label{fig:double-descent}
\end{figure}

In previous work, \citet{Advani2017HighdimensionalDO} observed this phenomenon in linear student-teacher\footnote{``Teacher'' here refers to the fact that the data is generated by a neural network.} networks and with nonlinear networks on MNIST. In concurrent (to \cref{article:bias-variance}) work, \citet{jamming,Geiger2019TheJT,belkin2018} also studied this phenomenon. \citet{jamming,Geiger2019TheJT} described the cusp in the double descent curve as corresponding to a phase transition and draw the analogy to the ``jamming transition'' in particle systems. \citet{belkin2018} conjectured that this phenomenon is fairly general (as opposed to just being restricted to neural networks). \citet{belkin2018} showed the phenomenon in random forests, in addition to neural networks, and coined the term ``double descent.'' \citet{deepdoubledescent} recently showed that this double descent phenomenon is present in many state-of-the-art architectures such as convolutional neural networks, ResNets, and transformers, as opposed to only being present in more toy settings. The double descent phenomenon in simple settings such as shallow linear models can be seen in work that dates as far back as 1995 \citep{opper1995statistical,opper2001learning,bosOpper1997dynamics}.

Our work in \cref{article:bias-variance} is consistent with the double descent curve. Although we were not looking for the cusp in the double descent curve (can require dense sampling of model sizes and specific experimental details), we do seem to see it in several variance figures in \cref{article:bias-variance}. All the works on the double descent curve examine the risk (or test error). In order to test the bias-variance hypothesis, it is important to actually measures bias and variance because test error and bias can decrease while variance still increases at an exponentially decaying rate (\cref{sec:boosting}).

\section{The Need to Qualify Claims about the Bias-Variance Tradeoff when Teaching}
\label{sec:need-to-qualify-claims}

Students who take introductory machine learning courses are typically taught the bias-variance tradeoff as a general, unavoidable truth that applies anywhere there is some notion of increasing model complexity (see \cref{sec:textbooks} for its prevalence and representative quotes from textbooks). This leads machine learning experts to sometimes make incorrect inferences about model selection with high confidence.
From extensive personal communication, it appears that most researchers unfamiliar with work like \citet{DBLP:journals/corr/NeyshaburTS14}'s react with incredulity to the results described in \cref{sec:geman-refutation}, \cref{sec:rl-tradeoff}, and \cref{article:bias-variance}.
Even among researchers who have familiarized themselves with \citet{DBLP:journals/corr/NeyshaburTS14}'s results on test error, one can find many who are surprised by our results.
We attribute this phenomenon to the strong influence \citet{geman}'s claims have had on the research community.
In other words, a sizable portion of researchers can be dogmatic about the conventional tradeoff wisdom described in \cref{sec:model-complexity} and \cref{sec:bias-variance}. Qualifying the conventional tradeoff wisdom in textbooks and introductory courses by noting that the tradeoff intuition is useful sometimes and misleading other times would prevent future students from subscribing to this intuitive dogma.

The goal in amending textbooks/lectures that teach the bias-variance tradeoff is to make it more clear that the bias-variance tradeoff is not a universal truth. Here, we present three simple qualifications that, if integrated, would more accurately represent the evidence we have on the bias-variance tradeoff and would help prevent students from interpreting that the bias-variance tradeoff is universal:

\begin{enumerate}
    \item Expected error can be decomposed into (squared) bias and variance, when using squared loss \citep{geman}, but this \textbf{decomposition does not imply a tradeoff.} This lack of implication should be made explicit in textbooks because the decomposition is often used in close proximity to the tradeoff as ambiguous evidence for it.
    
    \item \textbf{The bias-variance tradeoff should not be assumed to be universal.} There is evidence that bias and variance trade off in certain methods (e.g.\ KNN) when varying the right parameter (\cref{sec:bv-experimental-evidence}), but there are also counterexamples. For example, there are clear examples of a lack of a bias-variance tradeoff in neural networks (\cref{article:bias-variance}) and potentially other methods such as decision trees \citep{belkin2018}.
    
    \item
    It should be emphasized that the \textbf{PAC upper bounds on test error can be very loose} for the problems we care about in practice (see \cref{sec:model-complexity,sec:bv-supporting-theory} for examples of upper bounds on test error and estimation error). Because these upper bounds are so loose, their qualitative trend (e.g. as number of parameters increases) is not necessarily an accurate reflection of the qualitative trend of the test error in practice. 
\end{enumerate}

                     
\anglais
\articleenchapitre

\revue{the ICML 2019 Workshop on Identifying and Understanding Deep Learning Phenomena}
\article{A Modern Take on the Bias-Variance Tradeoff in Neural Networks}
\label{article:bias-variance}

 %
 %
 %
\contributions
{
    \begin{itemize}
        \item Lead the project
        \item Hypothesis that the bias-variance tradeoff would not be seen in neural networks when varying width
        \item The whole initial codebase
        \item Ran many of the experiments
        \item \cref{prop:linear_model}
        \item Majority of paper writing
    \end{itemize}

    Ioannis Mitliagkas proposed the variance decomposition and supervised the project.
    
    Sarthak Mittal ran many experiments.
    
    Vinayak Tantia contributed significantly to the codebase and ran experiments.
    
    Aristide Baratin contributed the initialization term to \cref{prop:linear_model}.
    
    Aristide Baratin and Ioannis Mitliagkas proved related versions of \cref{thm:init-var-decay}
    
    Ioannis Mitliagkas and Aristide Baratin contributed significantly to the writing of the paper.\\[1cm]
}
 %

\auteur{Brady Neal}
\auteur{Sarthak Mittal}
\auteur{Aristide Baratin}
\auteur{Vinayak Tantia}
\auteur{Matthew Scicluna}
\auteur{Simon Lacoste-Julien}
\auteur{Ioannis Mitliagkas}
\adresse{Mila - Quebec AI Institute}


\maketitle

\begin{abstract}{bias-variance tradeoff, neural networks, over-parameterization, generalization}
The  bias-variance tradeoff tells us that as model complexity increases, bias falls and variances increases, leading to a U-shaped test error curve. However, recent empirical results with over-parameterized neural networks are marked by a striking absence of the classic U-shaped test error curve: test error keeps decreasing in wider networks. This suggests that there might not be a bias-variance tradeoff in neural networks with respect to network width, unlike was originally claimed by, e.g., \citet{geman}. Motivated by the shaky evidence used to support this claim in neural networks, we measure bias and variance in the modern setting. We find that \emph{both} bias \emph{and} variance can decrease as the number of parameters grows. To better understand this, we introduce a new decomposition of the variance to disentangle the effects of optimization and data sampling. We also provide theoretical analysis in a simplified setting that is consistent with our empirical findings.
\end{abstract}

\begin{resume}{compromis biais-variance, réseaux de neurones, sur-paramétrage, généralisation}
Le compromis biais-variance nous indique qu'à mesure que la complexité du modèle augmente, le biais diminue et la variance augmente, ce qui conduit à une courbe d'erreur de test en forme de U. Cependant, les résultats empiriques récents avec des réseaux neuronaux sur-paramétrés sont marqués par une absence frappante de la courbe d'erreur de test classique en forme de U : l'erreur de test continue de diminuer dans les réseaux plus larges. Cela donne à penser qu'il n'y a peut-être pas de compromis sur la variance de biais dans les réseaux de neurones en ce qui concerne la largeur du réseau, contrairement à ce que prétendaient, à l'origine, par exemple, \citet{geman}. Motivés par les preuves incertaines utilisées à l'appui de cette affirmation dans les réseaux de neurones, nous mesurons les biais et la variance dans le contexte moderne. Nous constatons que le biais d'accentuation et la variance peuvent diminuer à mesure que le nombre de paramètres augmente. Pour mieux comprendre cela, nous introduisons une nouvelle décomposition de la variance pour démêler les effets de l'optimisation et de l'échantillonnage des données. Nous fournissons également une analyse théorique dans un cadre simplifié qui est conforme à nos constatations empiriques.
\end{resume}



\section{Introduction}

There is a dominant dogma in machine learning:
\begin{quote}
    The price to pay for achieving low bias is high variance \citep{geman}.
\end{quote}
The quantities of interest here are the bias and variance of a learned model's {\em prediction} on a new input, where the randomness comes from the sampling of the training data.
This idea that bias decreases while variance increases with model capacity, leading to a U-shaped test error curve is commonly known as the \emph{bias-variance tradeoff} (\cref{fig:main_common_intuition_wrong} (left)).

There exist experimental evidence and theory that support the idea of a tradeoff. In their landmark paper, \citet{geman} measure bias and variance in various models.
They show convincing experimental evidence for the bias-variance tradeoff in nonparametric methods such as kNN (k-nearest neighbor) and kernel regression. They also show experiments on neural networks and claim that bias decreases and variance increases with network width. Statistical learning theory \citep{vapnik1998statistical} successfully predicts these U-shaped test error curves implied by a tradeoff for a number of classic machine learning models.
A key element is identifying a notion of model capacity, understood as the main parameter controlling this tradeoff.



Surprisingly, there is a growing amount of empirical evidence that \textit{wider} networks generalize \textit{better} than their smaller counterparts \citep{DBLP:journals/corr/NeyshaburTS14,wide_resnet,novak2018sensitivity, lee2018deep,belkin2018, jamming,fisher-rao_metric,DBLP:journals/corr/CanzianiPC16}.
 In those cases the classic U-shaped test error curve is not observed. 

A number of different research directions have spawned in response to these findings. \citet{DBLP:journals/corr/NeyshaburTS14} hypothesize the existence of an implicit regularization mechanism.
Some study the role that optimization plays \citep{implicit_bias__gd_linear_sep, implicit_bias_opt_geo}. Others suggest new measures of capacity \citep{fisher-rao_metric,neyshabur2018the}.
All approaches focus on test error, rather than studying bias and variance directly \citep{neyshabur2018the,scaling,fisher-rao_metric,belkin2018}.

\begin{figure}[t]
    \centering
    \begin{subfigure}
        \centering
        \includegraphics[width=0.47\textwidth]{figures/fortmann-roe_biasvariance_big}
    \end{subfigure}
    \hfill
    \begin{subfigure}
        \centering
         \includegraphics[width=0.47\textwidth]{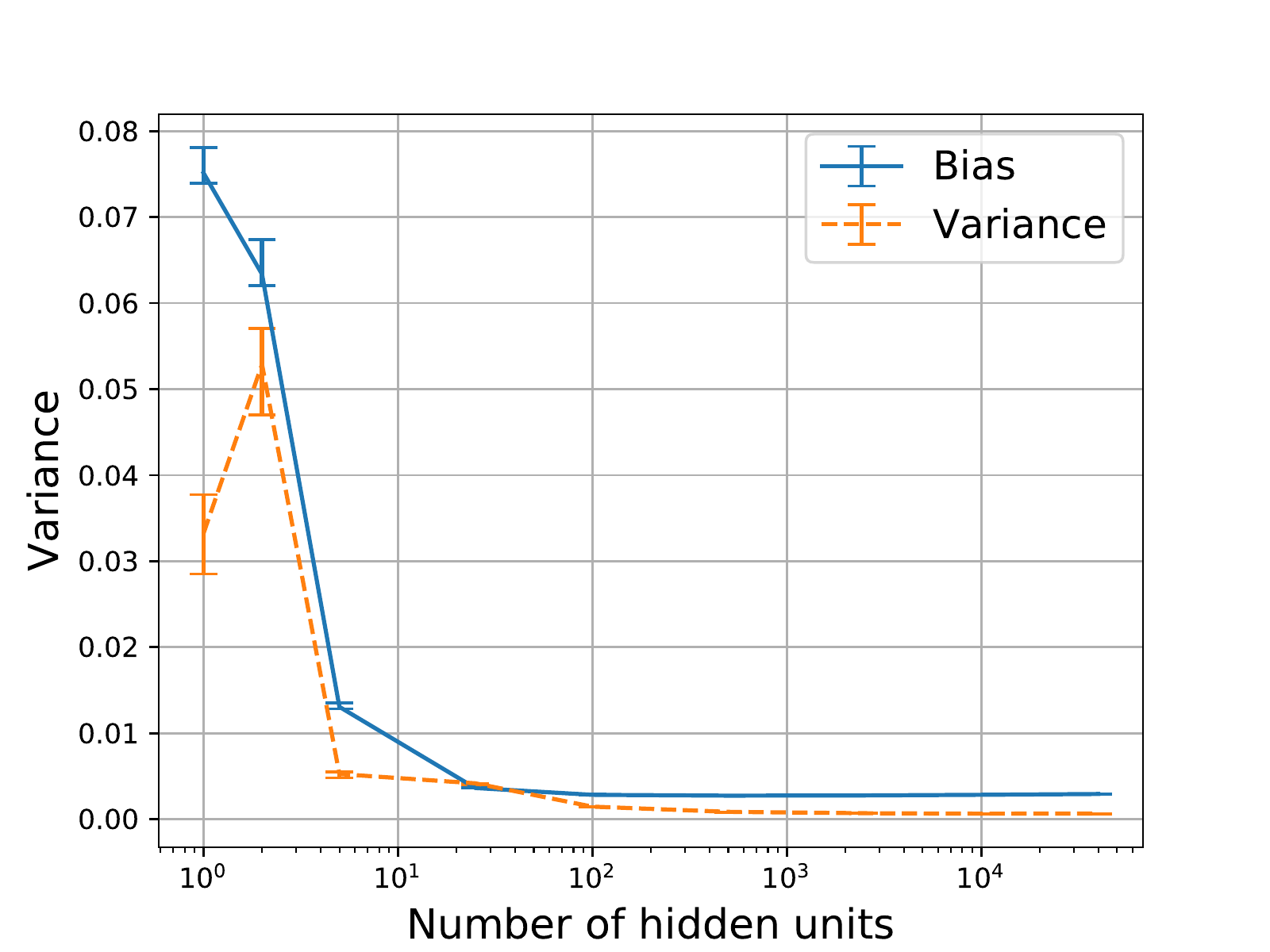}
    \end{subfigure}
    \caption[Traditional bias-variance tradeoff vs.\ decreasing variance in neural networks]{On the left is an illustration of the common intuition for the bias-variance tradeoff \citep{fortmann-roe_2012}. We find that {\em both} bias and variance decrease when we increase network width on MNIST (right) and other datasets (\cref{sec:width}). These results seem to contradict the traditional intuition of a strict tradeoff.}
    \label{fig:main_common_intuition_wrong}
\end{figure}

Test error analysis does not give a definitive answer on the lack of a bias-variance tradeoff. 
Consider boosting:
it is known that its test error often decreases with the number of rounds \cite[Figures 8-10]{Schapire1999}.
In spite of this monotonicity in test error, \citet{buhlmann2003boosting}
show that variance grows at an exponentially decaying rate, calling this an ``exponential bias-variance tradeoff'' (see \cref{sec:boosting}).
To study the bias-variance tradeoff, one has to isolate and measure bias and variance individually.
To the best of our knowledge, there has not been published work reporting such measurements on neural networks since 
\citet{geman}.




We go back to basics and study bias and variance.
We start by taking a closer look at \citet[Figure 16 and Figure 8 (top)]{geman}'s experiments with neural networks.
We notice that their experiments do not support their claim that ``bias falls and variance increases with the number of hidden units.''
The authors attribute this inconsistency to convergence issues and maintain their claim that the bias-variance tradeoff is universal.
Motivated by this inconsistency, we perform a set of bias-variance experiments with modern neural networks.

We measure prediction bias and variance of fully connected neural networks.
These measurements allow us to reason directly about whether there exists a tradeoff with respect to network width.
We find evidence that \emph{both} bias \emph{and} variance can decrease at the same time as network width increases in common classification and regression settings (\cref{fig:main_common_intuition_wrong,sec:width}).

We observe the qualitative lack of a bias-variance tradeoff in network width with a number of gradient-based optimizers. 
In order to take a closer look at the roles of optimization and data sampling, we propose a simple decomposition of total prediction variance (\cref{sec:decomposition}). 
We use the law of total variance to get a term that corresponds to average (over data samplings) variance due to optimization and a term that corresponds to variance due to training set sampling of an ensemble of differently initialized networks. 
Variance due to optimization is significant in the under-parameterized regime and monotonically decreases with width in the over-parameterized regime. 
There, total variance is much lower and dominated by variance due to sampling (\cref{fig:all_variances}).

We provide theoretical analysis, consistent with our empirical findings,
in simplified analysis settings:
i) prediction variance does not grow arbitrarily with number of parameters in fixed-design linear models;
ii) variance due to optimization diminishes with number of parameters in neural networks under strong assumptions.

\begin{figure}[t]
    \centering
    \begin{subfigure}
        \centering
         \includegraphics[width=0.47\textwidth]{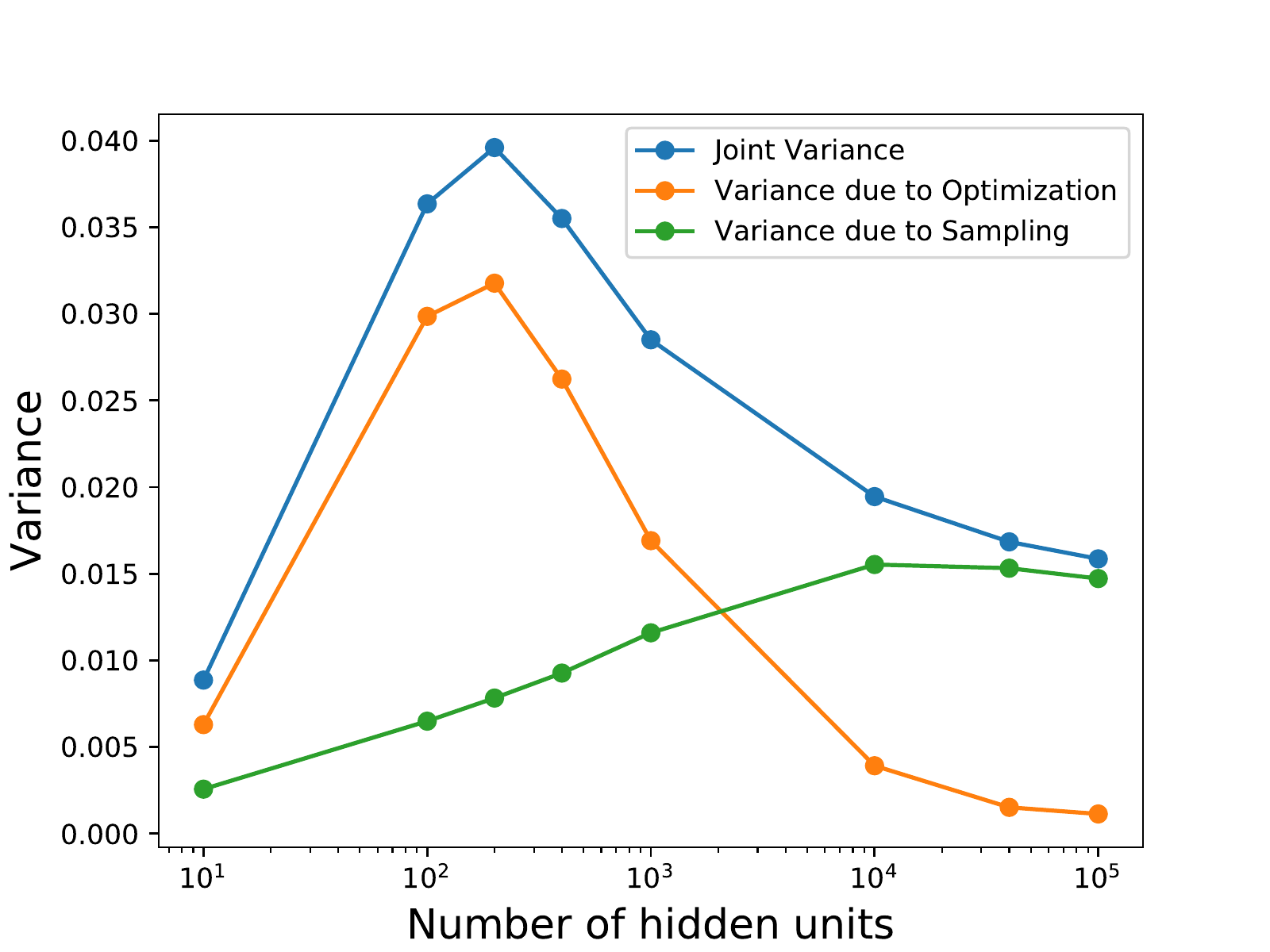}
    \end{subfigure}
    \hspace{.5cm}
    \begin{subfigure}
        \centering
     \includegraphics[width=0.47\textwidth]{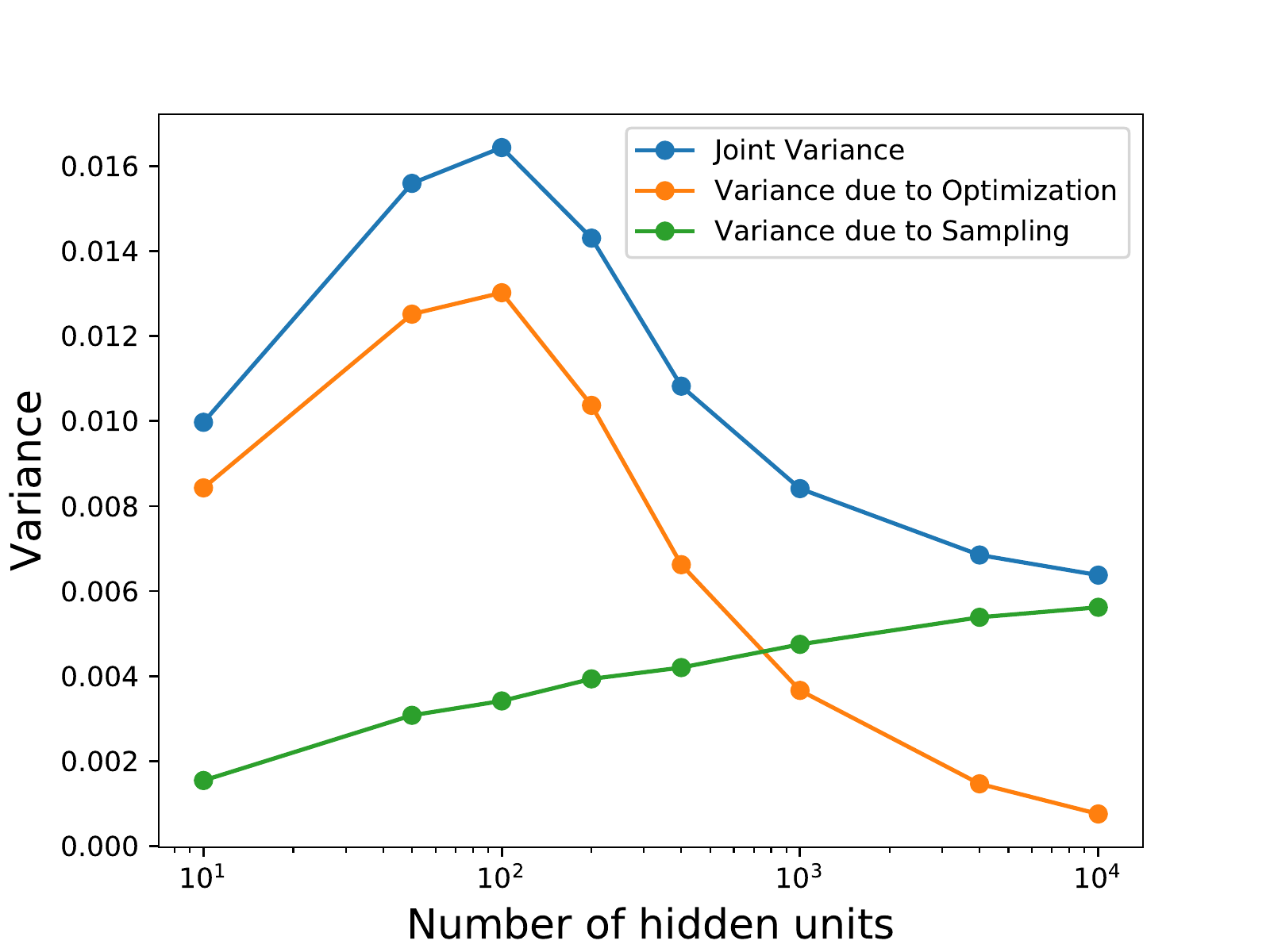}
    \end{subfigure}
    \caption[Variance due to optimization and variance due to sampling]{Trends of variance due to sampling and variance due to optimization with width on CIFAR10 (left) and on SVHN (right).
    Variance due to optimization decreases with width, once in the over-parameterized setting.
    Variance due to sampling plateaus and remains constant. This is in contrast with what the bias-variance tradeoff would suggest.}
    \label{fig:all_variances}
\end{figure}





    


\paragraph{Organization}
The rest of this paper is organized as follows. We discuss relevant related work in \cref{sec:related-work}. \Cref{sec:preliminaries} establishes necessary preliminaries, including our variance decomposition. In \cref{sec:width}, we empirically study the impact of network width on variance. In \cref{sec:theory}, we present theoretical analysis in support of our findings.

\section{Related work}
\label{sec:related-work}

\citet{DBLP:journals/corr/NeyshaburTS14,neyshaburthesis} point out that because increasing network width does not lead to a U-shaped test error curve, there must be some form of implicit regularization controlling capacity. Our work is consistent with this finding, but by approaching the problem from the bias-variance perspective, we gain additional insights: 1) We specifically address the hypothesis that decreased bias must come at the expense of increased variance (see \citet{geman} and \cref{app:intuitions}) by measuring both quantities.
2) Our more fine-grain approach reveals that variance due to optimization vanishes with width, while variances due to sampling increases and levels off.
This insight about variance due to sampling is consistent with existing variance results for boosting \citep{buhlmann2003boosting}.
To ensure that we are studying networks of increasing capacity, one of the experimental controls we use throughout the paper is to verify that bias is decreasing.

In independent concurrent work, \citet{jamming, belkin2018} point out that generalization error acts according to conventional wisdom in the under-parameterized setting, that it decreases with capacity in the over-parameterized setting, and that there is a sharp transition between the two settings. Although the phrase ``bias-variance trade-off'' appears in \citet{belkin2018}'s title, their work really focuses on the shape of the test error curve: they argue it is not the simple U-shaped curve that conventional wisdom would suggest, and it is not the decreasing curve that \citet{DBLP:journals/corr/NeyshaburTS14} found; it is ``double descent curve,'' which is essentially a concatenation of the two curves. This is in contrast to our work, where we actually measure bias, variance, and components of variance in this over-parameterized regime. Interestingly, \citet{belkin2018}'s empirical study of test error provides some evidence that our bias-variance finding might not be unique to neural networks and might be found in other models such as decision trees.

In subsequent work,\footnote{By ``subsequent work,'' we mean work that appeared on arXiv five months after our paper appeared on arXiv.} \citet{belkin2019models,hastie2019surprises} perform a theoretical analysis of student-teacher linear models (with random features), showing the double descent curve theoretically. \citet{Advani2017HighdimensionalDO} also performed a similar analysis. \citet{hastie2019surprises} is the only one to theoretically analyze variance. Their work differs from ours in that we run experiments with neural networks on complex, real data, while they carry out a theoretical analysis of linear models in a simplified teacher (data generating distribution) setting.




\section{Preliminaries}
\label{sec:preliminaries}

\subsection{Set-up}
\label{sec:setup}

We consider the typical supervised learning task of predicting  an output $y \in \mathcal{Y}$ from an input $x \in \mathcal{X}$, where the pairs $(x, y)$ are drawn from some unknown joint distribution, $\mathcal{D}$. The learning problem consists of learning a function $h_S:  \mathcal{X} \to \mathcal{Y}$ from a finite training dataset $S$ of $m$ i.i.d.\  samples from $\mathcal{D}$. The quality of a predictor $h$ can quantified by the expected error,
\beq \label{populrisk}
\mathcal{E}(h) = \E_{(x, y) \sim \mathcal{D}} \, \ell(h(x), y) \, ,
\eeq
for some loss function $\ell : \mathcal{Y} \times \mathcal{Y} \to \R$. 

In this paper, predictors $h_{\theta}$ are parameterized by the weights $\theta \in \R^N$ of neural networks.
We consider
the average performance over possible training sets (denoted by the random variable $S$) of size $m$. This is the same quantity \citet{geman} consider. While $S$ is the only random quantity studied in the traditional bias-variance decomposition, we also study randomness coming from optimization. We denote the random variable for optimization randomness (e.g.\ initialization) by $O$.

Formally, given a fixed training set $S$ and fixed optimization randomness $O$, the learning algorithm $\A$ produces $\theta$ = $\A(S, O)$. Randomness in optimization translates to randomness in $\A(S, \cdot)$. Given a fixed training set, we encode the randomness due to $O$ in a conditional distribution $p(\theta |S)$.
Marginalizing over the training set $S$ of size $m$ gives a marginal distribution $p(\theta)=\E_S p(\theta | S) $ on the weights learned by $\mathcal{A}$ from $m$ samples. In this context, the average performance of the learning algorithm using training sets of size $m$ can be expressed in the following ways:
\beq \label{fullrisk}
\mathcal{R}_m = \E_{\theta \sim p} \mathcal{E}(h_\theta) = \E_S \E_{\theta\sim p(\cdot |S)} \mathcal{E}(h_\theta) = \E_S \E_O \mathcal{E}(h_\theta)
\eeq

\subsection{Bias-variance decomposition}
\label{Sec:bv}

We briefly recall the standard bias-variance decomposition in the case of squared-loss. We work in the context of classification, where each  class $k \in \{1\cdots K\}$ is represented by a one-hot vector in $\mathbb{R}^K$. The predictor outputs a score  or probability vector in $\R^K$. In this context, the risk in \cref{fullrisk}   decomposes into three sources of error \citep{geman}:
\beq  \label{bv}
\mathcal{R}_m = \enoise + \ebias + \evar 
\eeq  
The first term is an intrinsic error term independent of the predictor:
\begin{equation*}
\enoise = \E_{(x,y)} \left[\|y - \bar{y}(x)\|^2 \right] \, .
\end{equation*}
The second term is a bias term:
\begin{equation*}
\ebias = \E_{x} \left[\|\E_\theta [h_\theta(x)] - \bar{y}(x)\|^2 \right] \, ,
\end{equation*}
where  $\bar{y}(x)$ denotes the expectation $\E[y | x]$ of $y$ given $x$. The third term is the expected variance of the output predictions: 
\begin{equation*}
    \evar = \E_{x}  \Var(h_\theta(x)),
\end{equation*}
\begin{equation*}
    \Var(h_\theta(x)) =
    \E_\theta \left[ \|h_\theta(x) - \E_\theta[h_\theta(x)] \|^2\right],
 \end{equation*}
where the expectation over $\theta$ can be done as in \cref{fullrisk}. Interpreting this bias-variance decomposition as a bias-variance tradeoff is quite pervasive (see, e.g., \citet[Chapter 2.9]{hastie01statisticallearning}, \citet[5.4.4]{Goodfellow-et-al-2016}, \citet[Chapter 3.2]{Bishop:2006}). It is generally invoked to emphasize that the model selected should be of the complexity that achieves the optimal balance between bias and variance.

Note that risks computed with classification losses (e.g cross-entropy or 0-1 loss)  do not have such a clean bias-variance decomposition \citep{Domingos00aunified, James03varianceand}. However, it is natural to expect that bias and variance are useful indicators of the performance of models that are not assessed with squared error. In fact, we show the classification risk can be bounded as 4 times the regression risk in \cref{app:classification_regression_relation}. To empirically examine this connection, in all of our graphs that have ``test error'' or ``training error'' on some classification task, we plot the 0-1 classification error (see, e.g., \cref{fig:small_data_tuned_error}).

\subsection{Further decomposing variance into its sources}
\label{sec:decomposition}

In the set-up of \cref{sec:setup} 
the prediction is a random variable that depends on two sources of randomness:
the randomly drawn training set, $S$,
and any optimization randomness, $O$, encoded into the conditional $p(\cdot |S)$.
In certain regimes, one gets significantly different predictions when using a different initialization.
Similarly, the output of a learned predictor changes when we use a different training set.
How do we start disentangling variance caused by sampling from variance caused by optimization?  
There are few different ways; here we describe one of them.

Our goal is to measure prediction variance due to sampling,
while controlling for the effect of optimization randomness.
\begin{definition}[(Ensemble) Variance due to sampling]
We consider the variance of an ensemble of infinitely many predictors with different optimization randomness (e.g.\ random initializations):
$$
    \var_{S}\left(
        \E_O\left[
            h_\theta(x) | S
        \right]\right).
        $$
\end{definition}

A common practice to estimate variance due to optimization effects is to run multiple seeds on a fixed training set. 
\begin{definition}[(Mean) Variance due to optimization]
We consider the average (over training sets) variance over optimization randomness for a fixed training set: $$ \E_{S}\left[
        \var_O\left(
            h_\theta(x) | S
        \right)
    \right].
$$
\end{definition}

The law of total variance naturally decomposes variance into these very terms:
\begin{align}
    \label{eqn:total-variance}
    \Var(h_\theta(x))
    = 
    &\E_{S}\left[
        \var_O\left(
            h_\theta(x) | S
        \right)
    \right]
        +
    \var_{S}\left(
        \E_O\left[
            h_\theta(x) | S
        \right]
    \right) 
\end{align}
We use this decomposition to get a finer understanding of our observations (\cref{fig:all_variances}).


\begin{figure}[t]
    \centering
    \subfigure[Variance decreases with width, even in the small MNIST setting.]{
        \includegraphics[width=0.3115\textwidth]{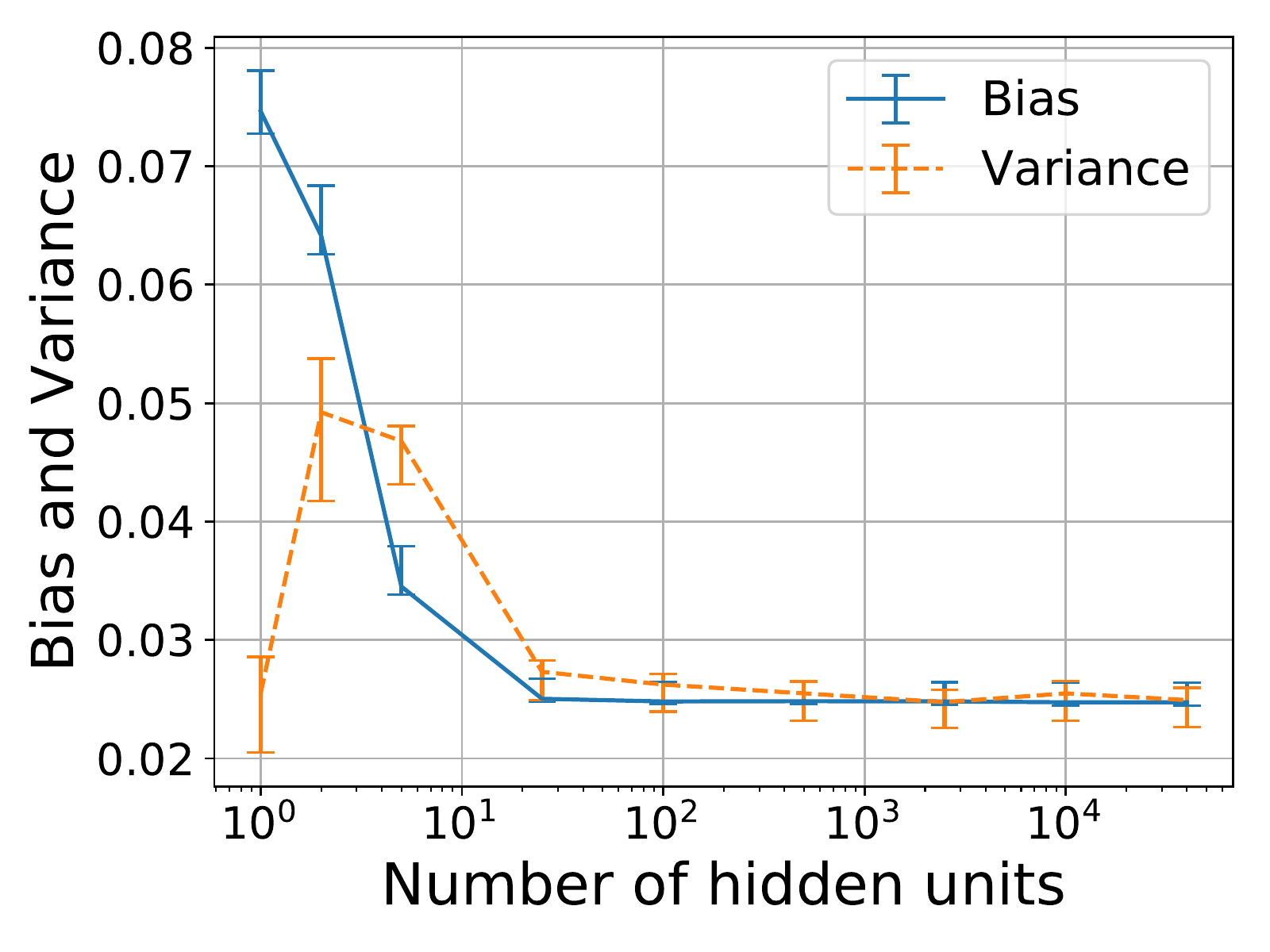}
        \label{fig:small_data_tuned_bv}
    }
    \hfill
    \subfigure[Test error trend is same as bias-variance trend (small MNIST).]{
        \includegraphics[width=0.3115\textwidth]{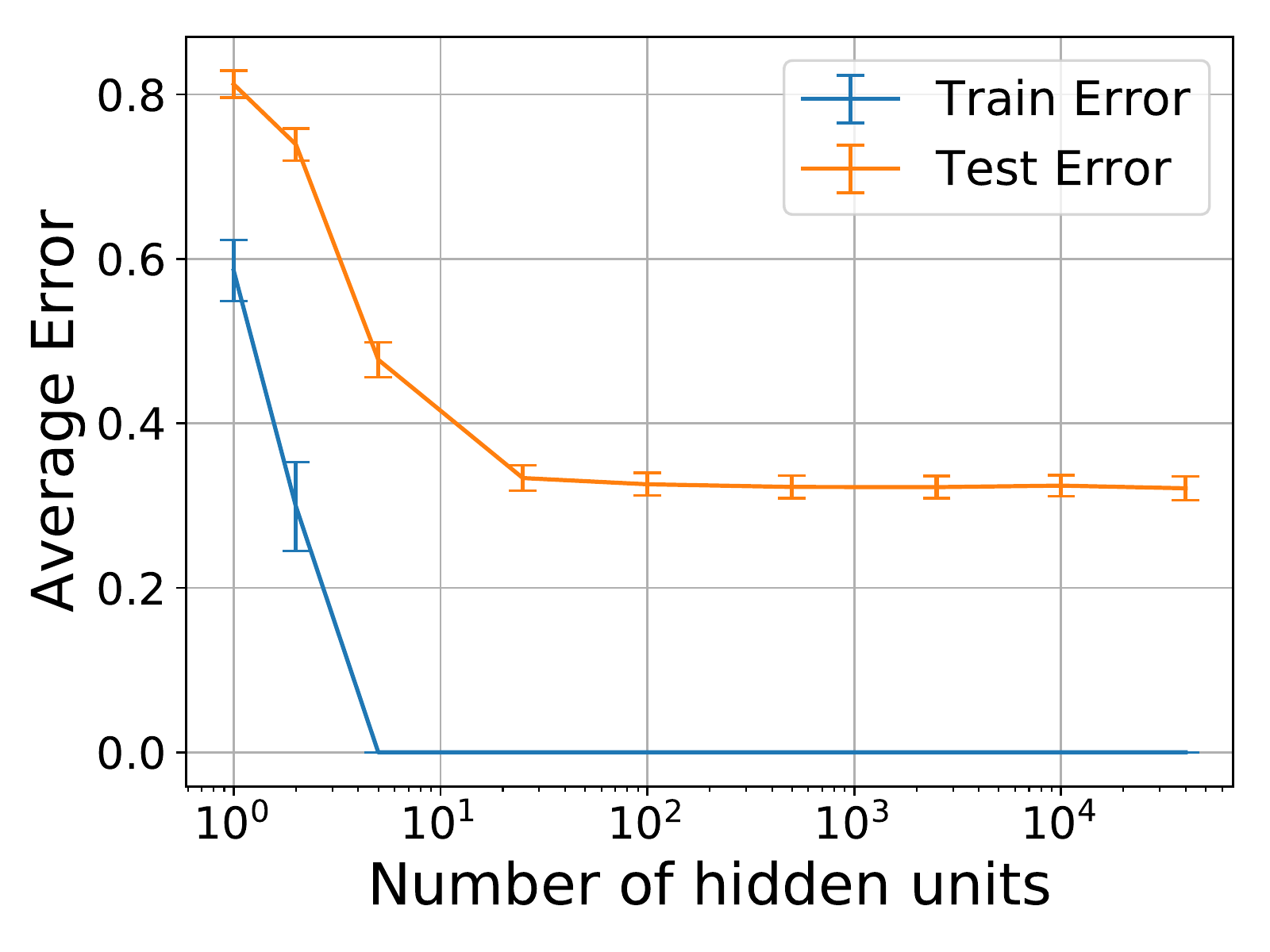}
        \label{fig:small_data_tuned_error}
    }
    \hfill
    \subfigure[Similar bias-variance trends on sinusoid regression task.]{
        \includegraphics[width=0.3115\textwidth]{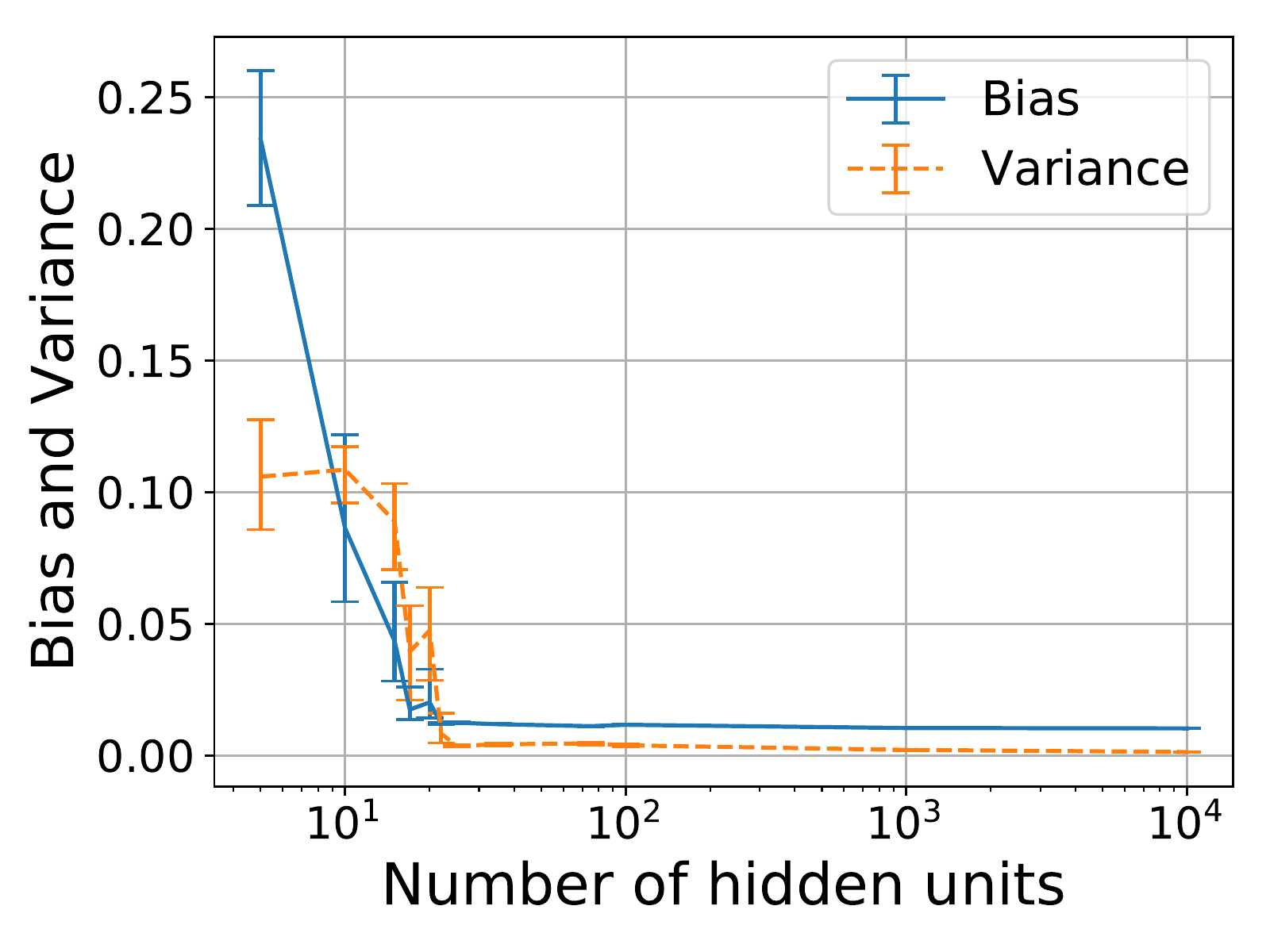}
         \label{fig:sinusoid_bv}
    }
    \caption[Small data bias-variance experiments]{We see the same bias-variance trends in small data settings: small MNIST (left) and a regression setting (right).}
\end{figure}

\section{Experiments}
\label{sec:width}
In this section, we study how variance of fully connected single hidden layer networks varies with width. We provide evidence against \citet{geman}'s important claim about neural networks: \begin{quote}
    ``The basic trend is what we expect: bias falls and variance increases with the number of hidden units.''
\end{quote}
Our main finding is that, for all tasks that we study, bias and variance both decrease as we scale network width.
We also provide a meaningful decomposition of prediction variance into a variance due to sampling term and a variance due to optimization term.

\subsection{Common experimental details} 

We run experiments on different datasets: MNIST, SVHN, CIFAR10, small MNIST, and a sinusoid regression task. Averages over data samples are performed by taking the training set $S$ and creating 50 bootstrap replicate training sets $S'$ by sampling with replacement from $S$. We train 50 different neural networks for each hidden layer size using these different training sets. Then, we estimate $\ebias$\footnote{Because we do not have access to $\bar{y}$, we use the labels $y$ to estimate $\ebias$. This is equivalent to assuming noiseless labels and is standard procedure for estimating bias \citep{Kohavi:1996, Domingos00aunified}.} and $\evar$ as in Section \ref{Sec:bv}, where the population expectation  $\E_x$ is estimated with an average over the test set. To estimate the two terms from the law of total variance (\autoref{eqn:total-variance}), we use 10 random seeds for the outer expectation and 10 for the inner expectation, resulting in a total of 100 neural networks for each hidden layer size. Furthermore, we compute 99\% confidence intervals for our bias and variance estimates using the bootstrap \citep{efron1979}.

The networks are initialized using PyTorch's default initialization, which scales the variance of the weight initialization distribution inversely proportional to the width \citep{LeCun:1998,xavier2010}. The networks are trained using SGD with momentum and generally run for long after 100\% training set accuracy is reached (e.g.\ 500 epochs for full data MNIST and 10000 epochs for small data MNIST). The overall trends we find are robust to how long the networks are trained after the training error converges. The step size hyperparameter is specified in each of the sections, and the momentum hyperparameter is always set to 0.9. To make our study as general as possible, we consider networks without regularization bells and whistles such as weight decay, dropout, or data augmentation, which \citet{zhang} found to not be necessary for good generalization.

\subsection{Decreasing variance in full data setting}

We find a clear decreasing trend in variance with width of the network in the full data MNIST setting (\cref{fig:main_common_intuition_wrong}). We also see the same trend with CIFAR10 (\cref{app:CIFAR10_width}) and SVHN (\cref{app:SVHN_width}). In these experiments, the same step size is used for all networks for a given dataset (0.1 for MNIST and 0.005 for CIFAR10 and SVHN). The trend is the same with or without early stopping, so early stopping is not necessary to see decreasing variance, similar to how it was not necessary to see better test set performance with width in \citet{DBLP:journals/corr/NeyshaburTS14}. Wider ResNets are known to achieve lower test error \citep{wide_resnet}; this likely translates to decreasing variance with width in convolutional networks as well. Much of the over-parameterization literature focuses on over-parameterization in width; interestingly, the variance trend is not the same when varying depth (\cref{app:depth}).

\begin{figure}[t]
    \centering
    \subfigure{
        \includegraphics[width=0.3115\textwidth]{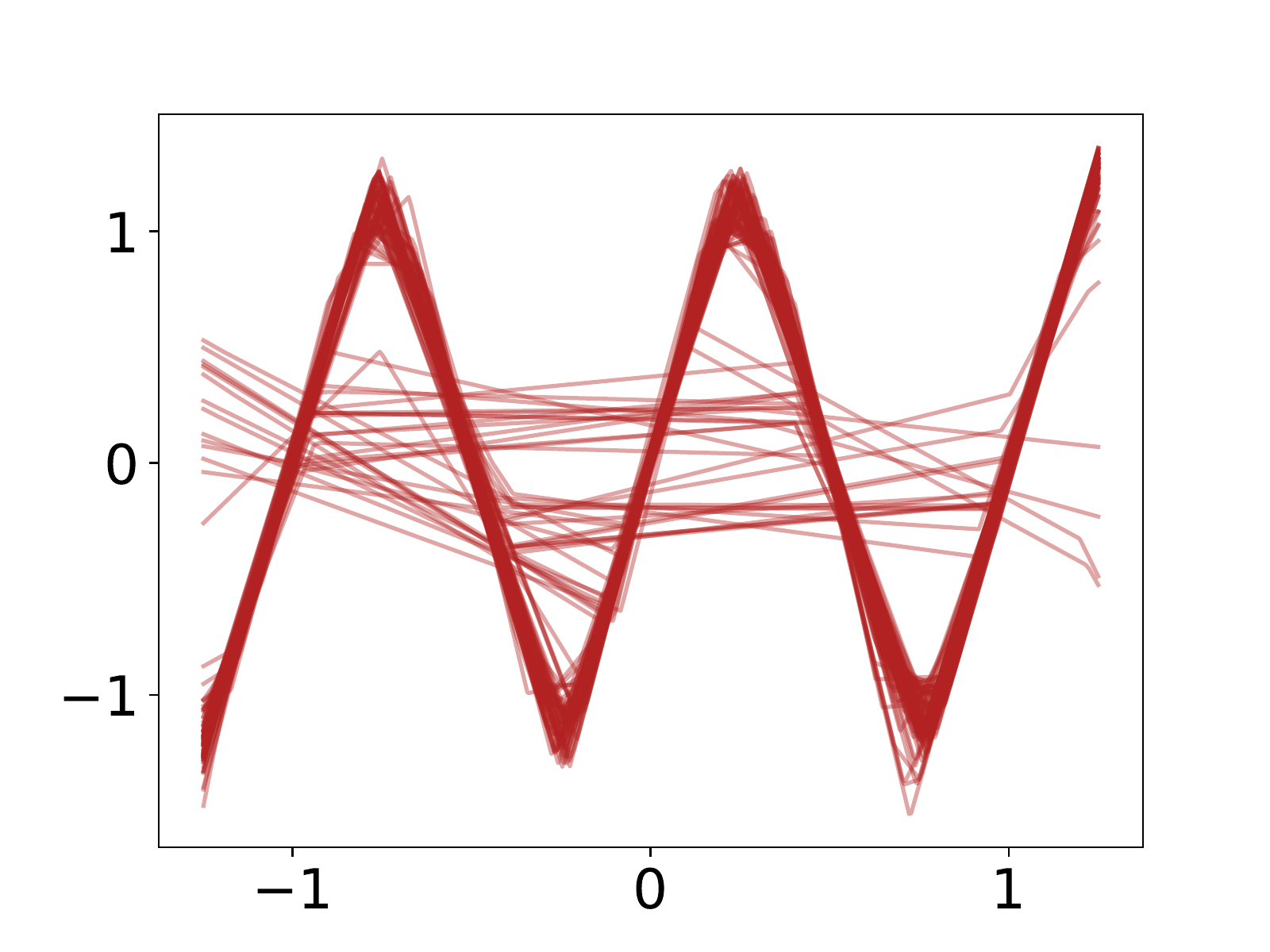}
    }
    \hfill
    \subfigure{
        \includegraphics[width=0.3115\textwidth]{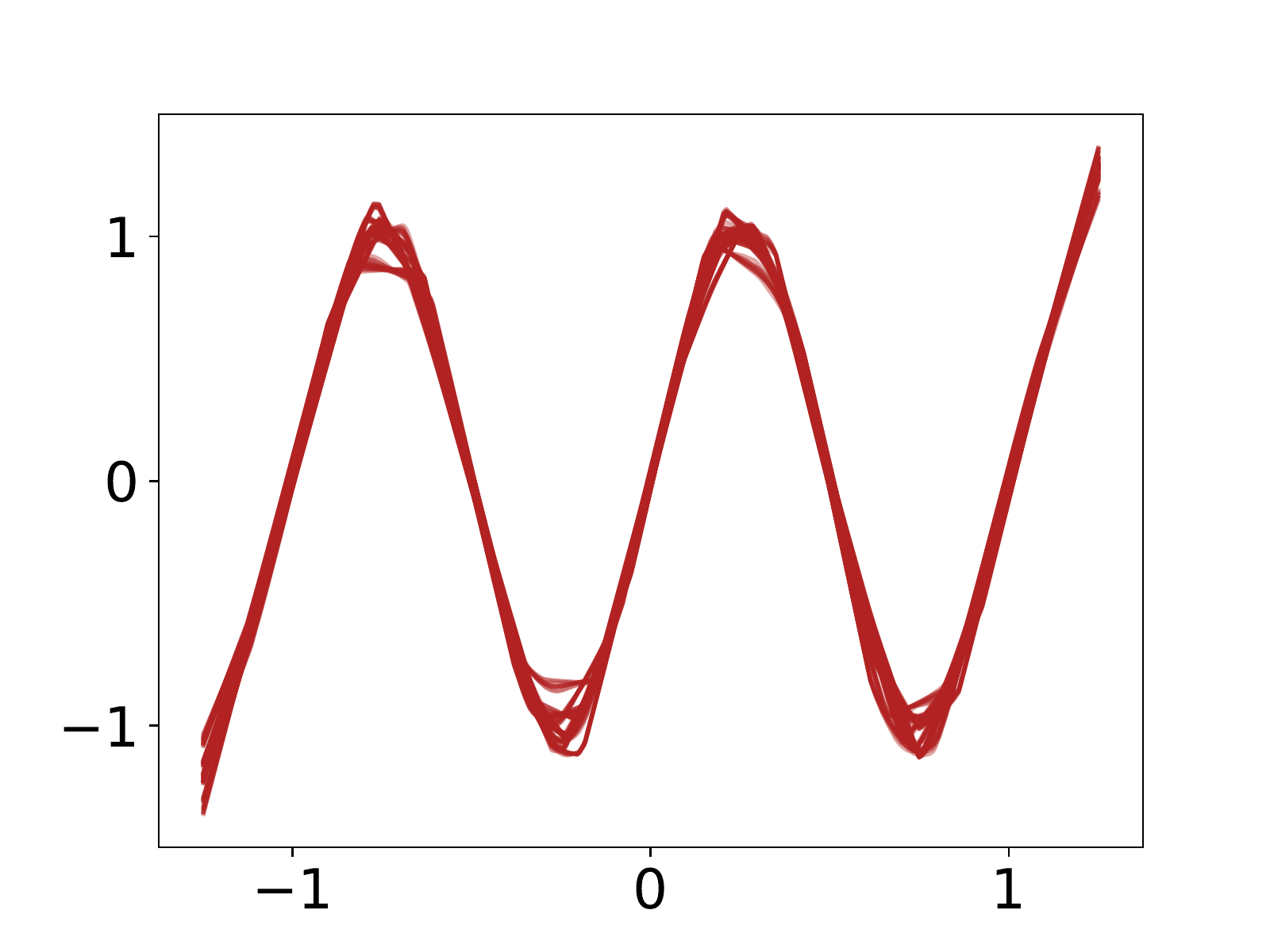}
    }
    \hfill
    \subfigure{
        \includegraphics[width=0.3115\textwidth]{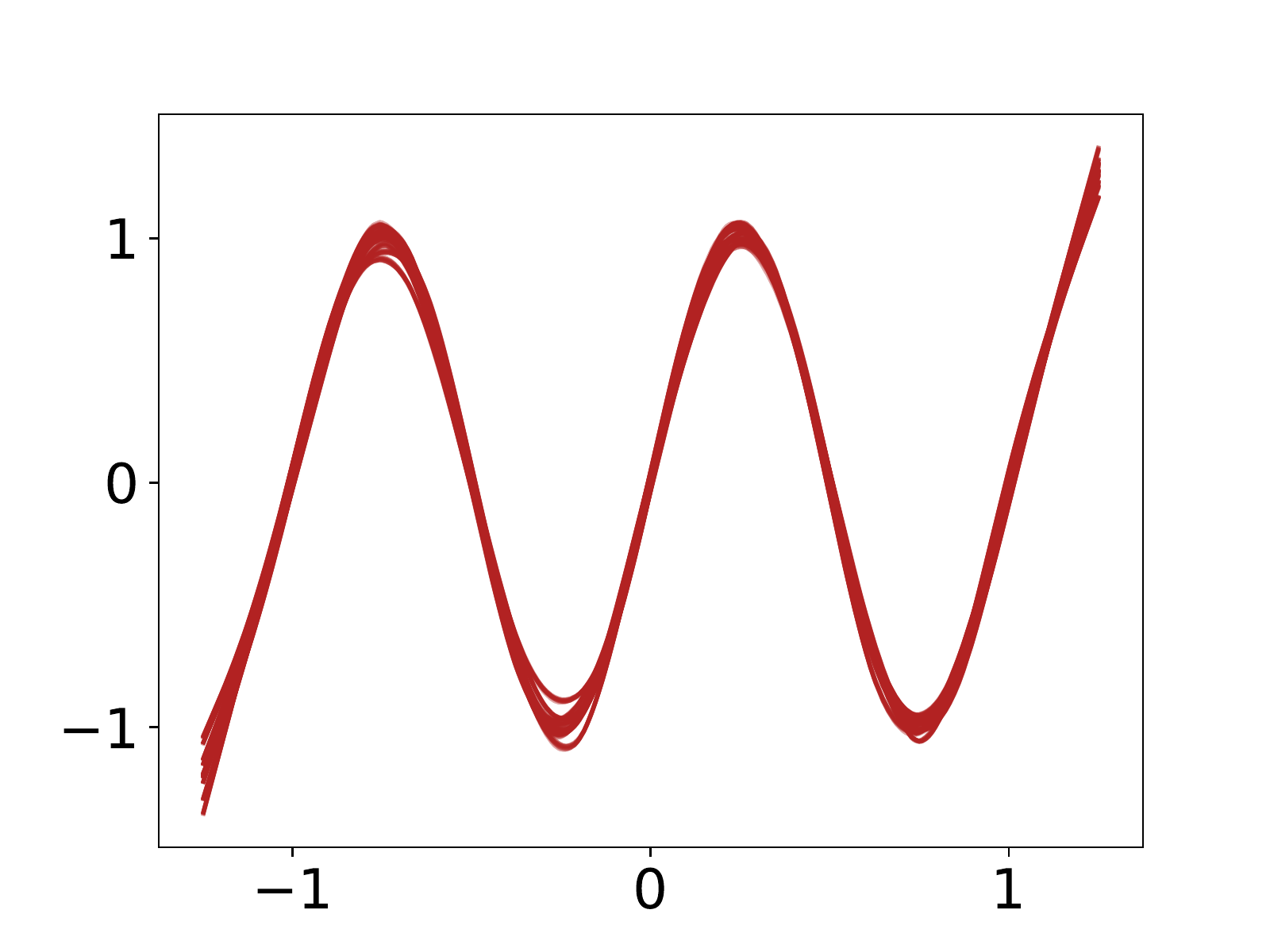}
    }
    \caption[Visualization of low variance functions learned by large networks]{Visualization of the 100 different learned functions of single hidden layer neural networks of widths 15, 1000, and 10000 (from left to right) on the task of learning a sinusoid. The learned functions are increasingly similar with width, suggesting decreasing variance.
    More in \cref{app:sinusoid_regression}.}
    \label{fig:sinusoid}
\end{figure}

\subsection{Testing the limits: decreasing variance in the small data setting}
\label{sec:small_data}

Decreasing the size of the dataset can only increase variance. To study the robustness of the above observation, we decrease the size of the training set to just 100 examples. In this small data setting, somewhat surprisingly, we still see that \emph{both} bias \emph{and} variance decrease with width (\cref{fig:small_data_tuned_bv}). The test error behaves similarly (\cref{fig:small_data_tuned_error}). Because performance is more sensitive to step size in the small data setting, the step size for each network size is tuned using a validation set (see \cref{app:tuned_lr} for step sizes). The training for tuning is stopped after 1000 epochs, whereas the training for the final models is stopped after 10000 epochs.  Note that because we see decreasing bias with width, effective capacity is, indeed, increasing while variance is decreasing.

One control that motivates the experimental design choice of optimal step size is that it leads to the conventional decreasing bias trend (\cref{fig:small_data_tuned_bv}) that indicates increasing effective capacity. In fact, in the corresponding experiment where step size is the same 0.01 for all network sizes, we do not see monotonically decreasing bias (\cref{app:fixed_lr}). 

This sensitivity to step size in the small data setting is evidence that we are testing the limits of our hypothesis. By looking at the small data setting, we are able to test our hypothesis when the ratio of size of network to dataset size is quite large, and we still find this decreasing trend in variance (\cref{fig:small_data_tuned_bv}). 

To see how dependent this phenomenon is on SGD, we also run these experiments using batch gradient descent and PyTorch's version of LBFGS. Interestingly, we find a decreasing variance trend with those optimizers as well. These experiments are included in \cref{app:other_optimizers}.

\subsection{Decoupling variance due to sampling from variance due to optimization}
\label{sec:width_var_decoupling}

In order to better understand this variance phenomenon in neural networks, we separate the variance due to sampling from the variance due to optimization, according to the law of total variance (\autoref{eqn:total-variance}). Contrary to what traditional bias-variance tradeoff intuition would suggest, we find variance due to sampling increases slowly and levels off, once sufficiently over-parameterized (\cref{fig:all_variances}). Furthermore, we find that variance due to optimization decreases with width, causing the total variance to decrease with width (\cref{fig:all_variances}).

A body of recent work has provided evidence that over-parameterization (in width) helps gradient descent optimize to global minima in neural networks \citep{globalminima_iclr2019, pmlr-v80-du18a, DBLP:journals/corr/SoltanolkotabiJ17, NIPS2014_5267, DBLP:journals/corr/abs-1801-02254}. Always reaching a global minimum implies low variance due to optimization on the \textit{training set}. Our observation of decreasing variance on the \textit{test set} shows that the over-parameterization (in width) effect on optimization seems to extend to generalization, on the data sets we consider.

\subsection{Visualization with regression on sinusoid}

We trained different width neural networks on a noisy sinusoidal distribution with 80 independent training examples. This sinusoid regression setting also exhibits the familiar bias-variance trends (\cref{fig:sinusoid_bv}) and trends of the two components of the variance and the test error (\cref{fig:sinusoid_curves_app} of \cref{app:sinusoid_regression}).

Because this setting is low-dimensional, we can visualize the learned functions. The classic caricature of high capacity models is that they fit the training data in a very erratic way (example in \cref{fig:high_var_caricature} of \cref{app:sinusoid_regression}). We find that wider networks learn sinusoidal functions that are much more similar than the functions learned by their narrower counterparts (\cref{fig:sinusoid}). We have analogous plots for all of the other widths and ones that visualize the variance similar to how it is commonly visualized for Gaussian processes in \cref{app:sinusoid_regression}.

\section{Discussion and theoretical insights}
\label{sec:theory}

Our empirical results demonstrate that in the practical setting, variance due to optimization decreases with network width while variance due to sampling increases slowly and levels off once sufficiently over-parameterized. In \cref{sec:linear_models}, we discuss the simple case of linear models and point out that non-increasing variance can already be seen in the over-parameterized setting. In \cref{sec:back-to-nn} we take inspiration from linear models to provide arguments for the behavior of variance in increasingly wide neural networks, and we discuss the assumptions we make.

\subsection{Insights from linear models}
\label{sec:linear_models}

In this section, we review the classic result that the variance of a linear model grows with the number of parameters \citep[Section 7.3]{hastie_09} and point out that variance behaves differently in the over-parameterized setting.

We consider least-squares linear regression in a standard setting which assumes a noisy linear mapping $y = \theta^T x + \epsilon$ between input feature vectors $x\in \R^N$ and real outputs, where $\epsilon$ denotes the noise random variable with $\E[\epsilon] = 0$ and $\Var(\epsilon) = \sigma_{\epsilon}^2$. In this context, the over-parameterized setting is when the dimension $N$ of the input space is larger than the number $m$ of examples.

Let $X$ denote the $m \times N$ design matrix whose $i$\textsuperscript{th} row is the training point $x_i^T$, let $Y$ denote the corresponding labels, and let $\Sigma = X^T X$ denote the empirical covariance matrix.  
We consider the fixed-design setting where $X$ is fixed, so all of the randomness due to data sampling comes solely from $\epsilon$. $\A$ learns weights $\hat{\theta}$ from $(X, Y)$, either by a closed-form solution or by gradient descent, using a standard initialization $\theta_0 \sim \mathcal{N}(0, \frac{1}{N} I)$. The predictor makes a prediction on $x \sim \D$: $h(x) = \hat{\theta}^T x$. Then, the quantity we care about is $\E_x \Var(h(x))$.

\subsubsection{Under-parameterized setting}

The case where $N \leq m$ is standard: if  $X$ has maximal rank, $\Sigma$ is invertible; the solution is independent of the initialization and given by $\hat{\theta} = \Sigma^{-1} X^T Y$. All of the variance is a result of randomness in the noise $\epsilon$.  For a fixed $x$,
\begin{equation}
    \Var(h(x)) = \sigma_\epsilon^2 \Tr(x x^T \Sigma^{-1}) \, .
\end{equation}
This grows with the number of parameters $N$. For example, taking the expected value over the empirical distribution, $\hat{p}$, of the sample, we recover that the variance grows with $N$:
\begin{equation}
    \E_{x \sim \hat{p}} [\Var(h(x))] = \frac{N}{m} \sigma_\epsilon^2 \, .
\end{equation}
We provide a reproduction of the proofs in \cref{app:linear_underparam}.

\subsubsection{Over-parameterized setting}

The over-parameterized case where $N > m$ is more interesting: even if $X$ has maximal rank,  $\Sigma$ is not invertible. This leads to a subspace of solutions, but gradient descent yields a unique solution from updates that belong to the span of the training points $x_i$ (row space of $X$) \citep{lecun1991}, which is of dimension $r = \rank(X) = \rank(\Sigma)$. Correspondingly, no learning occurs in the null space of $X$, which is of dimension $N - r$. 
Therefore, gradient descent yields the solution that is closest to initialization: $\hat{\theta} = P_\perp(\theta_0) + \Sigma^+ X^T Y$, where $P_{\perp}$ projects onto the null space of $X$ and $+$ denotes the Moore-Penrose inverse. 

The variance has two contributions: one due to initialization and one due to sampling (here, the noise $\epsilon$), as in \cref{eqn:total-variance}. These are made explicit in \cref{prop:linear_model}.  
\begin{prop}[Variance in over-parameterized linear models]
\label{prop:linear_model}
Consider the over-parameterized setting where $N > m$.  For a fixed $x$, the variance decomposition of \cref{eqn:total-variance} yields
\beq \label{eq:over_lin_variance}
\Var(h(x))=  \frac{1}{N}\|P_\perp(x)\|^2 + \sigma_\epsilon^2 \Tr(x x^T \Sigma^+) \, . \eeq 
\end{prop}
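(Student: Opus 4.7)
The plan is to plug the explicit GD solution into the law of total variance, identifying the optimization randomness $O$ with the initialization $\theta_0$ and the sampling randomness $S$ with the label noise $\epsilon$, and then separately computing the two conditional variances.

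First I would start from the closed form already given in the text just above the proposition,
\[
\hat{\theta} = P_\perp(\theta_0) + \Sigma^+ X^T Y,
\]
and rewrite the prediction at a fixed test point $x$ as
\[
h(x) = \hat{\theta}^T x = \theta_0^T P_\perp(x) + Y^T X \Sigma^+ x,
\]
using that $P_\perp$ is a symmetric projection so $(P_\perp \theta_0)^T x = \theta_0^T P_\perp(x)$. This cleanly separates the two sources of randomness: the first summand depends only on $\theta_0$ (the $O$-variable), the second only on $Y = X\theta^* + \epsilon$ (the $S$-variable through $\epsilon$, since $X$ is fixed and $\theta^*$ is deterministic).

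Next I would apply \cref{eqn:total-variance} with this conditioning. For the inner term $\E_S[\Var_O(h(x)\mid S)]$, conditioning on $S$ freezes the second summand, and since $\theta_0 \sim \mathcal{N}(0, \tfrac{1}{N}I)$,
\[
\Var_O(h(x) \mid S) = \Var(\theta_0^T P_\perp(x)) = \tfrac{1}{N}\|P_\perp(x)\|^2,
\]
which is constant in $S$, so the outer expectation is trivial. For the outer term $\Var_S(\E_O[h(x)\mid S])$, taking $\E_O$ kills the first summand (mean-zero initialization), leaving $Y^T X \Sigma^+ x$. Substituting $Y = X\theta^* + \epsilon$, the deterministic part $\theta^{*T} X^T X \Sigma^+ x$ drops out of the variance, and
\[
\Var_S(Y^T X \Sigma^+ x) = \Var(\epsilon^T X \Sigma^+ x) = \sigma_\epsilon^2\, x^T \Sigma^+ X^T X \Sigma^+ x = \sigma_\epsilon^2\, x^T \Sigma^+ x,
\]
where the last equality uses the Moore--Penrose identity $\Sigma^+ \Sigma \Sigma^+ = \Sigma^+$. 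Rewriting $x^T \Sigma^+ x = \Tr(xx^T \Sigma^+)$ and summing the two contributions yields \cref{eq:over_lin_variance}.

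The only subtle step is the decomposition of $\hat\theta$ itself, namely that GD starting from $\theta_0$ does not alter the null-space component of the initialization (so it contributes exactly $P_\perp(\theta_0)$) and converges in the row space to the minimum-norm solution $\Sigma^+ X^T Y$. This is precisely the fact cited from \citet{lecun1991} and used to state the closed-form solution immediately before the proposition, so I would just invoke it rather than reprove it; everything else is a short second-moment calculation.
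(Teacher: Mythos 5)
Your proposal is correct and follows essentially the same route as the paper's proof in \cref{app:linear_overparam}: both start from the gradient-descent solution $\hat{\theta} = P_\perp(\theta_0) + \Sigma^+ X^T Y$, apply the law of total variance with $O = \theta_0$ and $S = \epsilon$, compute the initialization term as $\Var_{\theta_0}(P_\perp(x)^T\theta_0) = \tfrac{1}{N}\|P_\perp(x)\|^2$, and reduce the sampling term to the under-parameterized calculation with $\Sigma^{-1}$ replaced by $\Sigma^+$ (your explicit use of $\Sigma^+\Sigma\Sigma^+ = \Sigma^+$ is the same identity the paper invokes implicitly). No gaps; the only shared dependency is the cited closed form of the GD solution, which both you and the paper take as given.
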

This does not grow with the number of parameters $N$. In fact,  because $\Sigma^{-1}$ is replaced with $\Sigma^+$, the variance \emph{scales as the dimension of 
the data} (i.e the rank of $X$), as opposed to the number of parameters. For example, taking the expected value over the empirical distribution, $\hat{p}$, of the sample, we obtain
\begin{equation}
    \E_{x \sim \hat{p}} [\Var(h(x))] = \frac{r}{m} \sigma_\epsilon^2 \, ,
\end{equation}
where $r = \rank(X)$.
We provide the proofs for over-parameterized linear models in \cref{app:linear_overparam}.

\subsection{A more general result} 

\label{sec:back-to-nn}

We will illustrate our arguments in the following simplified setting, where $\mathcal{M}$, $\mathcal{M}^\perp$, and $d(N)$ are the more general analogs of $\rowspace(X)$, $\nullspace(X)$, and $r$ (respectively):

\textbf{Setting. }
Let $N$ be the dimension of the parameter space. 
The prediction for a fixed example $x$, given by a trained network parameterized by $\theta$ depends on:

(i) a subspace of the parameter space, $\mathcal{M} \in \mathbb{R}^N$ with relatively small dimension, $d(N)$,  which depends only on the learning task. 

(ii) parameter components corresponding to directions orthogonal to $\mathcal{M}$. The orthogonal $\mathcal{M}^\perp$ of $\mathcal{M}$ has dimension, $N-d(N)$, and is essentially irrelevant to the learning task.

We can write the parameter vector as a sum of these two components $    \theta = \theta_\mathcal{M} + \theta_{\mathcal{M}^\perp}
$. We will further make the following assumptions.
\vspace{-0.1cm}
\begin{enumerate}[label=\textbf{Assumption \arabic*}, labelindent=0pt, wide, labelwidth=!]
\item
\label{assum:opt_invariant}
The optimization of the loss function is invariant with respect to $\theta_{\mathcal{M}\perp}$.
\item
Regardless of initialization, the optimization method consistently yields a solution with the same $\theta_\mathcal{M}$ component (i.e.\ the same vector when projected onto $\mathcal{M}$).
\end{enumerate}

\subsubsection{Variance due to initialization}

\label{sec:variance-from-optimization}

Given the above assumptions, the following result shows that the variance from initialization\footnote{Among the different sources of optimization randomness, we focus on randomness from initialization and do not focus on randomness from stochastic mini-batching because we found the phenomenon of decreasing variance with width persists when using \textit{batch} gradient descent (\cref{sec:small_data}, \cref{app:other_optimizers}).} vanishes as we increase $N$. The full proof, which builds on concentration results for Gaussians (based on Levy's lemma \citep{ledoux2001concentration}), is given in \cref{app:more_general_setting}.

\newcommand{\thmp}{\theta_{\mathcal{M}^\perp}}

\begin{theorem}[Decay of variance due to initialization] \label{thm:init-var-decay}
Consider the setting of Section~\ref{sec:back-to-nn} 
Let $\theta$ denote the parameters at the end of the learning process.
Then, for a fixed data set and parameters  initialized as $\theta_0 \sim 
    \mathcal{N}(0, \frac{1}{N} I)$, the variance of the prediction satisfies the inequality, 
 \beq 
  \mbox{Var}_{\theta_0}(h_\theta(x)) \leq C \frac{2 L^2}{N}
 \eeq
where $L$ is the Lipschitz constant of the prediction with respect to $\theta$, and for some universal constant $C >O$. 
\end{theorem}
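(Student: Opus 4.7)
The plan is to use the two assumptions to reduce the final-prediction variance to the variance of a Lipschitz function of a low-variance Gaussian, and then close with a standard Gaussian concentration bound (the Lévy-style inequality mentioned in the hint).

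First I would peel off the deterministic part of $\theta$. Write the trained parameter as $\theta = \theta_\mathcal{M} + \theta_{\mathcal{M}^\perp}$. By Assumption 2, $\theta_\mathcal{M}$ is the same for every realization of $\theta_0$, so it is deterministic and contributes nothing to $\mathrm{Var}_{\theta_0}(h_\theta(x))$. By Assumption 1, the loss is invariant along $\mathcal{M}^\perp$, so the gradient-based optimizer never moves that component; therefore $\theta_{\mathcal{M}^\perp}$ at the end of training equals $(\theta_0)_{\mathcal{M}^\perp}$. Since $\theta_0 \sim \mathcal{N}(0, \tfrac{1}{N} I_N)$ and $\mathcal{M}^\perp$ is a linear subspace of dimension $N - d(N)$, the random object governing all the variance is $Z := (\theta_0)_{\mathcal{M}^\perp} \sim \mathcal{N}(0, \tfrac{1}{N} I_{\mathcal{M}^\perp})$.

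Next I would reduce the prediction to a Lipschitz function of $Z$. Define $g : \mathcal{M}^\perp \to \mathbb{R}$ by $g(v) := h_{\theta_\mathcal{M} + v}(x)$. Because $h_\theta(x)$ is $L$-Lipschitz in $\theta$, and because fixing $\theta_\mathcal{M}$ and varying the $\mathcal{M}^\perp$-component of $\theta$ is an isometric embedding of $\mathcal{M}^\perp$ into parameter space, $g$ is $L$-Lipschitz on $\mathcal{M}^\perp$. Hence
\[
\mathrm{Var}_{\theta_0}(h_\theta(x)) = \mathrm{Var}_{Z}(g(Z)),
\]
with $Z$ an isotropic Gaussian of variance $1/N$ per coordinate on a Euclidean subspace.

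The final step is a standard Gaussian concentration estimate. For a standard Gaussian $W$ on $\mathbb{R}^k$ and an $L$-Lipschitz $f$, the Gaussian concentration inequality (a consequence of Lévy's lemma / the Gaussian isoperimetric inequality) gives $\mathbb{P}(|f(W) - \mathbb{E} f(W)| \ge t) \le 2 \exp(-t^2/(2L^2))$. Rescaling $W$ to have covariance $\tfrac{1}{N} I$ replaces $L$ by $L/\sqrt{N}$, so applied to $g$ and $Z$ we get a sub-Gaussian tail with parameter $L^2/N$. Integrating the tail, $\mathrm{Var}(g(Z)) \le C\,\frac{2L^2}{N}$ for a universal constant $C>0$, which is exactly the claimed bound.

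The main obstacle I anticipate is purely bookkeeping rather than substance: one has to be careful that (i) Assumption 1 really forces $\theta_{\mathcal{M}^\perp}$ to be frozen at initialization for the optimizer actually used (plain gradient flow certainly works; for SGD one needs the mini-batch losses, not only the population loss, to be $\mathcal{M}^\perp$-invariant), and (ii) the projection of $\mathcal{N}(0, \tfrac{1}{N} I_N)$ onto $\mathcal{M}^\perp$ is genuinely a spherically symmetric Gaussian on that subspace, which holds because the covariance is a multiple of the identity and $\mathcal{M}^\perp$ is a linear subspace. Once those hygiene checks are in place, everything reduces to the one-line invocation of Gaussian Lipschitz concentration, and the $1/N$ scaling falls out of the initialization variance.
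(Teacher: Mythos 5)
Your proposal is correct and follows essentially the same route as the paper's own proof: decompose $\theta$ into a deterministic $\mathcal{M}$-component and the Gaussian $\mathcal{M}^\perp$-component frozen at initialization, view the prediction as an $L$-Lipschitz function of that $\mathcal{N}(0,\tfrac{1}{N}I)$ variable, and convert Gaussian Lipschitz concentration into a variance bound by integrating the tail. The hygiene checks you flag (the optimizer not moving the $\mathcal{M}^\perp$-component, and the subspace marginal of an isotropic Gaussian being isotropic) are exactly the points the paper handles implicitly via its Assumptions, so nothing is missing.
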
 
This result guarantees that the variance decreases to zero as $N$ increases, provided 
the Lipschitz constant $L$ grows more slowly than the square root of dimension,
$L=o(\sqrt{N})$.

\subsubsection{Variance due to sampling}

Under the above assumptions,  the parameters at the end of learning take the form $\theta = \theta_\mathcal{M}^* + \theta_{0 \mathcal{M}^\perp}$.  For fixed initialization, the only source of variance of the prediction is the randomness of   $\theta_\mathcal{M}^*$ on the learning manifold. The variance depends on the parameter dimensionality only through 
\mbox{$\dim \mathcal{M} = d(N)$}, and hence remains constant if $d(N)$ does (see \citet{Li18IntDim}'s ``intrinsic dimension'').

\paragraph{Discussion on assumptions}
\label{sec:discussion_deep_net_assumptions}

We made strong assumptions,  but there is some support for them in the literature. The existence of a subspace $\mathcal{M}_\perp$ in which no learning occurs was also conjectured by \citet{Advani2017HighdimensionalDO} and  shown to hold in linear neural networks under a simplifying assumption that decouples the dynamics of the weights in different layers. \citet{Li18IntDim} 
empirically showed the existence of a critical number $d(N) = d$ of relevant parameters for a given learning task, independent of the  size of the model. 
\citet{Sagun17} showed that the spectrum of the Hessian for over-parameterized networks splits into $(i)$ a bulk centered near zero and $(ii)$ a small number of large eigenvalues; and  \citet{Gur-Ari2018} recently gave evidence that the small subspace spanned by the Hessian's top eigenvectors is preserved over long periods of training. These results suggest that learning occurs mainly in a small number of directions. 


\section{Conclusion and future work}


We provide evidence against \citet{geman}'s claim that ``the price to pay for achieving low bias is high variance,'' finding that \emph{both} bias \emph{and} variance decrease with network width. \citet{geman}'s claim is found throughout machine learning and is meant to generally apply to all of machine learning (\cref{app:intuitions}), and it is correct in many cases (e.g.\ kNN, kernel regression, splines). Is this lack of a tradeoff specific to neural networks or is it present in other models as well such as decision trees?

We propose a new decomposition of the variance, finding variance due to sampling (analog of regular variance in simple settings) does not appear to be dependent on width, once sufficiently over-parameterized, and that variance due to optimization decreases with width. 
By taking inspiration from linear models, we perform a theoretical analysis of the variance that is consistent with our empirical observations.

We view future work that uses the bias-variance lens as promising. For example, a probabilistic notion of effective capacity of a model is natural when studying generalization through this lens (\cref{app:prob_capacity}).  We did not study how bias and variance change over the course of training; that would make an interesting direction for future work. We also see further theoretical treatment of variance as a fruitful direction for better understanding complexity and generalization abilities of neural networks.

\subsection*{Acknowledgments}

We thank Yoshua Bengio, Lechao Xiao, Aaron Courville, Sharan Vaswani, Roman Novak, Xavier Bouthillier, Stanislaw Jastrzebski,  Gaetan Marceau Caron, Rémi Le Priol, Guillaume Lajoie, and Joseph Cohen for helpful discussions. Additionally, we thank SigOpt for access to their professional hyperparameter tuning services. This research was partially supported by the NSERC Discovery Grant (RGPIN-2017-06936 and RGPIN-2019-06512), by a Google Focused Research Award, the FRQNT nouveaux chercheurs program (2019-NC-257943), a startup grant by IVADO and the Canada CIFAR AI chair program.
We thank NVIDIA for donating a DGX-1 computer used in this work.

\chapter{Conclusion and Discussion}
\label{sec:conclusion}

It is time that the bias-variance tradeoff sections of textbooks are updated. We reviewed the history of the bias-variance tradeoff (including evidence for it) and its prevalence in textbooks in \cref{sec:why-we-believe-tradeoff}. We refuted \citet{geman}'s influential claims in \cref{sec:geman-refutation} by referencing recent measurements of bias and variance in neural networks (\cref{article:bias-variance}, \citet{neal2018}). We covered the emerging alternative hypothesis that the test error (risk) actually follows a ``double descent'' curve (as opposed to a U-shaped curve) in \cref{sec:double-descent}. Finally, we suggested specific changes to the bias-variance tradeoff section of textbooks in \cref{sec:need-to-qualify-claims}.

The specific changes can be simple qualifications. For example, the bias-variance decomposition (\cref{sec:bias-variance-decomposition}) is often used as evidence for the bias-variance tradeoff. However, this is misleading, unless one assumes fixed risk, which is often not the case when increasing model complexity. Additionally, though there seems to be clear evidence for the bias-variance tradeoff in many nonparametric methods (\cref{sec:bv-experimental-evidence}), we should not generalize this to all learning algorithms and assume that the bias-variance tradeoff is universal. These points can be easily clarified in teaching by clearly pointing out that the bias-variance decomposition does not imply a tradeoff and that seeing a tradeoff in certain models does not mean that we see a tradeoff in other models.

It remains to be seen whether there is a general shape of the risk curve that we should expect for all models as we increase model complexity. For now, it appears that there is not: we see clear U-shaped curves in some nonparametric methods such as KNN, kernel regression, and splines (\cref{sec:bv-experimental-evidence}), and we see clear double descent curves in neural networks \citep{deepdoubledescent,belkin2018,jamming,Geiger2019TheJT,Advani2017HighdimensionalDO}, with preliminary evidence of double descent curves in random forests \citep{belkin2018}.





\bibliographystyle{custom_icml2019}
\bibliography{bibliography}




\appendix

\chapter{Probabilistic notion of effective capacity}
\label{app:prob_capacity}

The problem with classical complexity measures is that they do not take into account optimization and have no notion of what will actually be learned. \citet[Section 1]{arpit17} define a notion of an \emph{effective} hypothesis class to take into account what functions are possible to be learned by the learning algorithm.

However, this still has the problem of not taking into account what hypotheses are \emph{likely} to be learned. To take into account the probabilistic nature of learning, we define the $\epsilon$-\textit{hypothesis class} for a data distribution $\D$ and learning algorithm $\A$, that contains the hypotheses which are at least $\epsilon$-likely for some $\epsilon > 0$:
\begin{equation}
    \mathcal{H}_{\D}(\A) = \{h : p(h(\A, S)) \geq \epsilon\},
    \label{eqn:def-eps-hypothesis-class}
\end{equation}

where $S$ is a training set drawn from $\D^m$, $h(\A, S)$ is a random variable drawn from the distribution over learned functions induced by $\D$ and the randomness in $\A$; $p$ is the corresponding density. Thinking about a model's $\epsilon$-hypothesis class can lead to drastically different intuitions for the complexity of a model and its variance (\cref{fig:var_spectrum}). This is at the core of the intuition for why the traditional view of bias-variance as a tradeoff does not hold in all cases.

\begin{figure}[h]
 \centering
 \includegraphics[width=.9\textwidth]{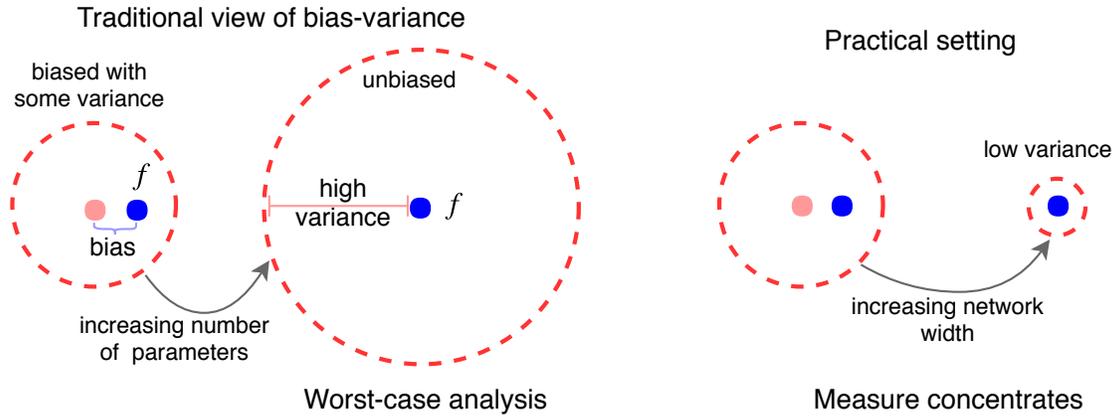}
 \caption[Illustration of probabilistic notion of effective capacity]{The dotted red circle depicts a cartoon version of the $\epsilon$-hypothesis class of the learner. The left side reflects common intuition, as informed by the bias-variance tradeoff and worst-case analysis from statistical learning theory. The right side reflects our view that variance can decrease with network width.}
 \label{fig:var_spectrum}
\end{figure}

\chapter{Additional empirical results and discussion}
\label{app:empirical}





\section{CIFAR10}
\label{app:CIFAR10_width}

\begin{figure}[ht]
    \centering
    \subfigure{
        \includegraphics[width=.475\textwidth]{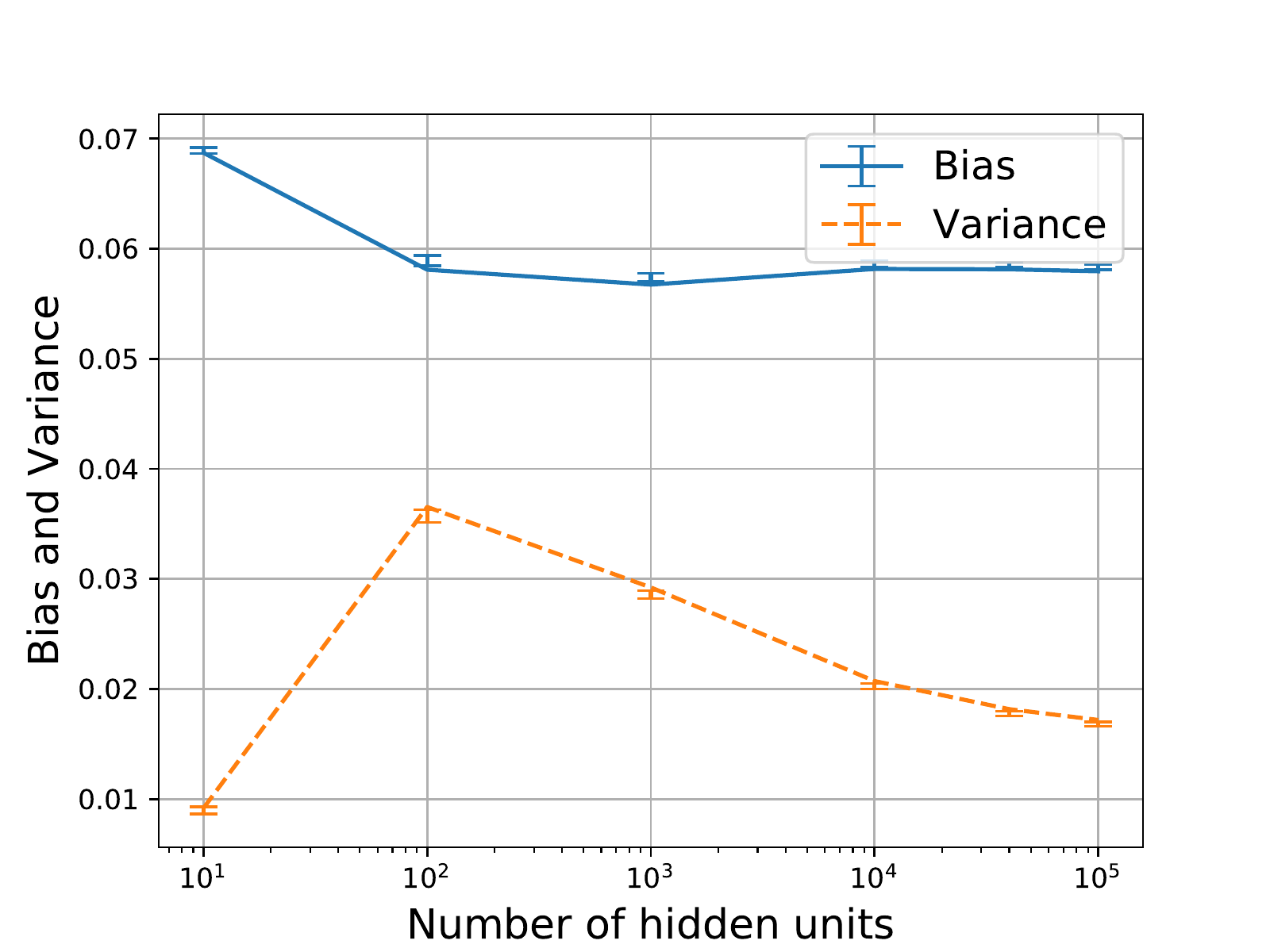}
    }
    \hfill
    \subfigure{
        \includegraphics[width=.475\textwidth]{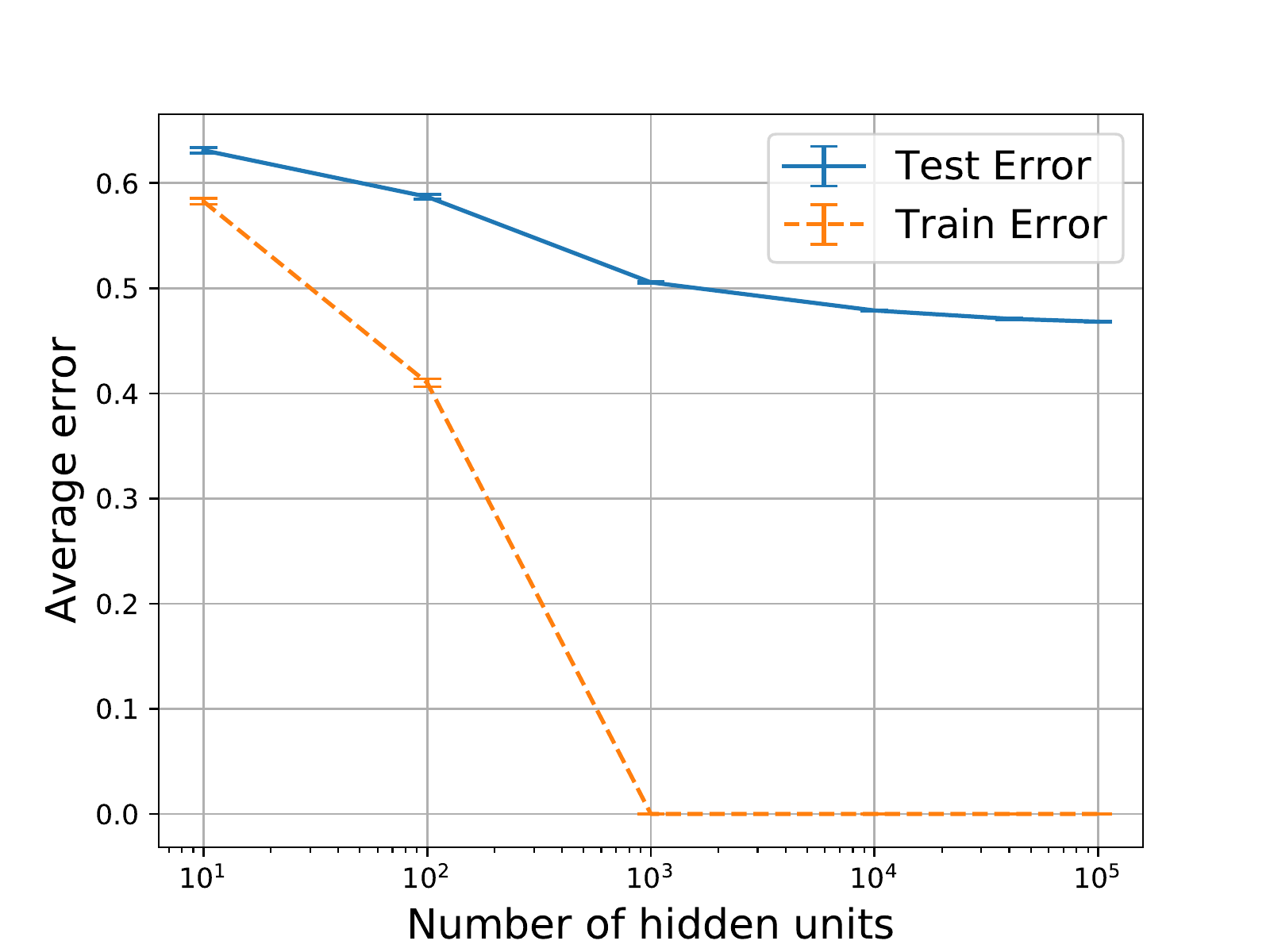}
    }
    \caption[CIFAR10: bias, variance, train error, and test error]{Bias-variance plot (left) and corresponding train and test error (right) for CIFAR10 after training for 150 epochs with step size 0.005 for all networks.}
\end{figure}

\begin{figure}[ht]
    \centering
    \subfigure{
        \includegraphics[width=.475\textwidth]{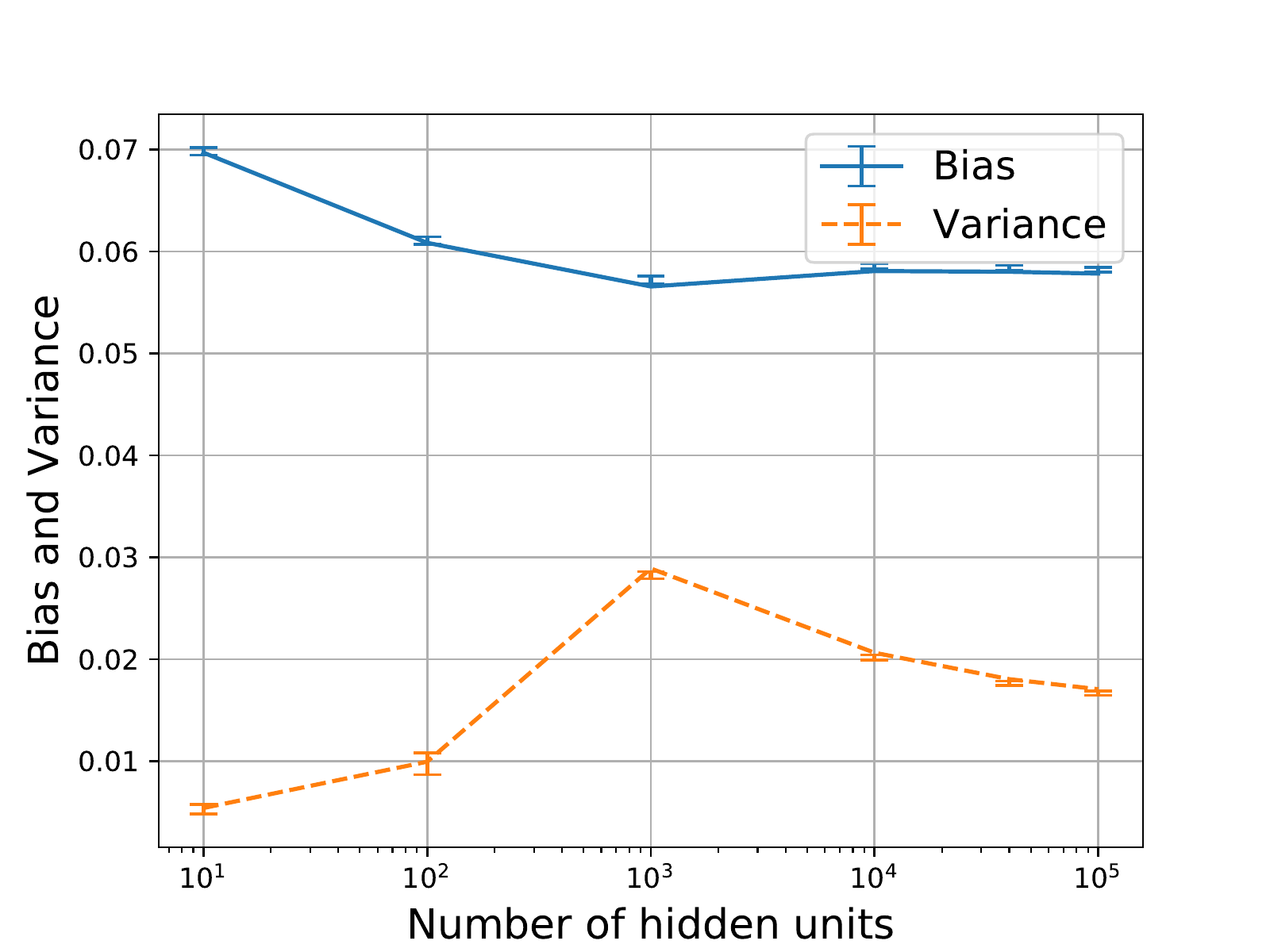}
    }
    \hfill
    \subfigure{
        \includegraphics[width=.475\textwidth]{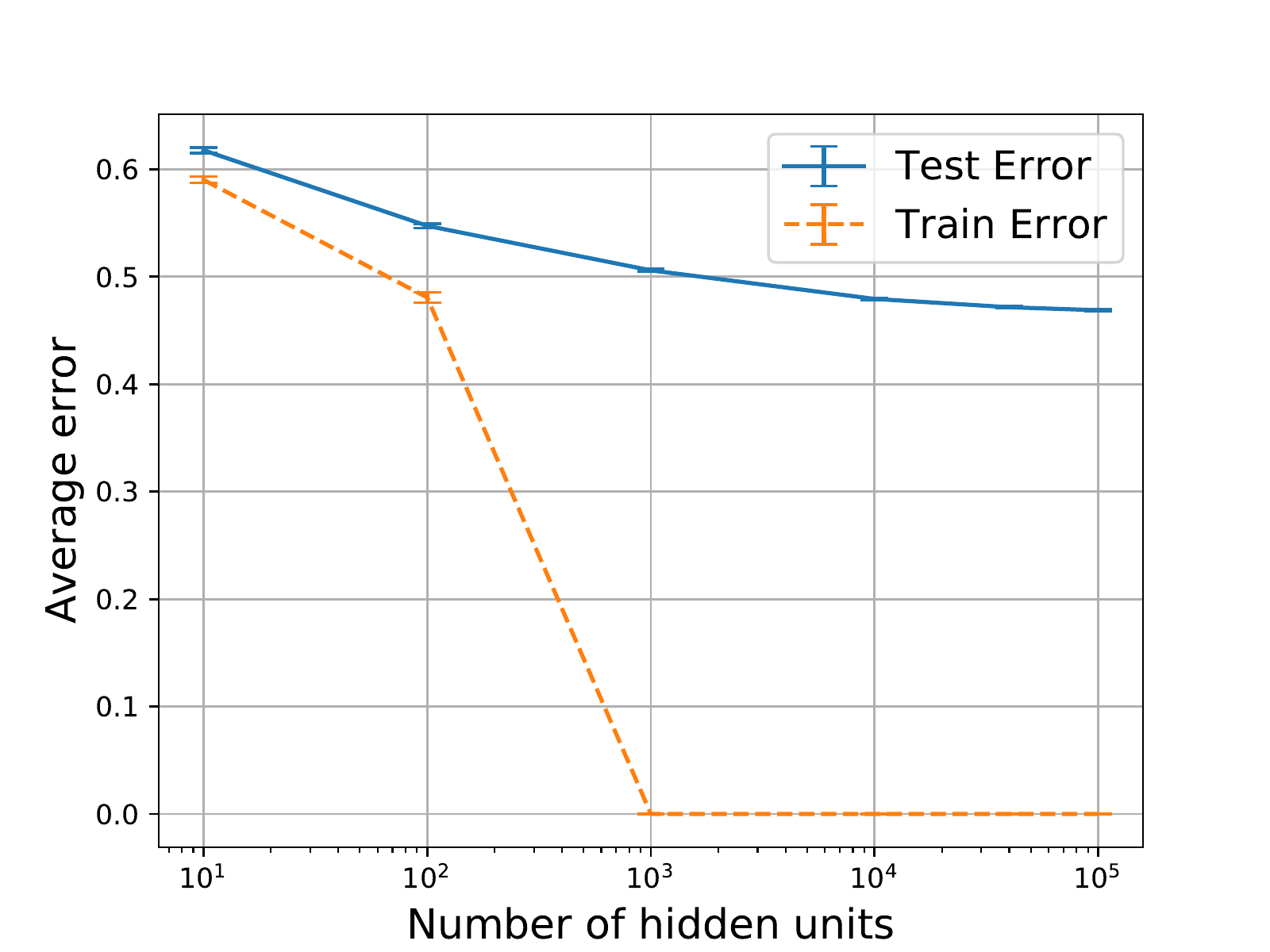}
    }
    \caption[CIFAR10 with early stopping: bias, variance, train error, and test error]{Bias-variance plot (left) and corresponding train and test error (right) for CIFAR10 after training for using \emph{early stopping} with step size 0.005 for all networks.}
\end{figure}

\section{SVHN}
\label{app:SVHN_width}

\begin{figure}[H]
    \centering
    \subfigure{
        \includegraphics[width=.475\textwidth]{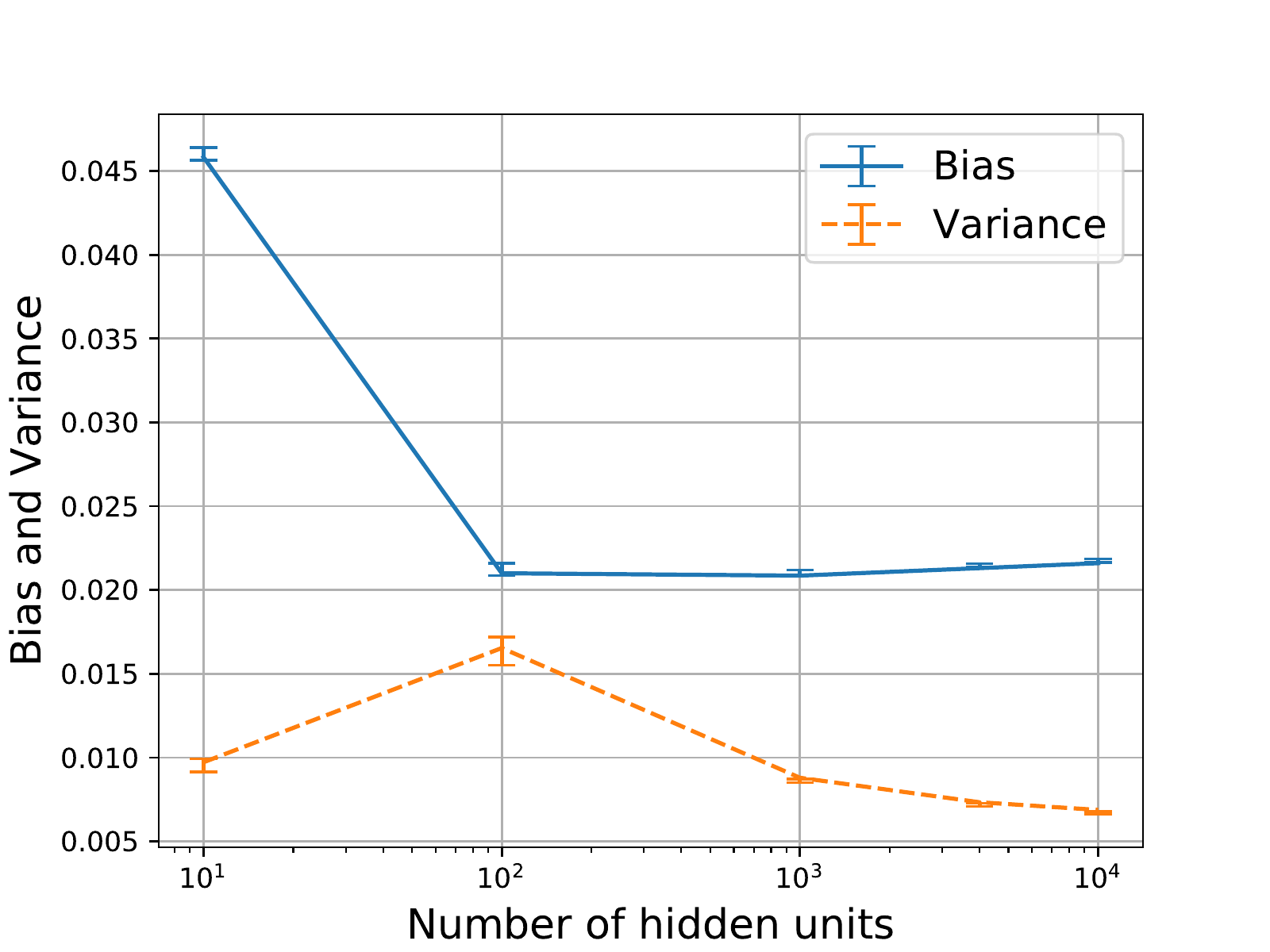}
    }
    \hfill
    \subfigure{
        \includegraphics[width=.475\textwidth]{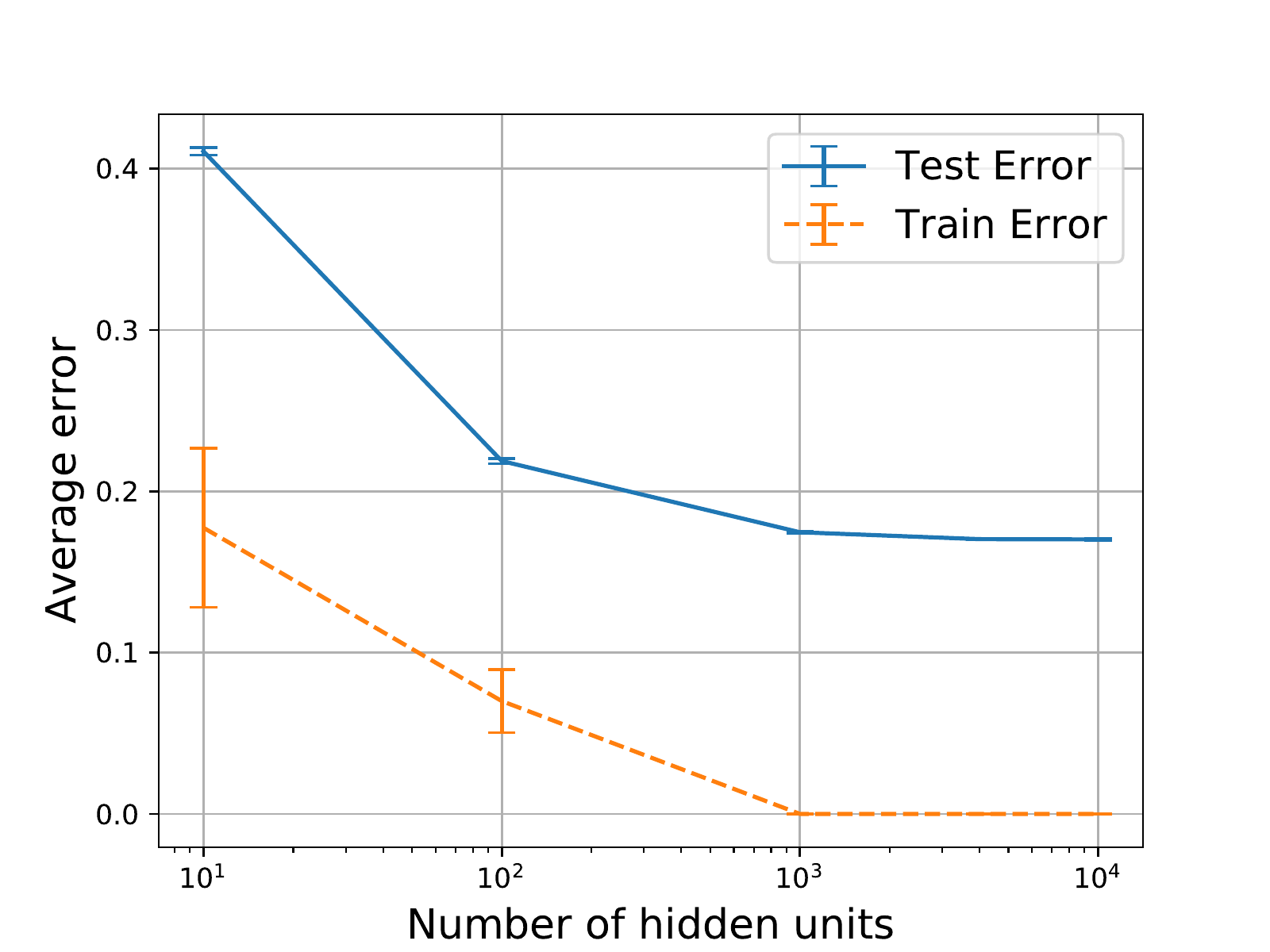}
    }
    \caption[SVHN: bias, variance, train error, and test error]{Bias-variance plot (left) and corresponding train and test error (right) for SVHN after training for 150 epochs with step size 0.005 for all networks.}
\end{figure}

\section{MNIST}
\label{app:MNIST_width}

\begin{figure}[H]
    \centering
    \subfigure{
        \includegraphics[width=.475\textwidth]{figures/full_data_width/MNIST/bias-variance_long}
    }
    \subfigure{
        \includegraphics[width=.475\textwidth]{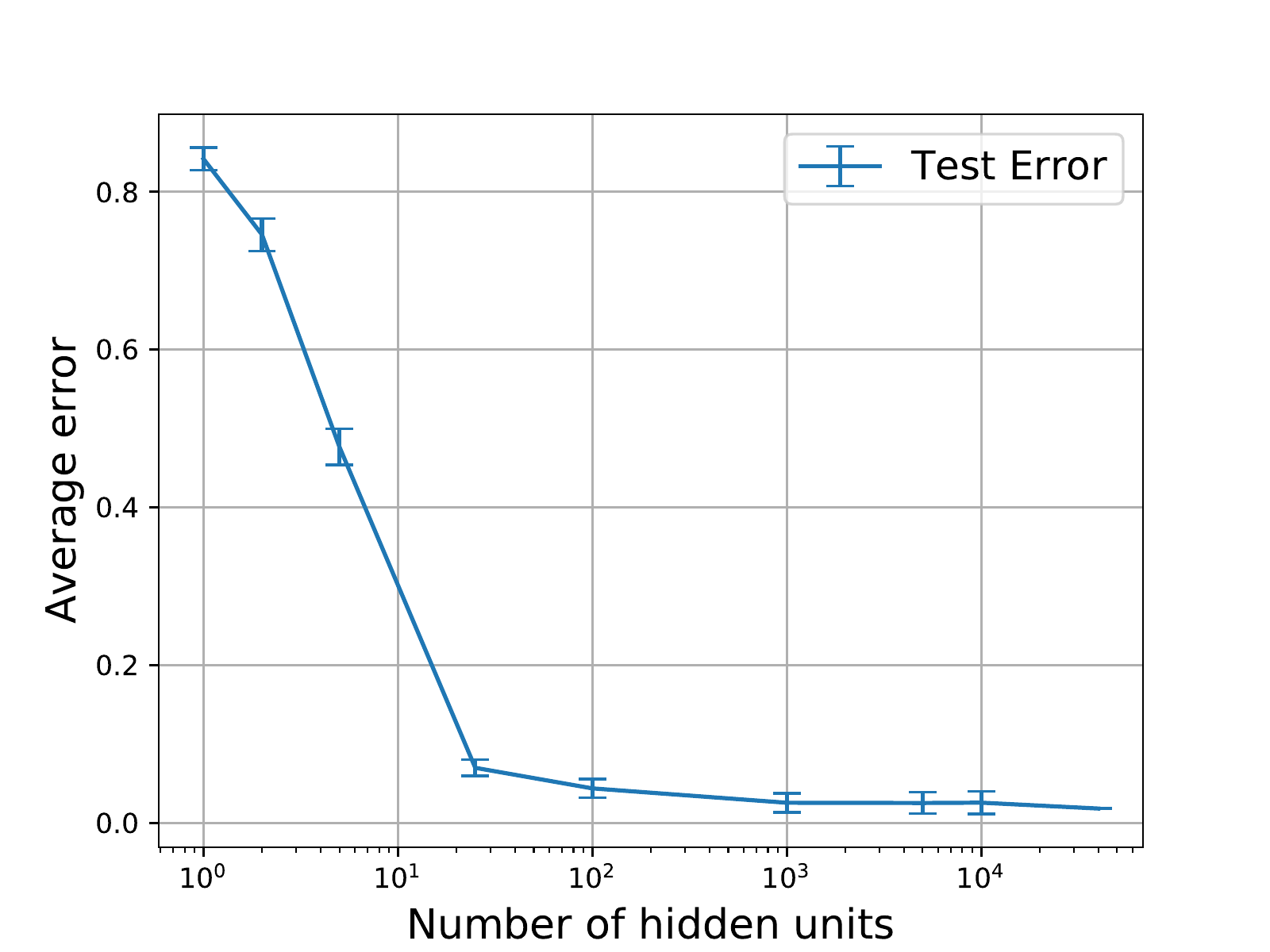}
    }
    \caption[MNIST: Test error, along with bias and variance]{MNIST bias-variance plot from main paper (left) next to the corresponding test error (right)}
\end{figure}

\begin{figure}[H]
    \centering
    \includegraphics[width=.6\textwidth]{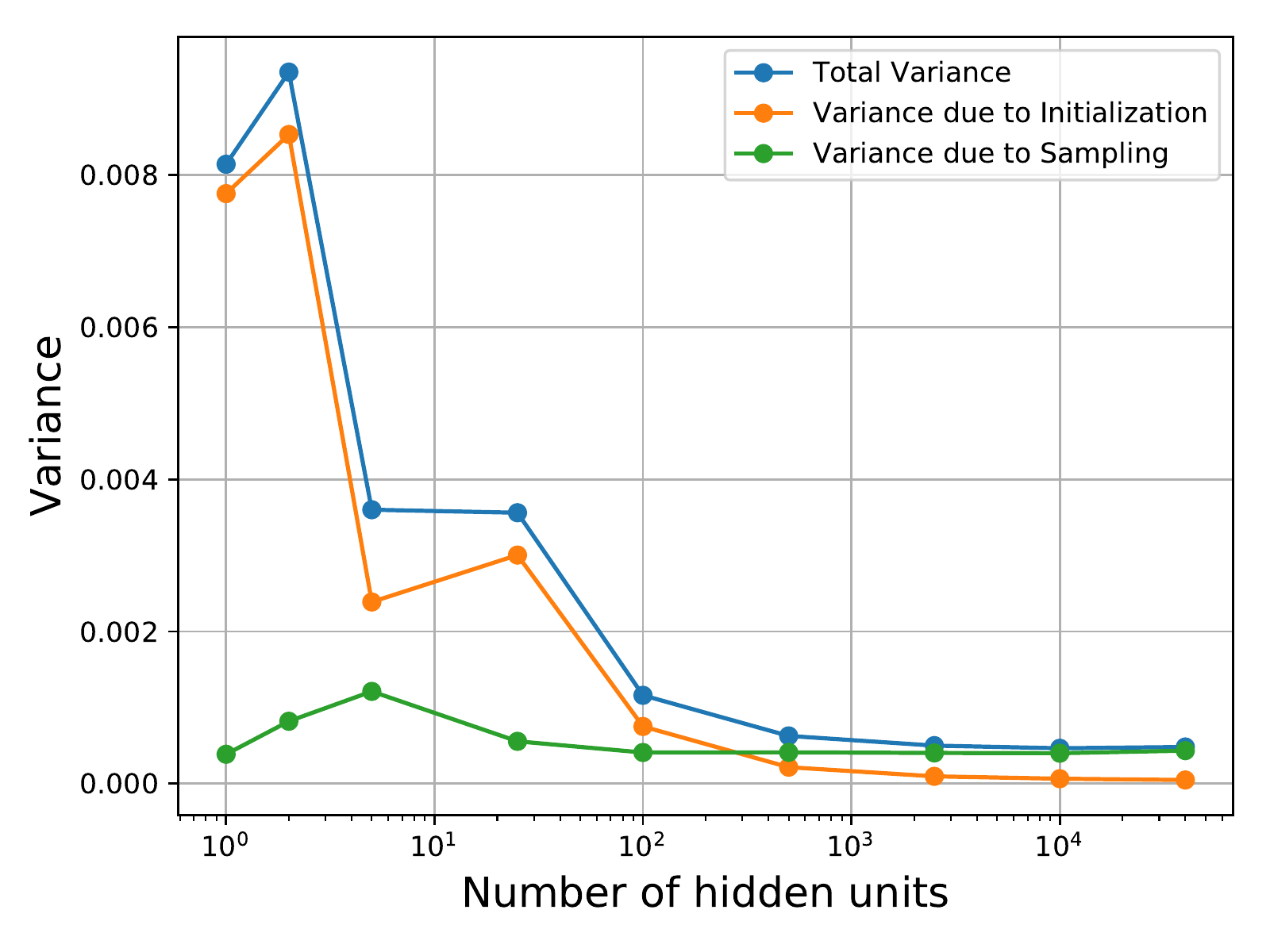}
    \caption{Decomposed variance on MNIST}
\end{figure}

\section{Tuned learning rates for SGD}
\label{app:tuned_lr}

\begin{figure}[H]
    \centering
    \subfigure[Variance decreases with width, even in the small data setting (SGD). This figure is in the main paper, but we include it here to compare with the corresponding step sizes used.]{
        \includegraphics[width=.475\textwidth]{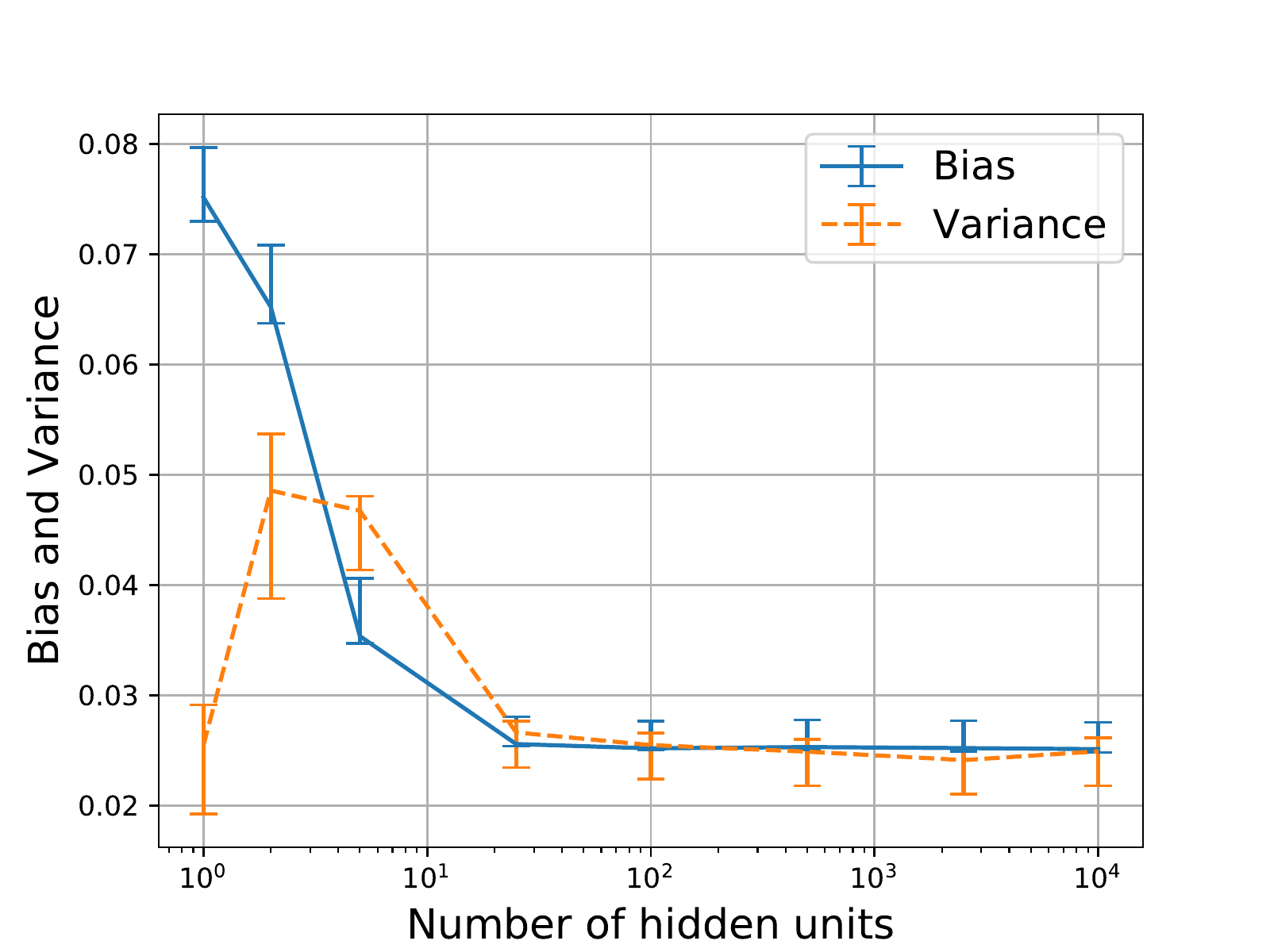}
    }
    \hfill
    \subfigure[Corresponding optimal learning rates found, by random search, and used.]{
        \includegraphics[width=.475\textwidth]{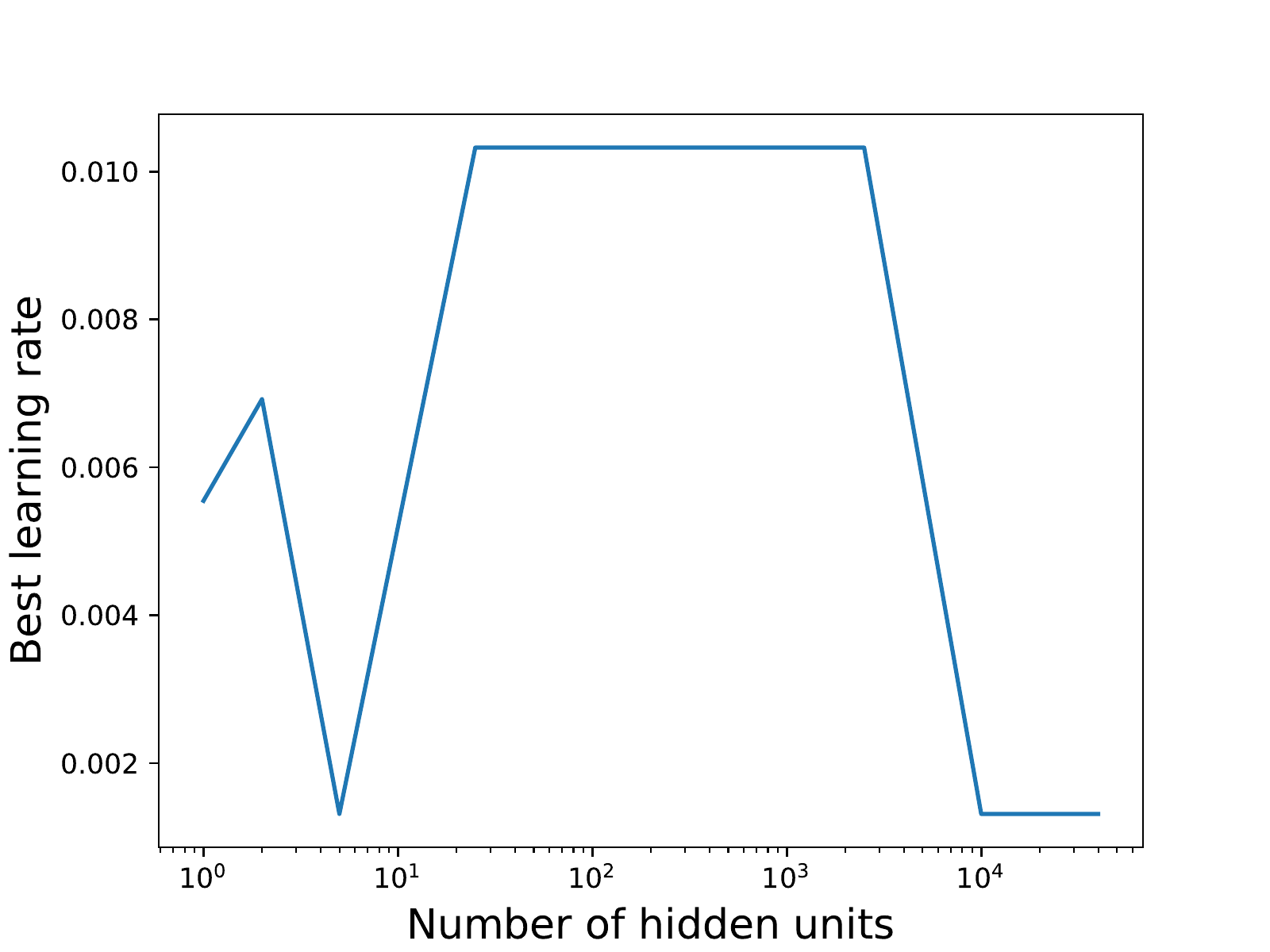}
    }
    \caption{Tuned learning rates used for small data MNIST}
\end{figure}

\section{Fixed learning rate results for small data MNIST}
\label{app:fixed_lr}

\begin{figure}[H]
    \centering
    \subfigure{
        \includegraphics[width=.475\textwidth]{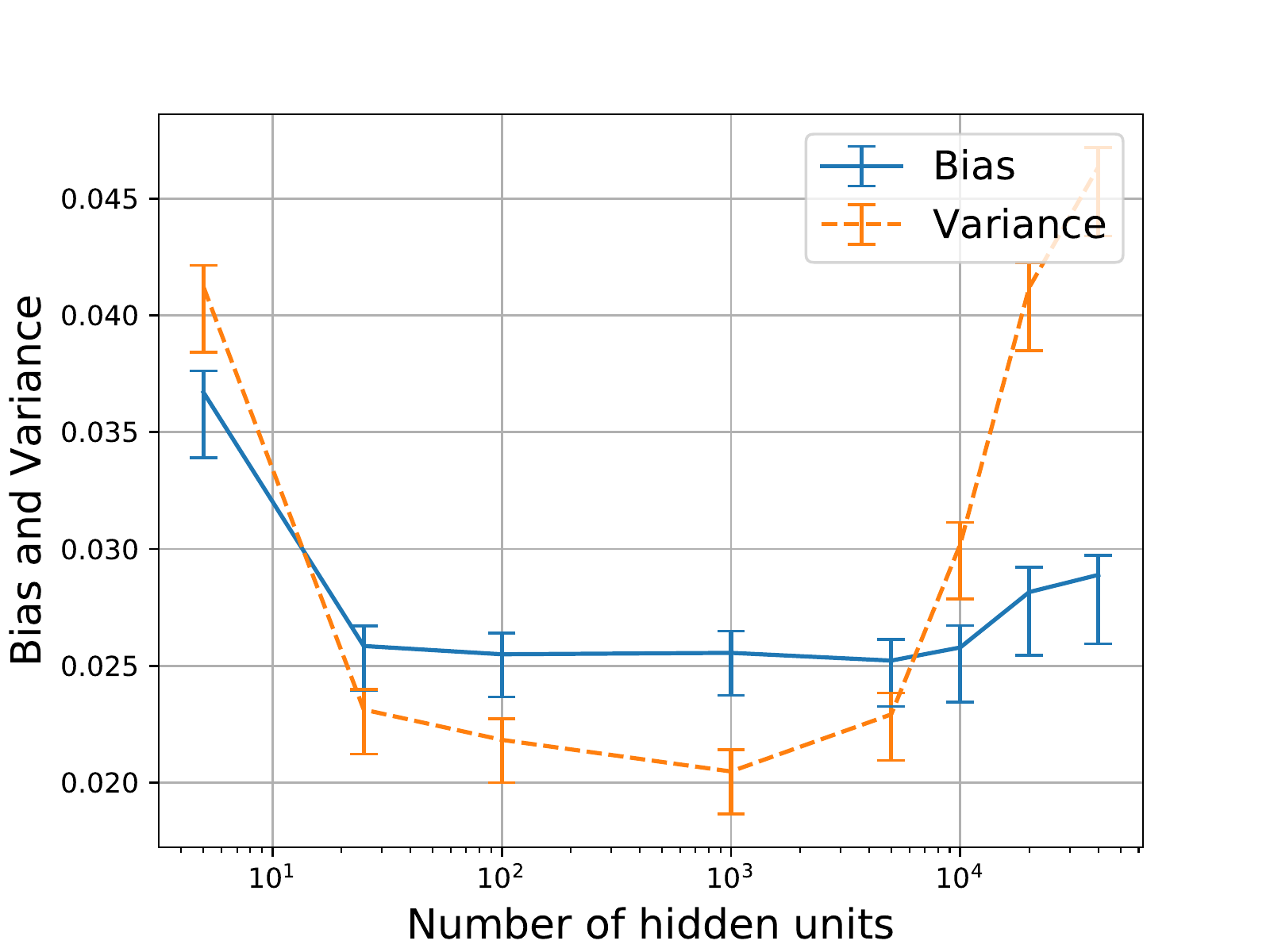}
    }
    \hfill
    \subfigure{
        \includegraphics[width=.475\textwidth]{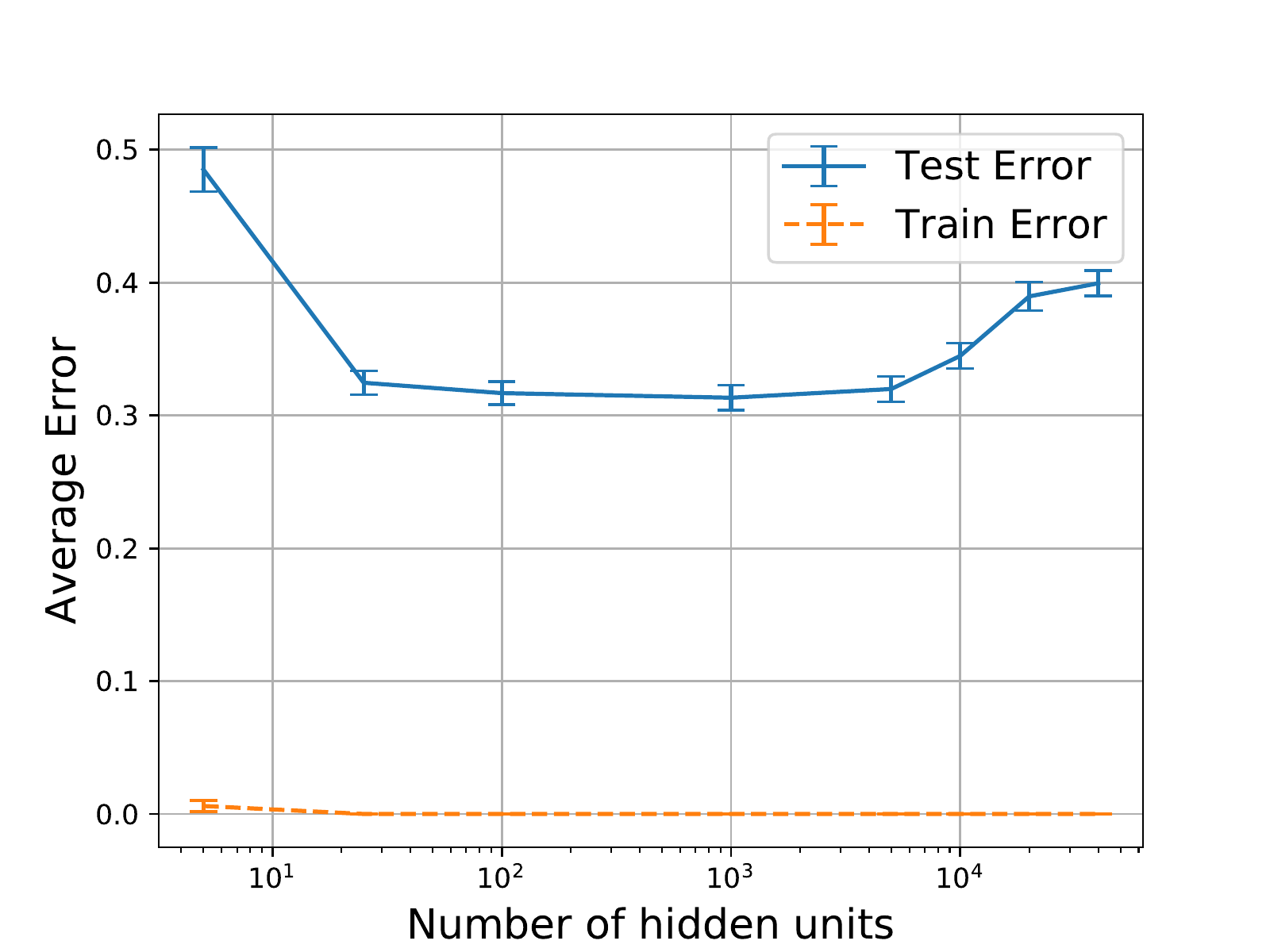}
    }
    \caption{Variance on small data with a fixed learning rate of 0.01 for all networks.}
    \label{fig:fixed_step_size}
\end{figure}

Note that the U curve shown in \cref{fig:fixed_step_size} when we do not tune the step size is explained by the fact that the constant step chosen is a ``good'' step size for some networks and ``bad'' for others. Results from \citet{Keskar2017}, \citet{smith2018}, and \citet{DBLP:journals/corr/abs-1711-04623} show that a step size that corresponds well to the noise structure in SGD is important for achieving good test set accuracy. Because our networks are different sizes, their stochastic optimization process will have a different landscape and noise structure. By tuning the step size, we are making the experimental design choice to keep \emph{optimality of step size} constant across networks, rather than keeping step size constant across networks. To us, choosing this control makes much more sense than choosing to control for step size. Note that \citet{park2019} show that, as long as the network size is not too big, controlling for ``optimality of step size'' and ``keeping step size constant'' with increasing network width actually correspond to the same thing, as long as the networks are not too big, which fits well with the fact that we used the same step size across network widths for almost all of our experiments.

\section{Other optimizers for width experiment on small data MNIST}
\label{app:other_optimizers}

\begin{figure}[H]
    \centering
    \subfigure{
        \includegraphics[width=.475\textwidth]{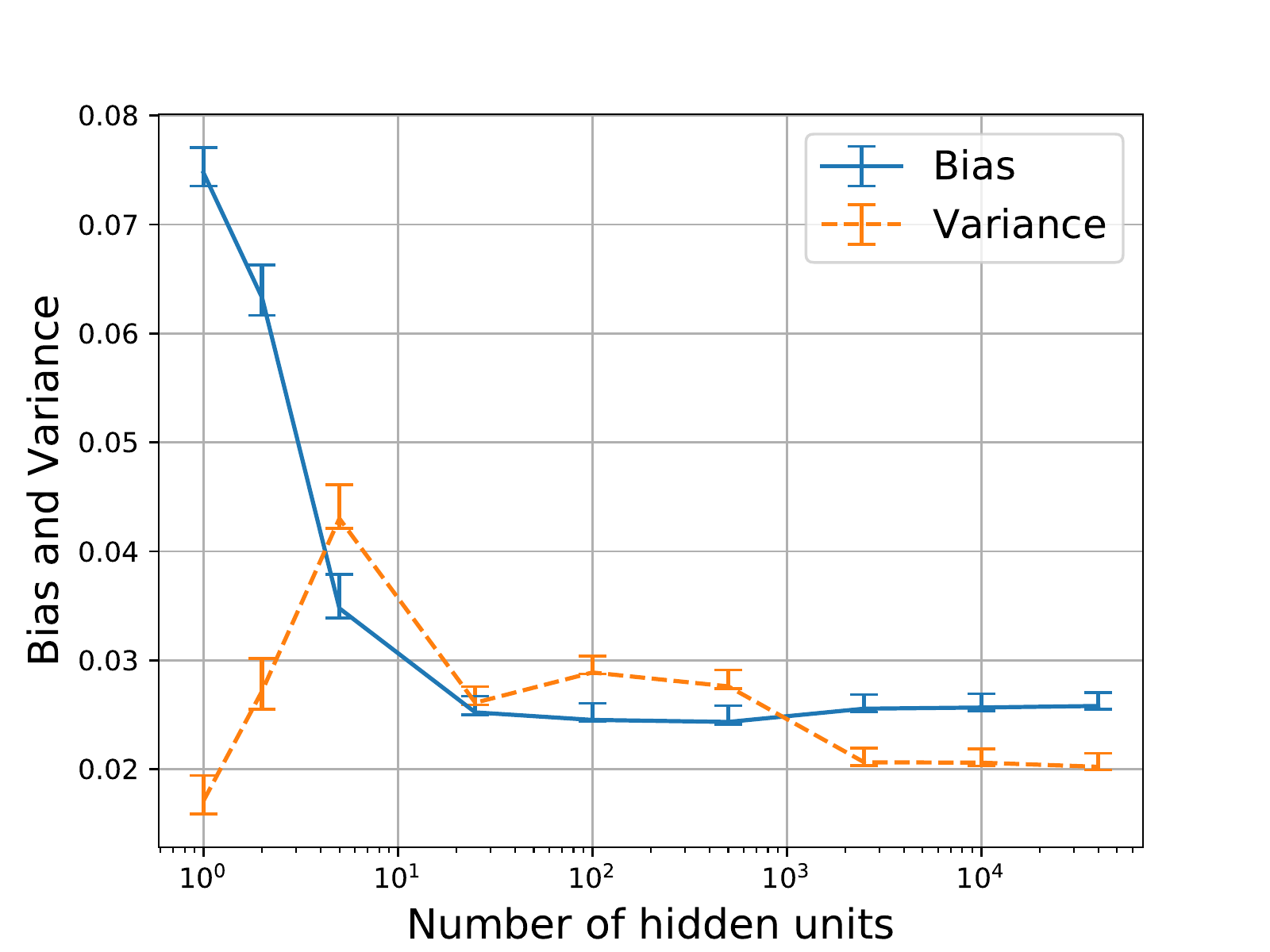}
    }
    \hfill
    \subfigure{
        \includegraphics[width=.475\textwidth]{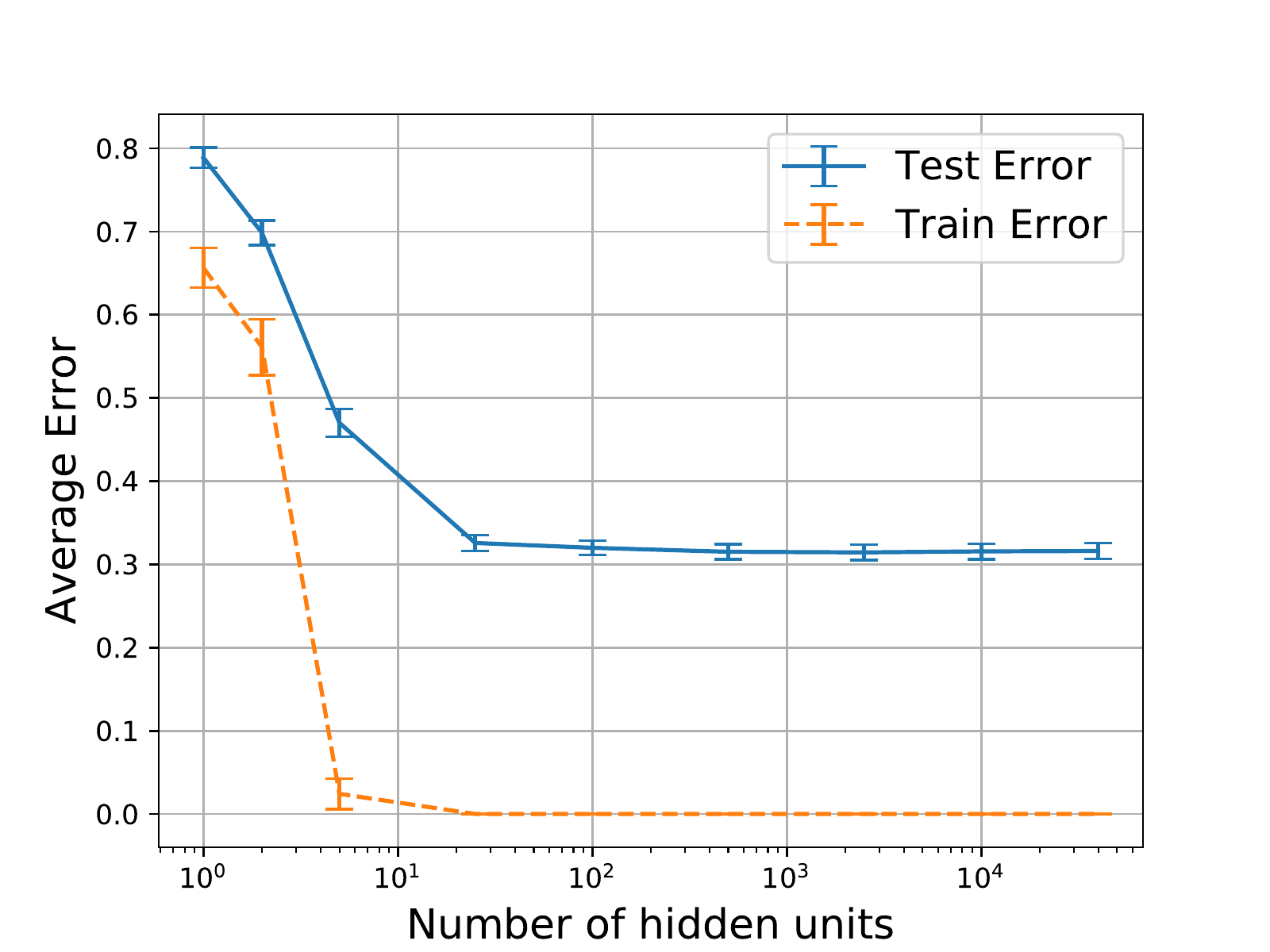}
    }
    \caption[Bias-variance experiments with full batch gradient descent]{Variance decreases with width in the small data setting, even when using batch gradient descent.}
\end{figure}

\begin{figure}[H]
    \centering
    \subfigure{
        \includegraphics[width=.475\textwidth]{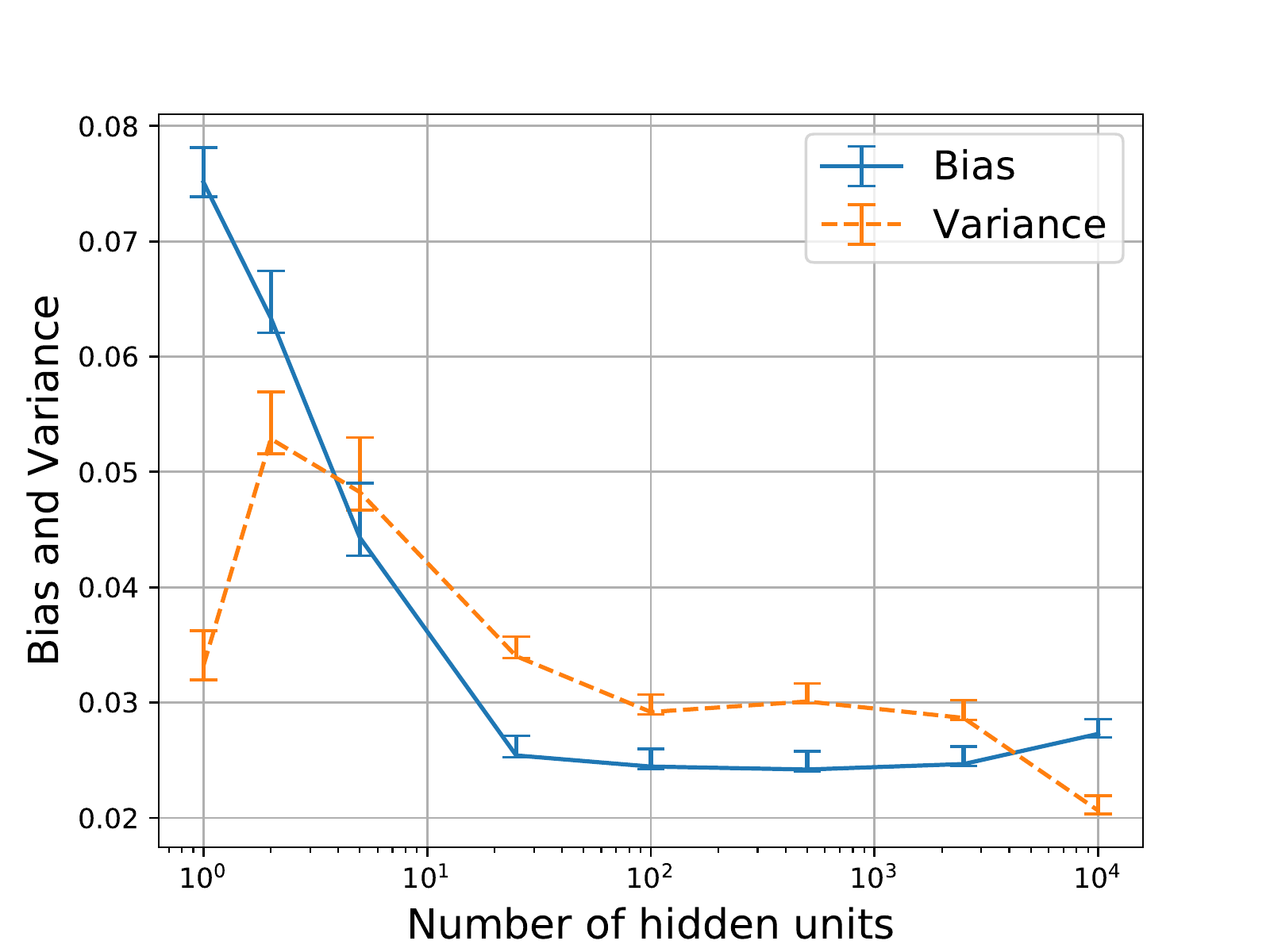}
    }
    \hfill
    \subfigure{
        \includegraphics[width=.475\textwidth]{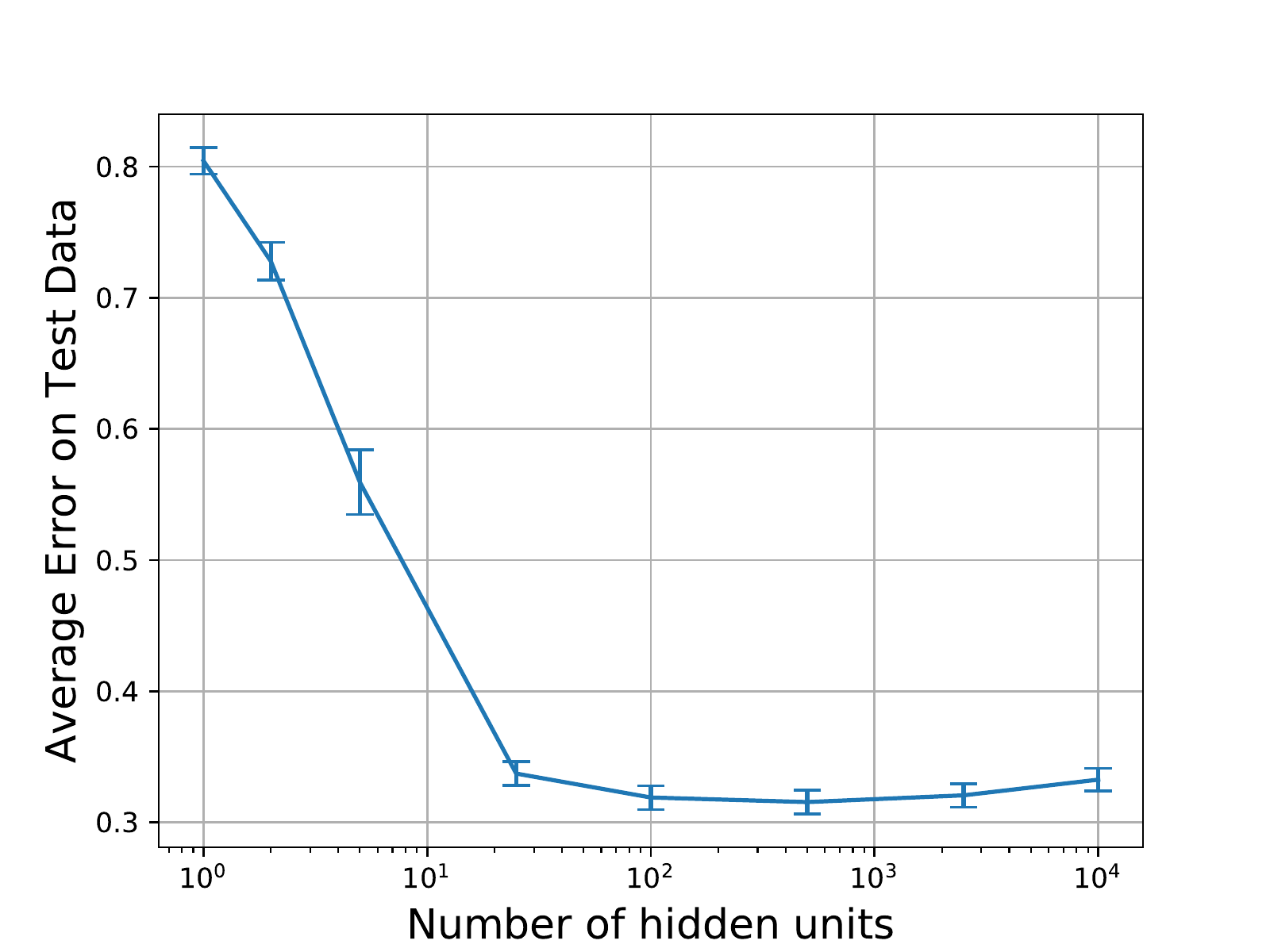}
    }
    \caption[Bias-variance experiments with LBFGS]{Variance decreases with width in the small data setting, even when using a strong optimizer, such as PyTorch's LBFGS, as the optimizer.}
\end{figure}

\newpage
\section{Sinusoid regression experiments}
\label{app:sinusoid_regression}

\begin{figure}[H]
    \centering
    \subfigure[Example of the many different functions learned by a high variance learner \citep{Bishop:2006}]{
        \includegraphics[width=.475\textwidth]{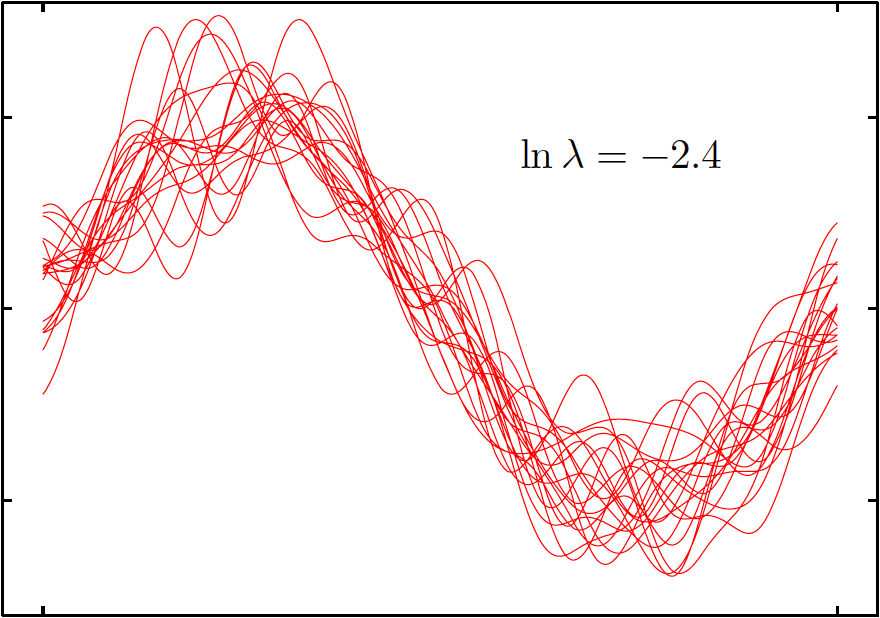}
    }
    \hfill
    \subfigure[Caricature of a single function learned by a high variance learner \citep{wtf_is_bv}]{
        \includegraphics[width=.475\textwidth]{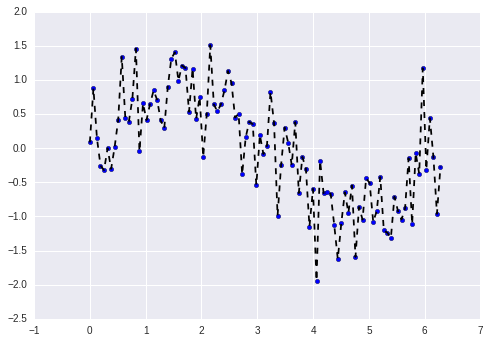}
    }
    \caption[Visualizations of high variance learners]{Caricature examples of high variance learners on sinusoid task. Below, we find that this does not happen with increasingly wide neural networks (\cref{fig:app_all_learned_sinusoids} and \cref{fig:app_mean_and_var_vis}).}
    \label{fig:high_var_caricature}
\end{figure}

\begin{figure}[H]
    \centering
    \includegraphics[width=.55\textwidth]{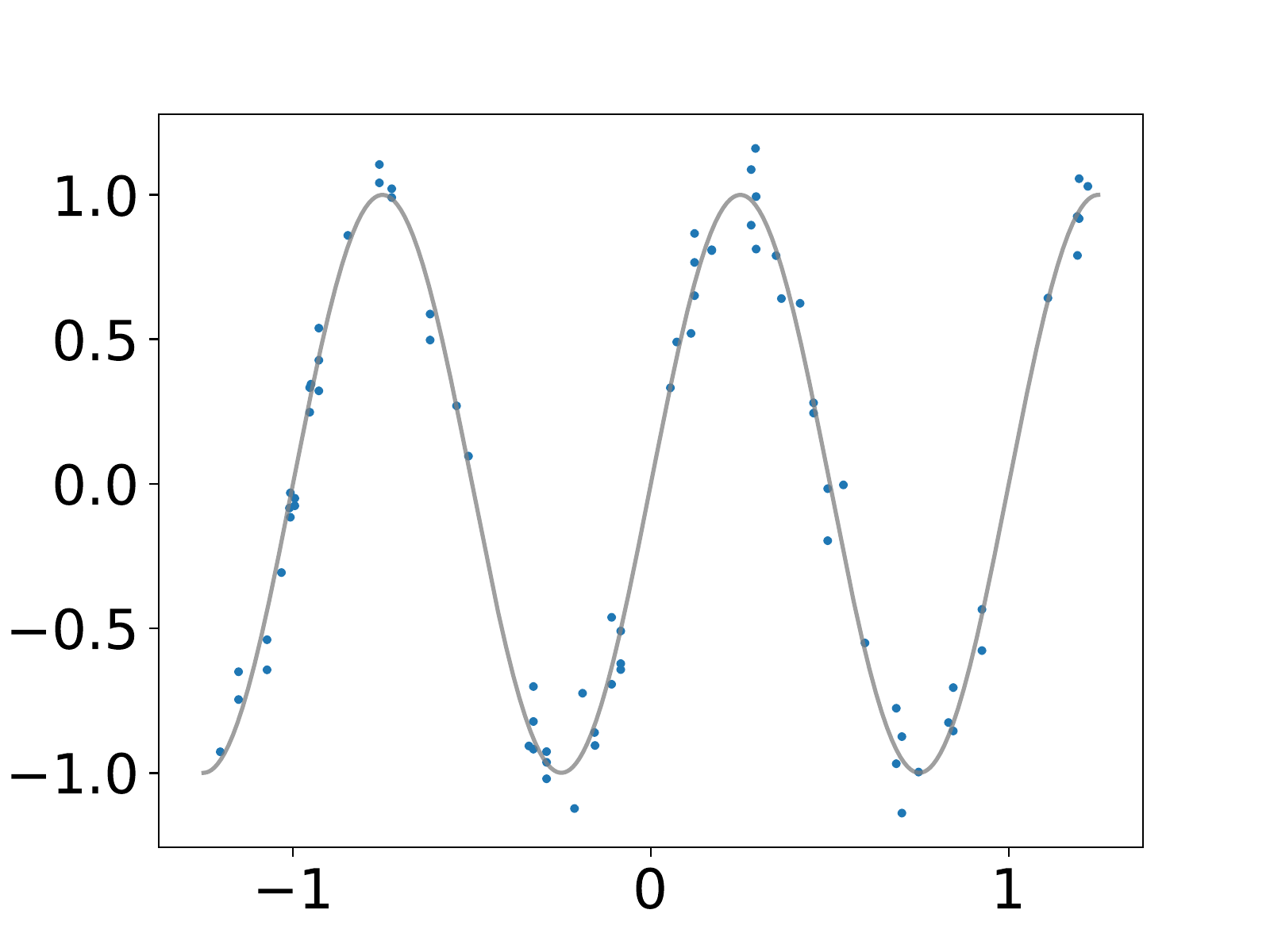}
    \caption[Target function and sampled dataset for sinusoid regression task]{Target function of the noisy sinusoid regression task (in gray) and an example of a training set (80 data points) sampled from the noisy distribution.}
\end{figure}

\begin{figure}[H]
    \centering
    \subfigure{
        \includegraphics[width=.31\textwidth]{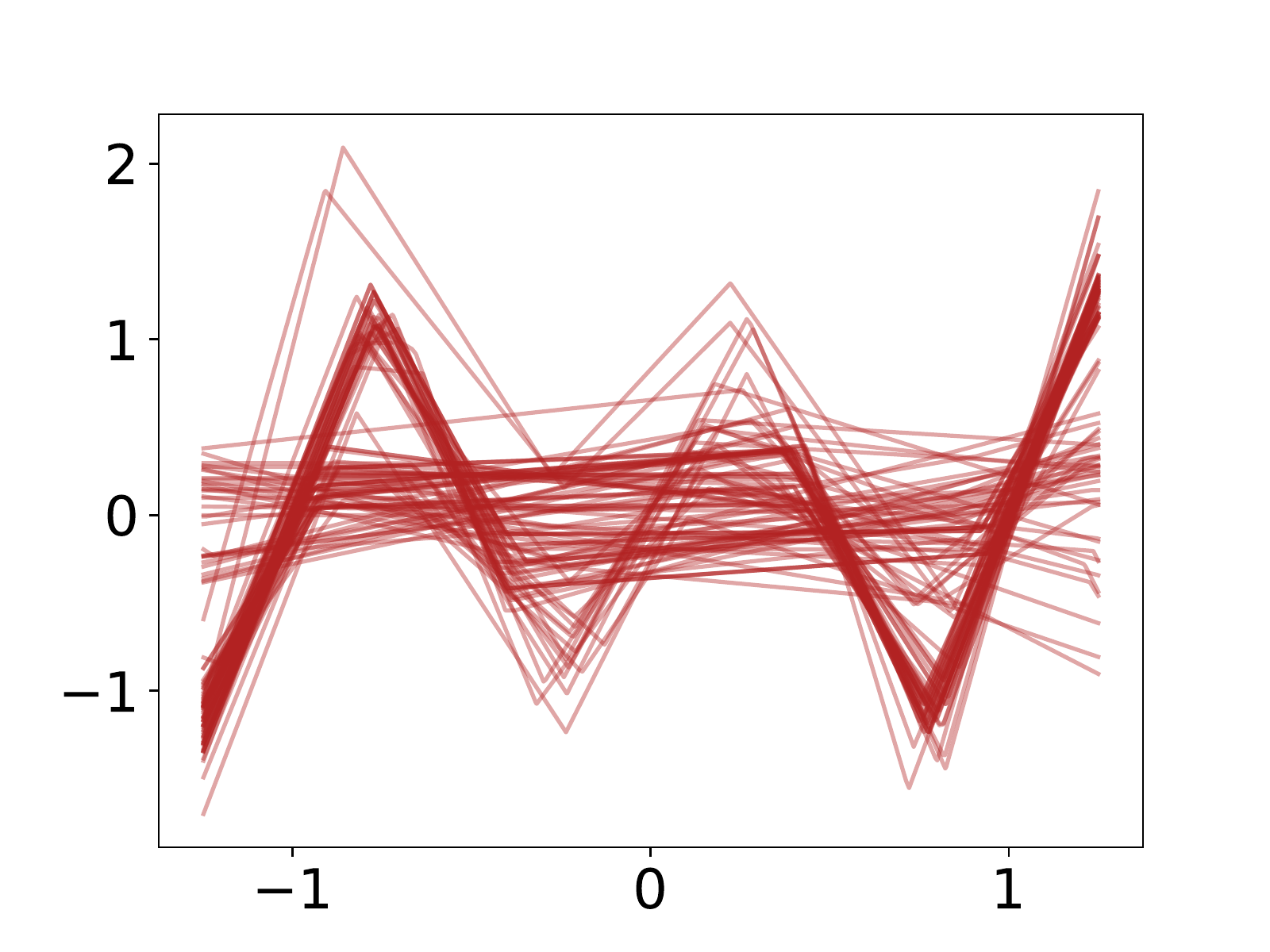}
    }
    \hfill
    \subfigure{
        \includegraphics[width=.31\textwidth]{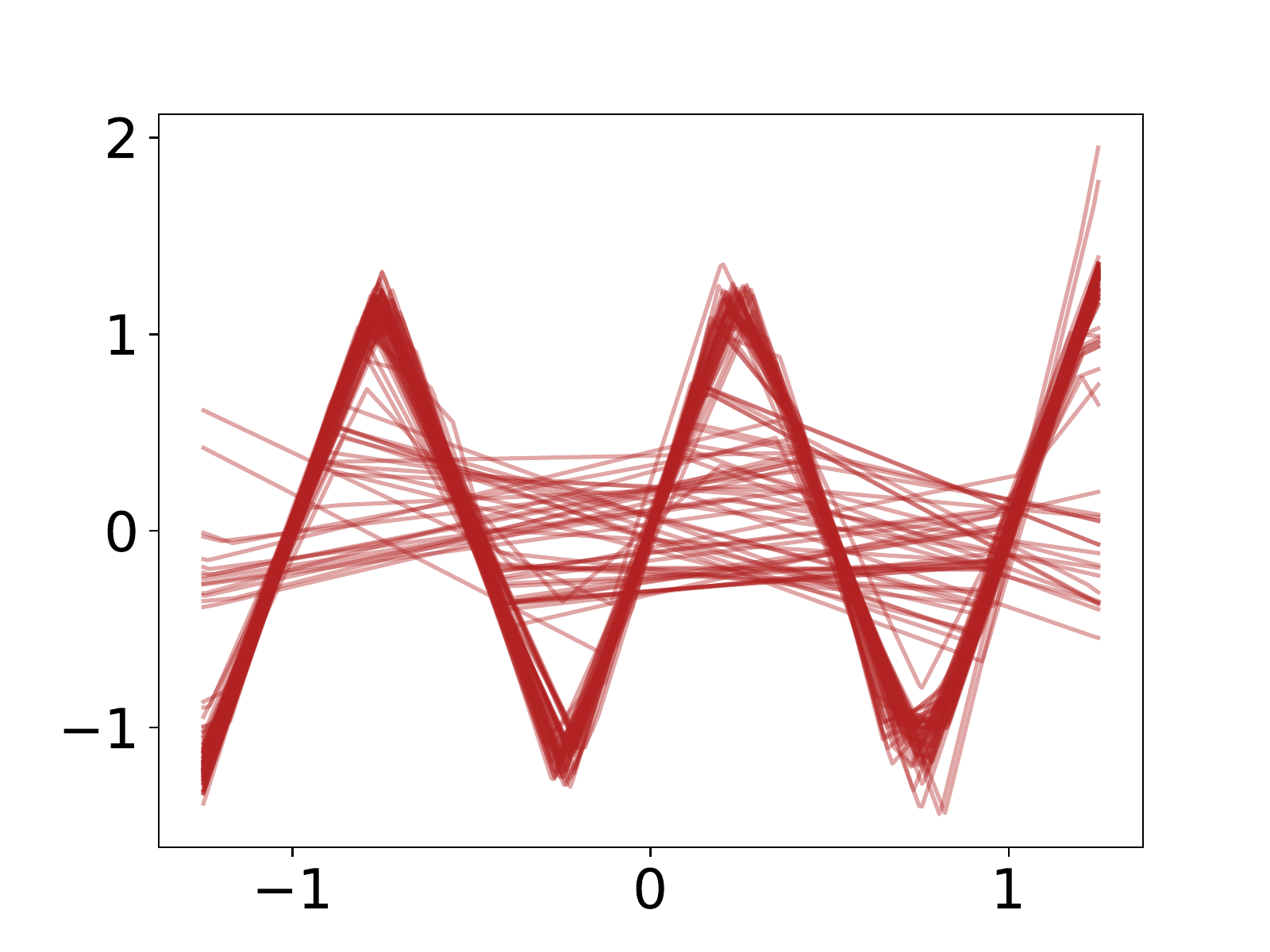}
    }
    \hfill
    \subfigure{
        \includegraphics[width=.31\textwidth]{figures/sinusoids/all_functions_vis/width15}
    }
    
    \subfigure{
        \includegraphics[width=.31\textwidth]{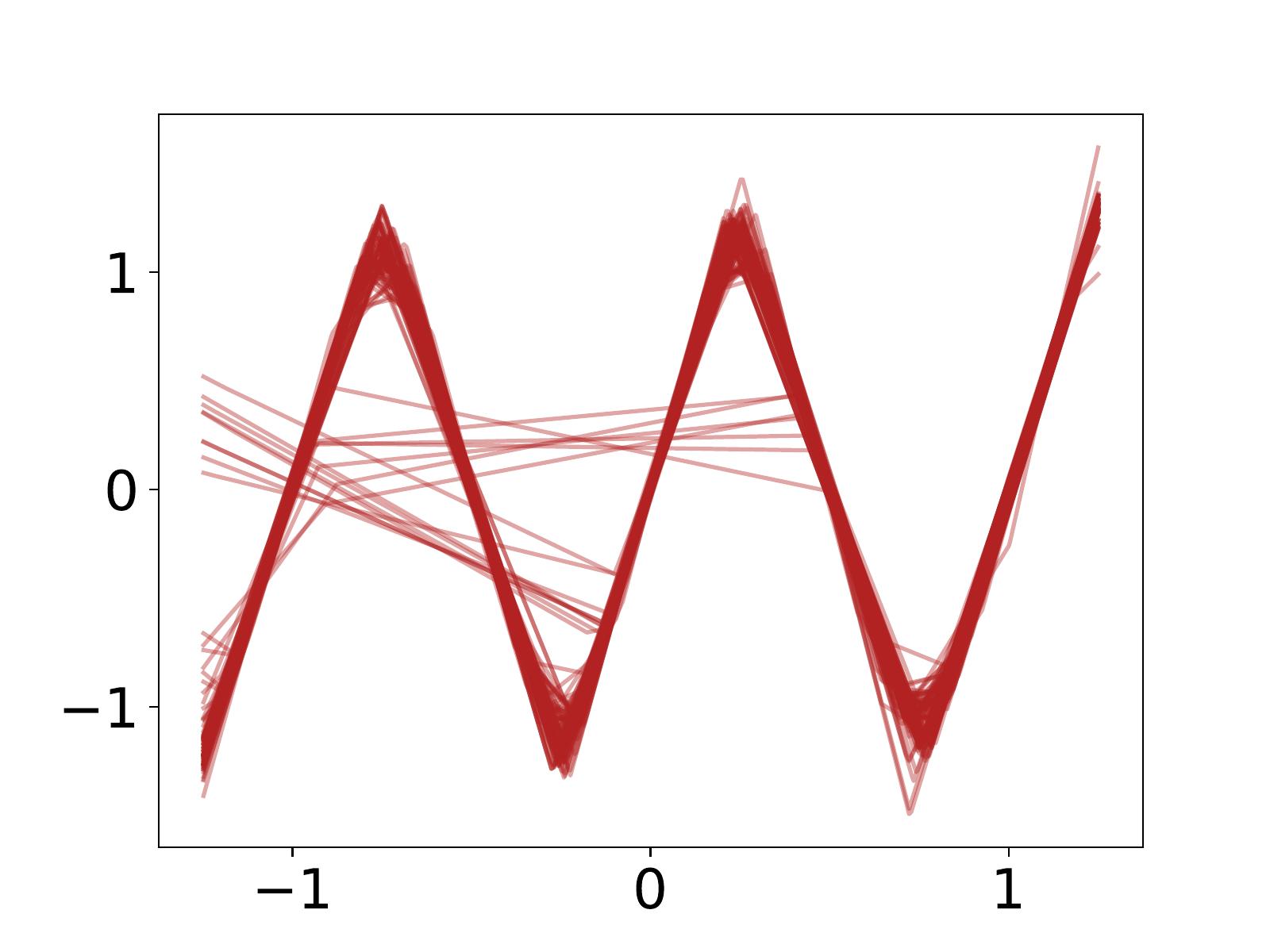}
    }
    \hfill
    \subfigure{
        \includegraphics[width=.31\textwidth]{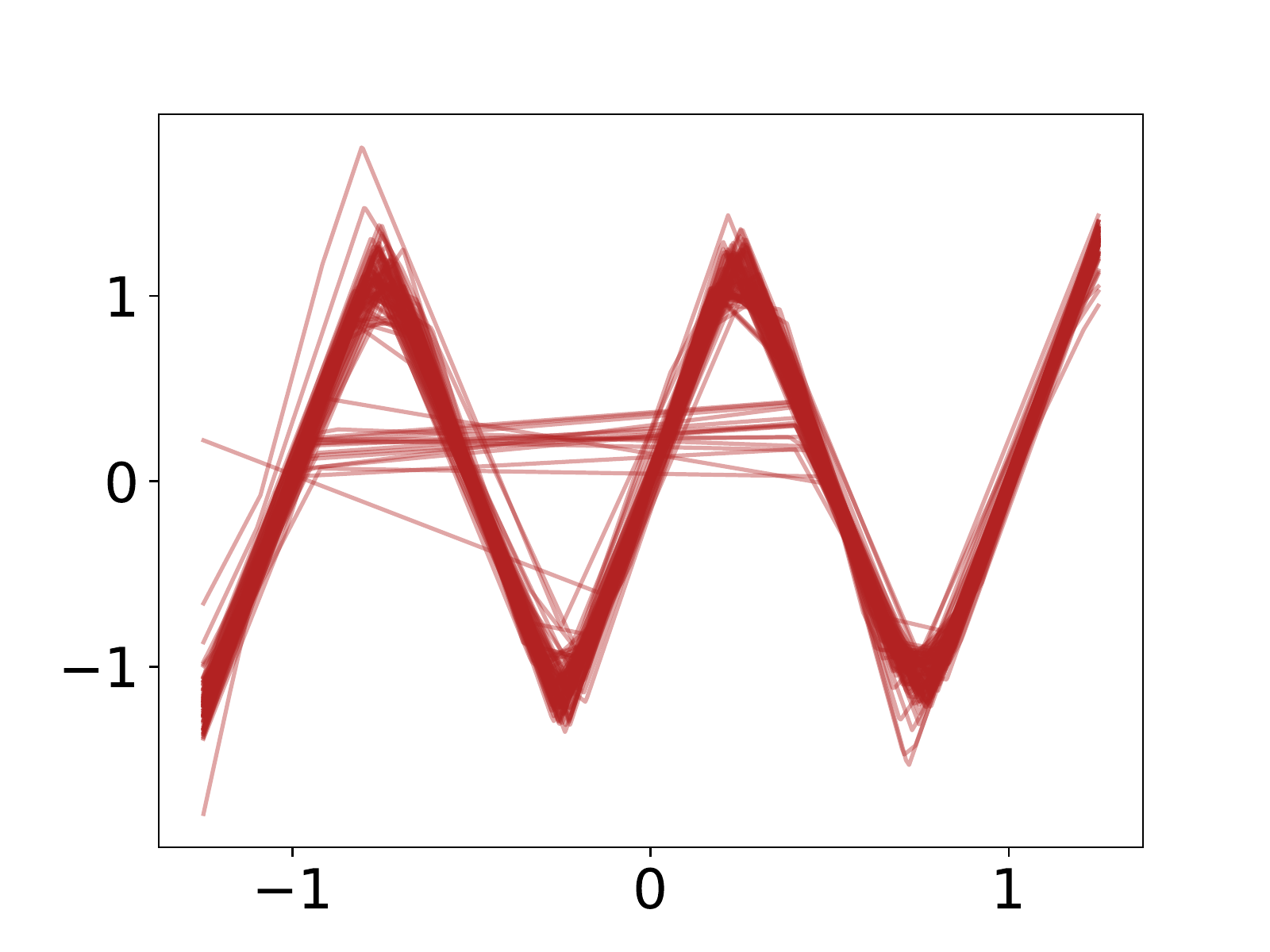}
    }
    \hfill
    \subfigure{
        \includegraphics[width=.31\textwidth]{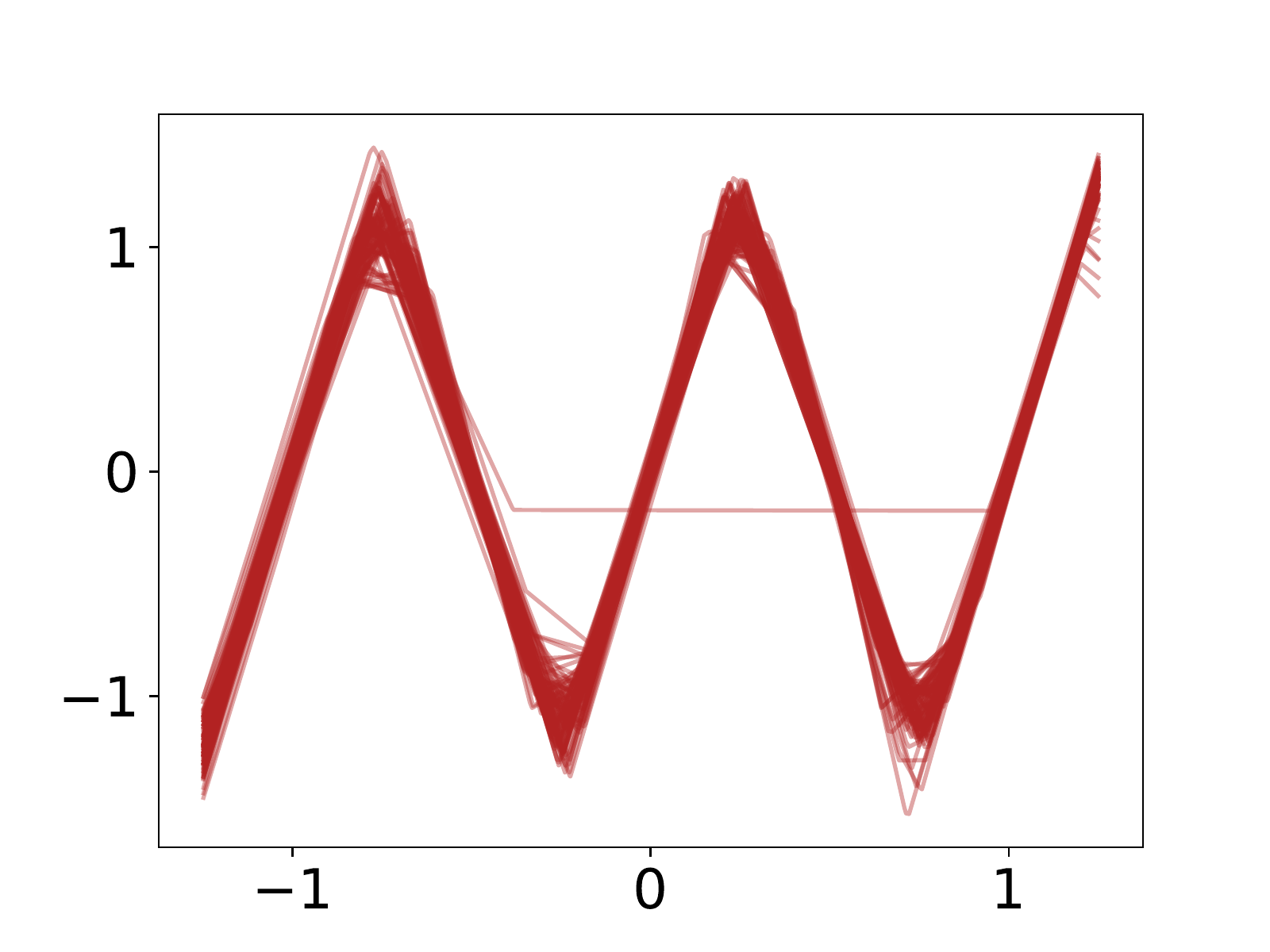}
    }
    
    \subfigure{
        \includegraphics[width=.31\textwidth]{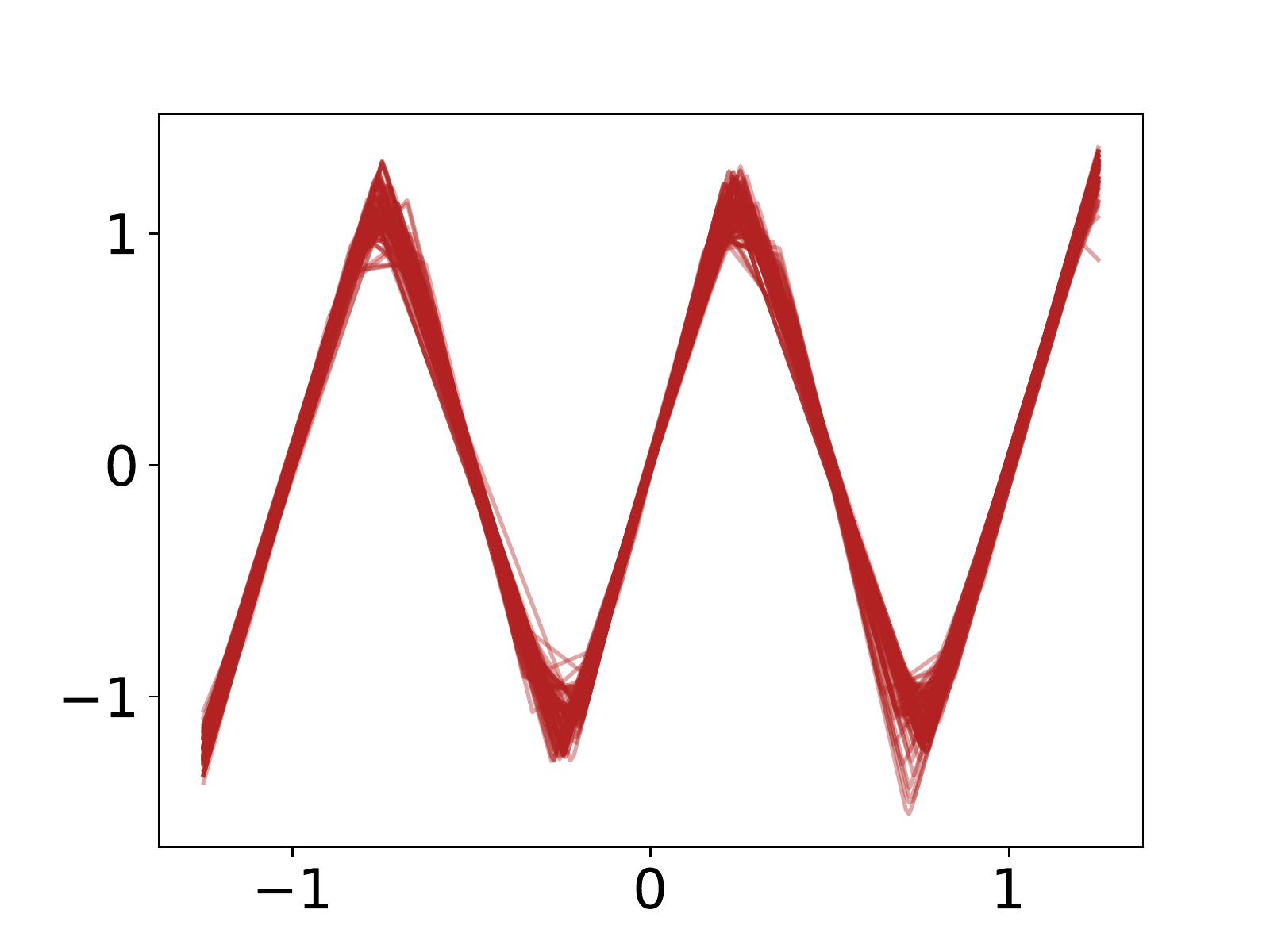}
    }
    \hfill
    \subfigure{
        \includegraphics[width=.31\textwidth]{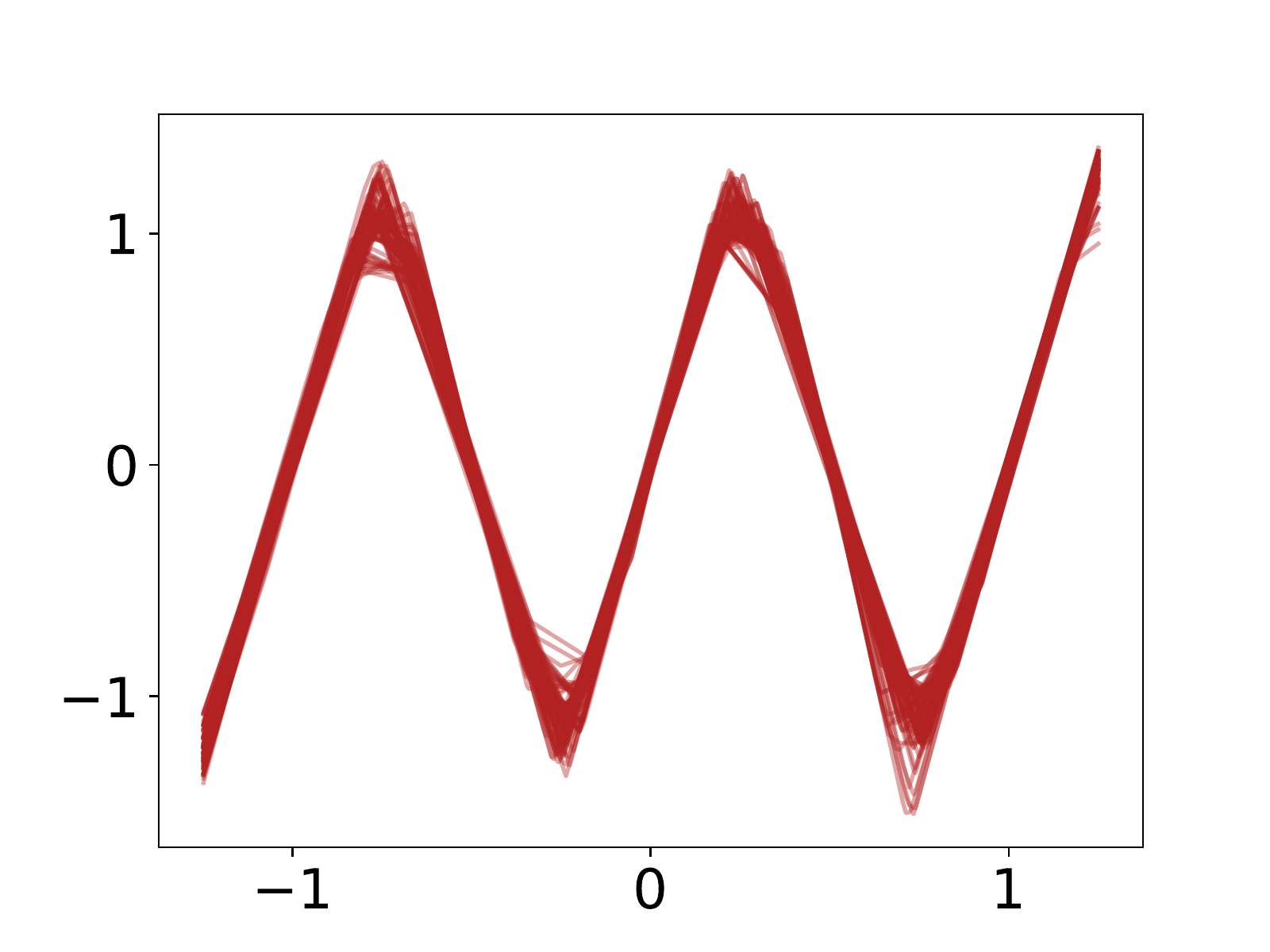}
    }
    \hfill
    \subfigure{
        \includegraphics[width=.31\textwidth]{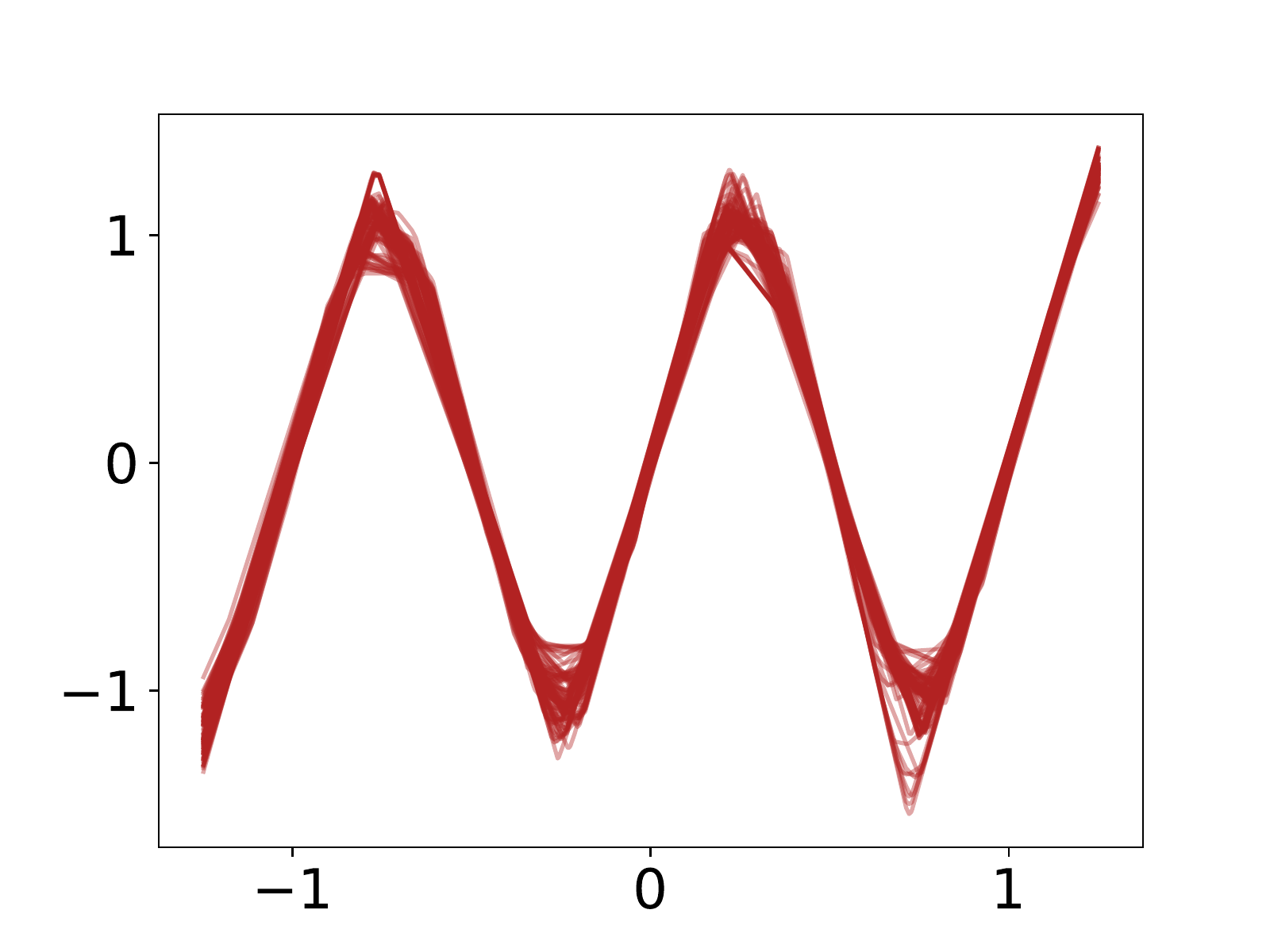}
    }
    
    \subfigure{
        \includegraphics[width=.31\textwidth]{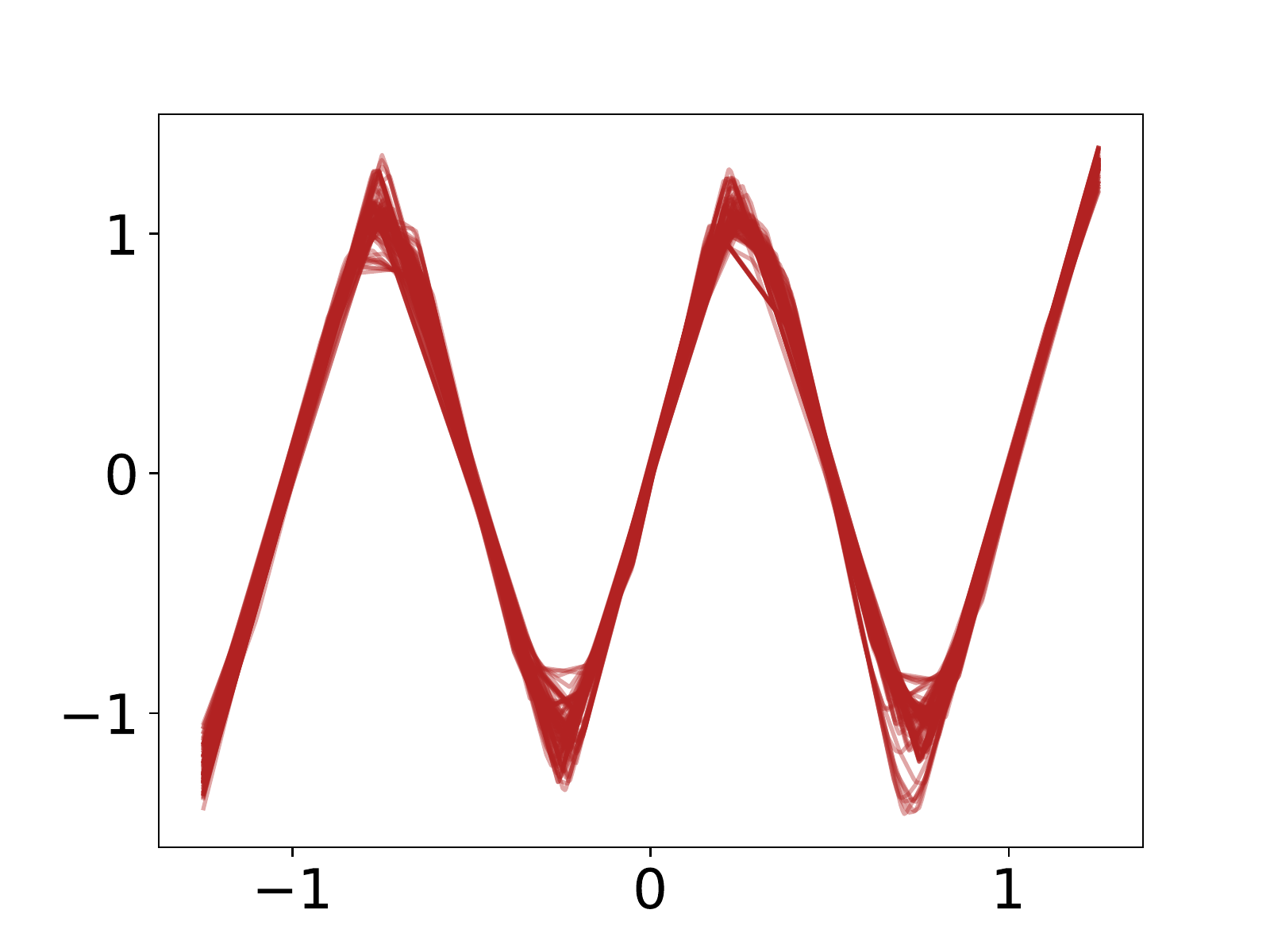}
    }
    \hfill
    \subfigure{
        \includegraphics[width=.31\textwidth]{figures/sinusoids/all_functions_vis/width1000}
    }
    \hfill
    \subfigure{
        \includegraphics[width=.31\textwidth]{figures/sinusoids/all_functions_vis/width10000}
    }
    \caption[Visualization of 100 different functions learned by different width neural networks]{Visualization of 100 different functions learned by the different width neural networks. Darker color indicates higher density of different functions. Widths in increasing order from left to right and top to bottom: 5, 10, 15, 17, 20, 22, 25, 35, 75, 100, 1000, 10000. We do \textit{not} observe the caricature from \cref{fig:high_var_caricature} as width is increased.}
    \label{fig:app_all_learned_sinusoids}
\end{figure}

\begin{figure}[H]
    \centering
    \subfigure{
        \includegraphics[width=.31\textwidth]{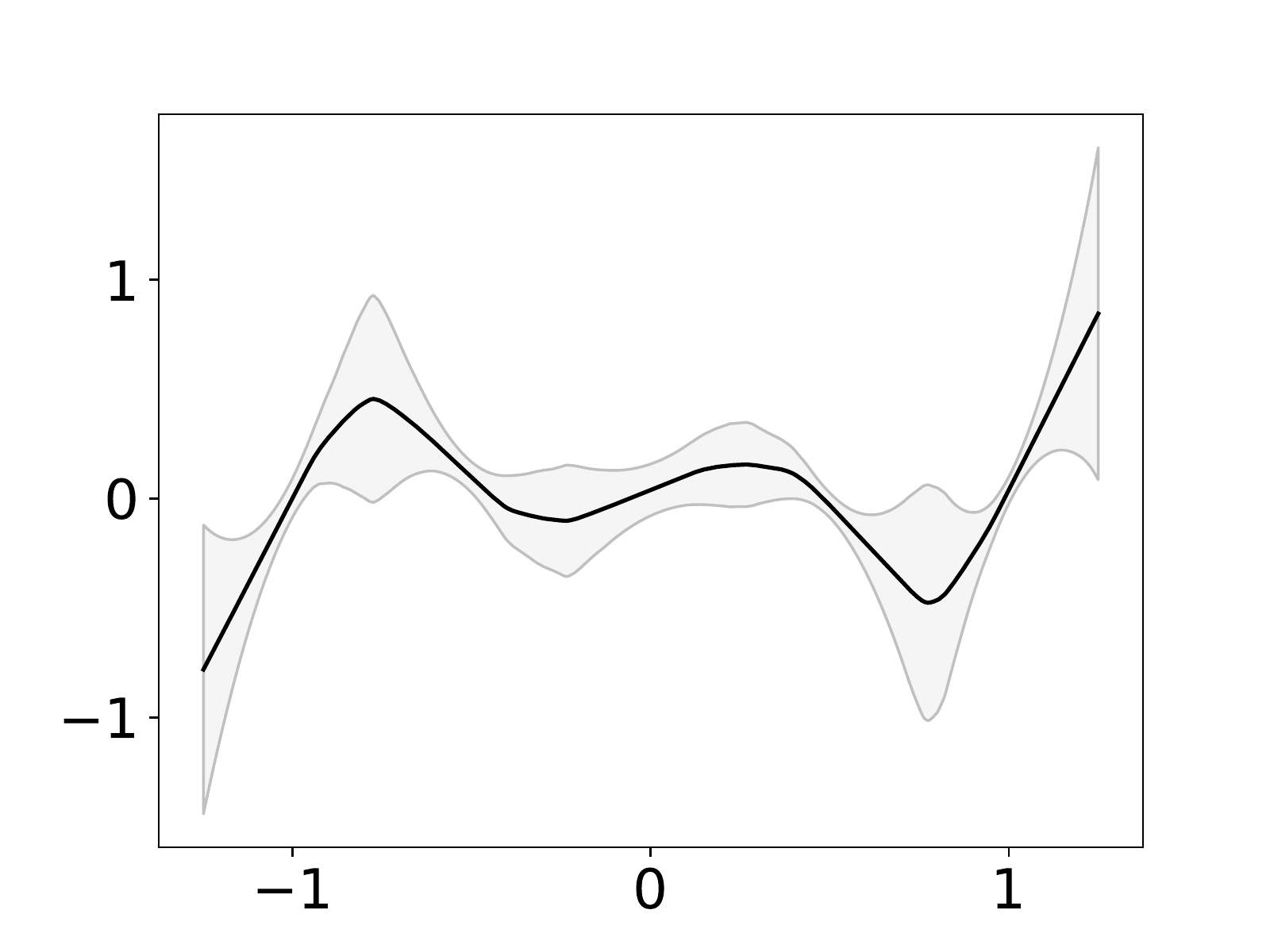}
    }
    \hfill
    \subfigure{
        \includegraphics[width=.31\textwidth]{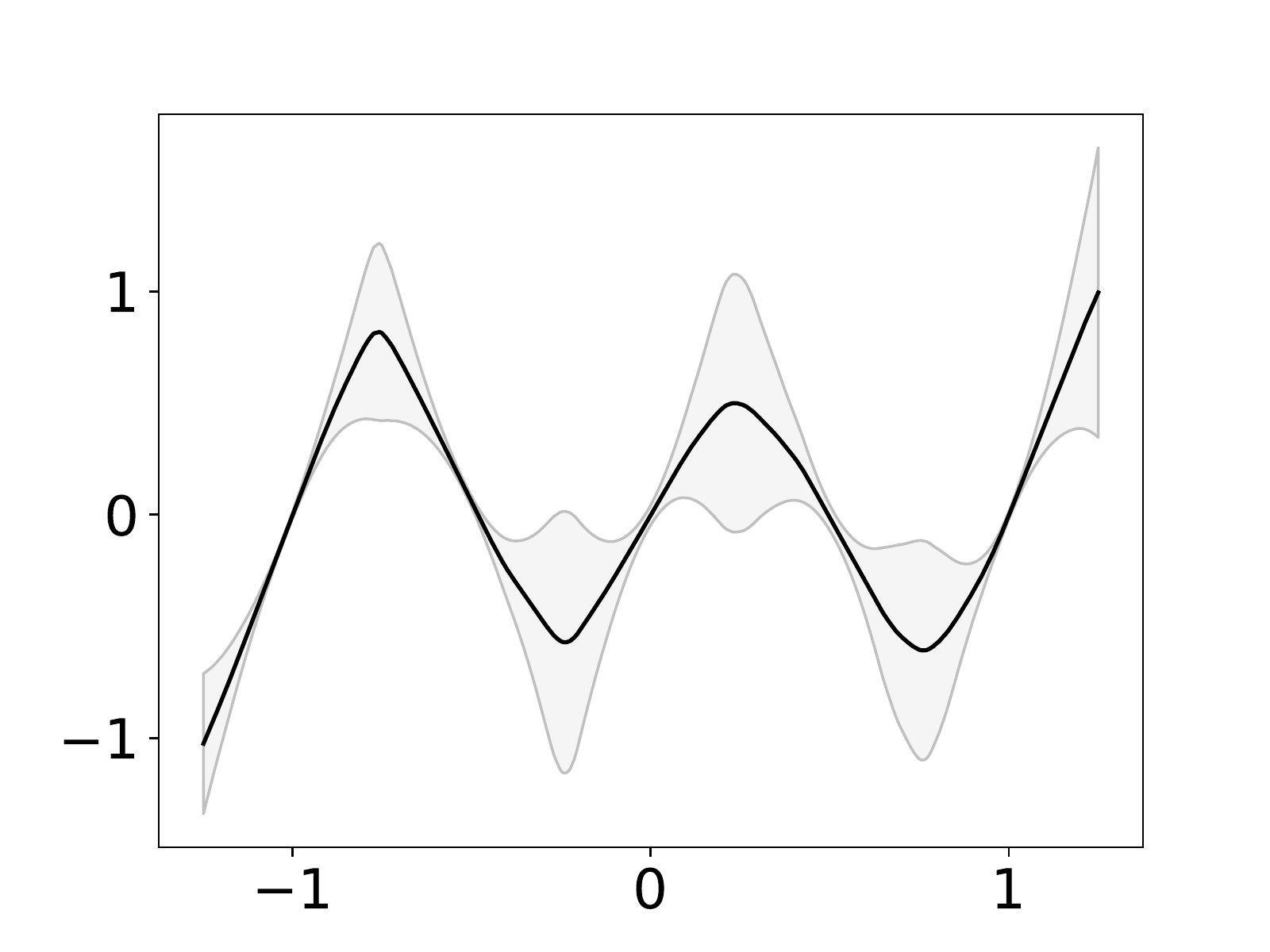}
    }
    \hfill
    \subfigure{
        \includegraphics[width=.31\textwidth]{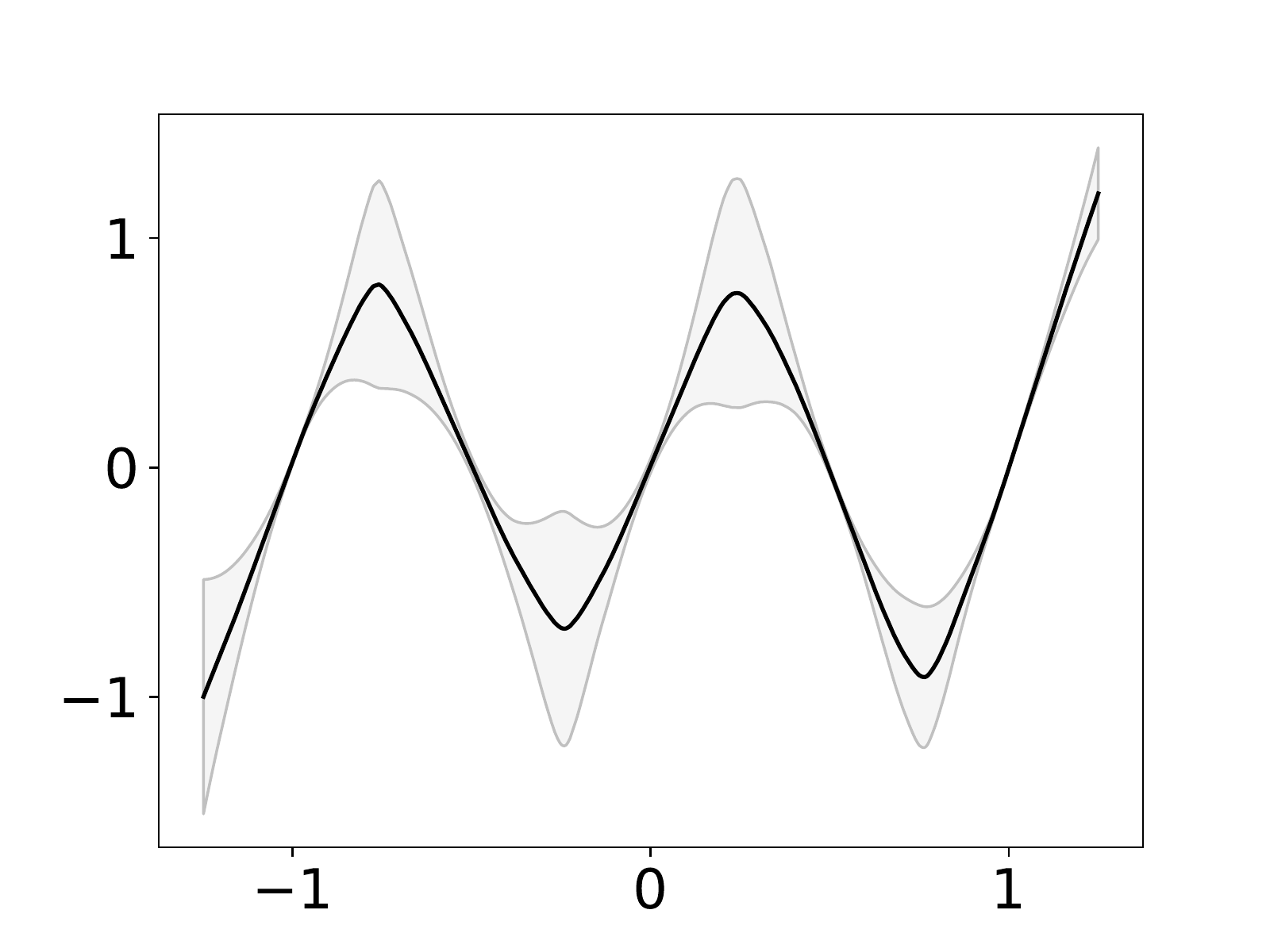}
    }
    
    \subfigure{
        \includegraphics[width=.31\textwidth]{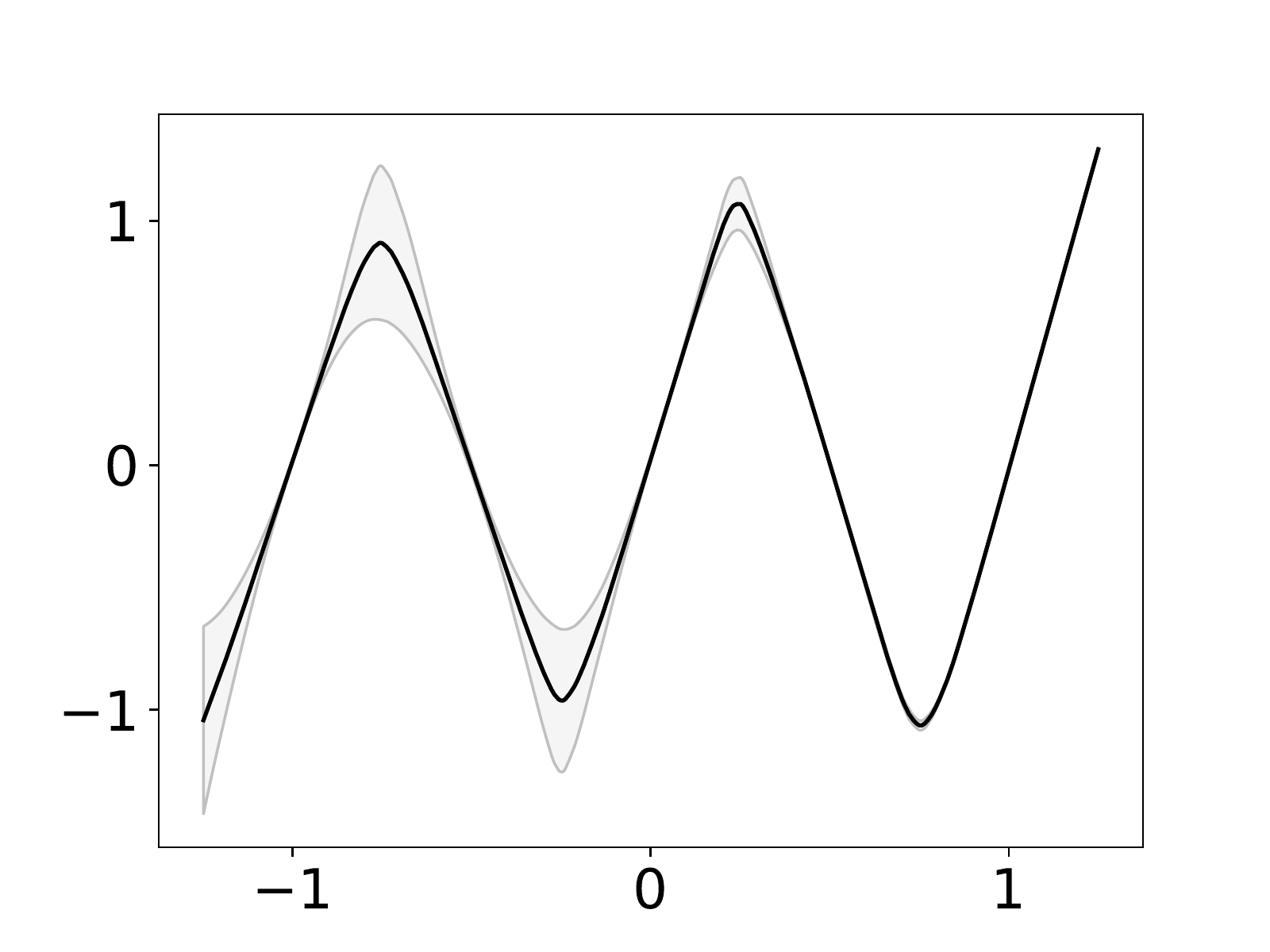}
    }
    \hfill
    \subfigure{
        \includegraphics[width=.31\textwidth]{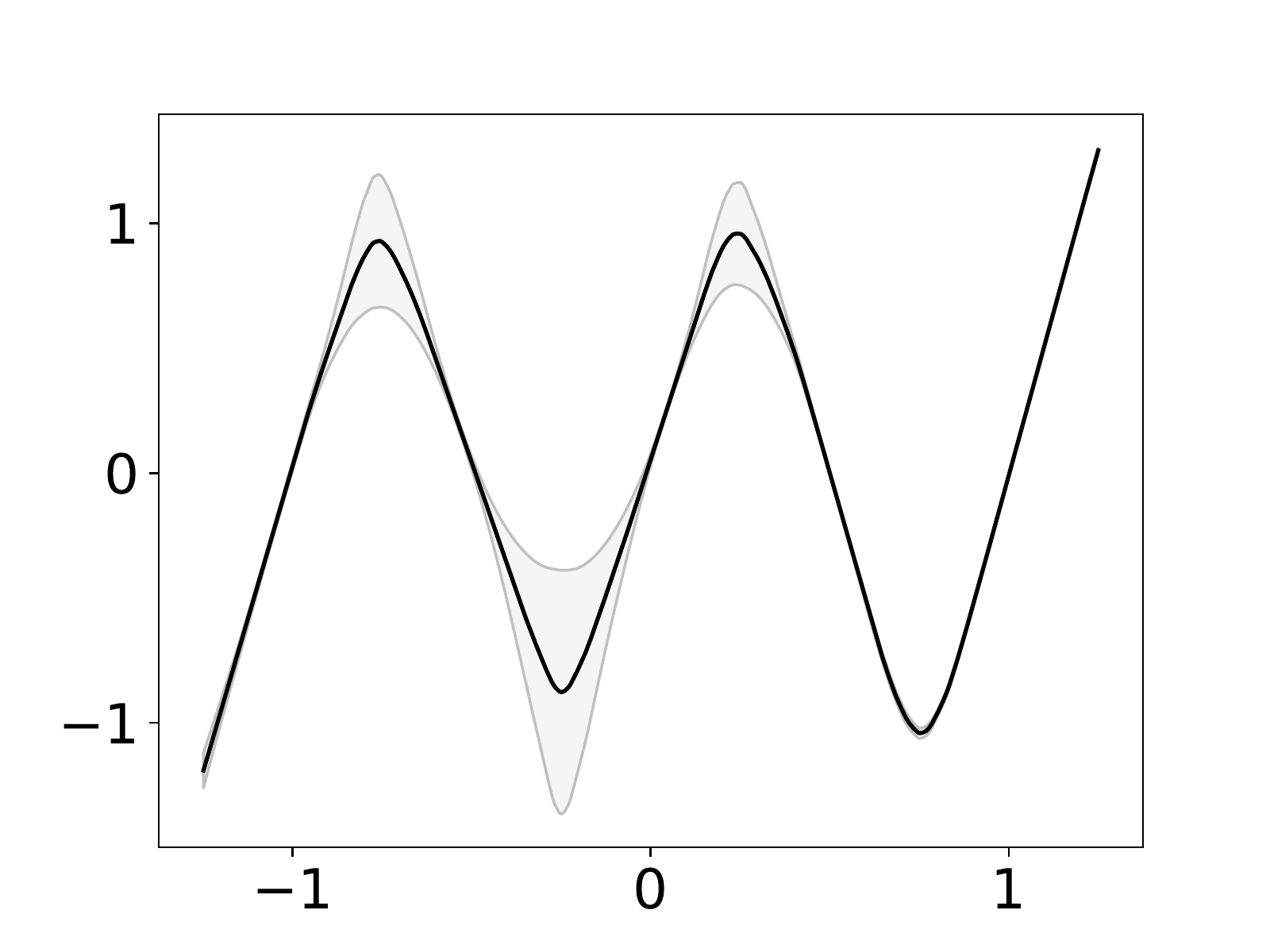}
    }
    \hfill
    \subfigure{
        \includegraphics[width=.31\textwidth]{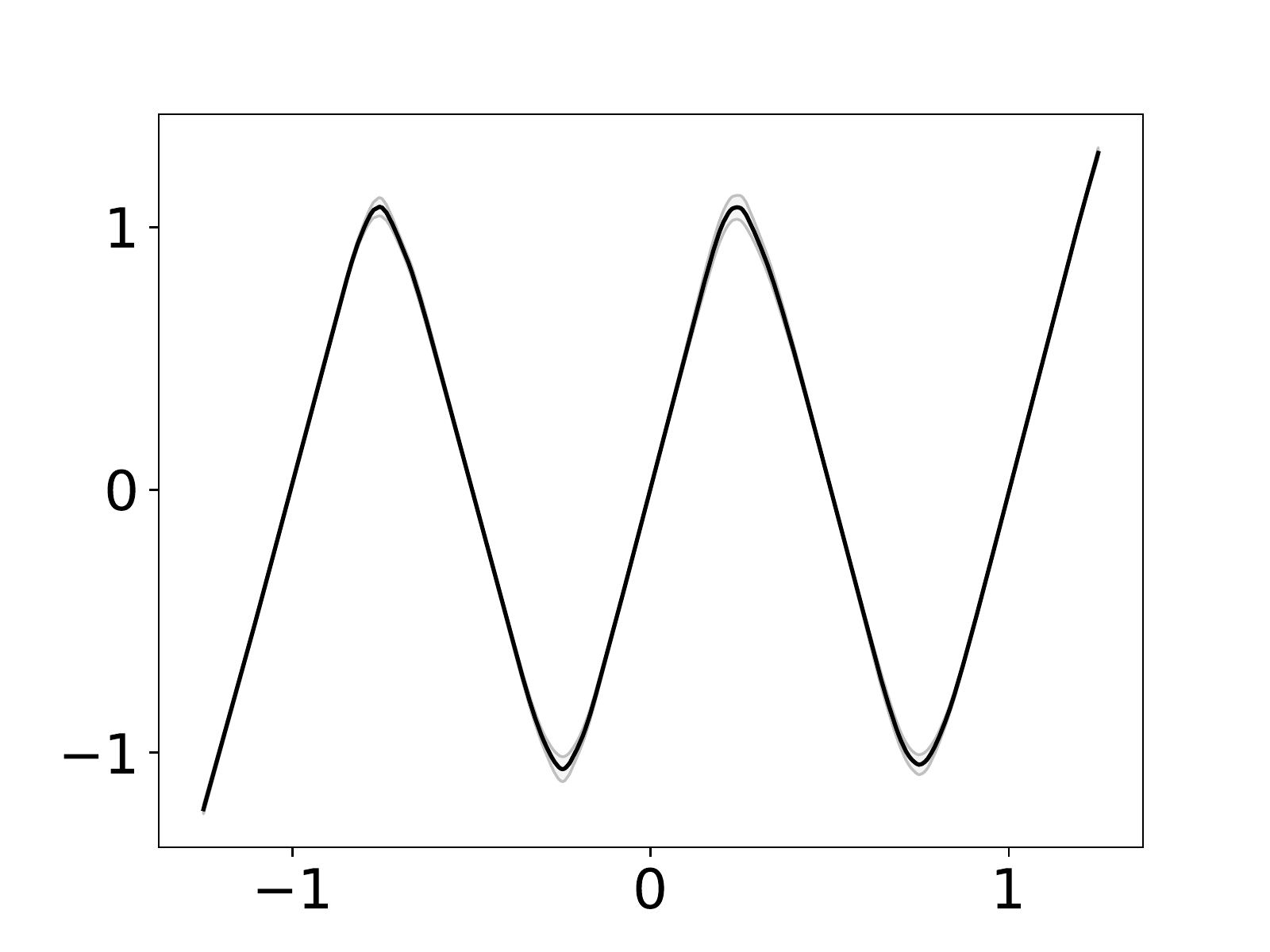}
    }
    
    \subfigure{
        \includegraphics[width=.31\textwidth]{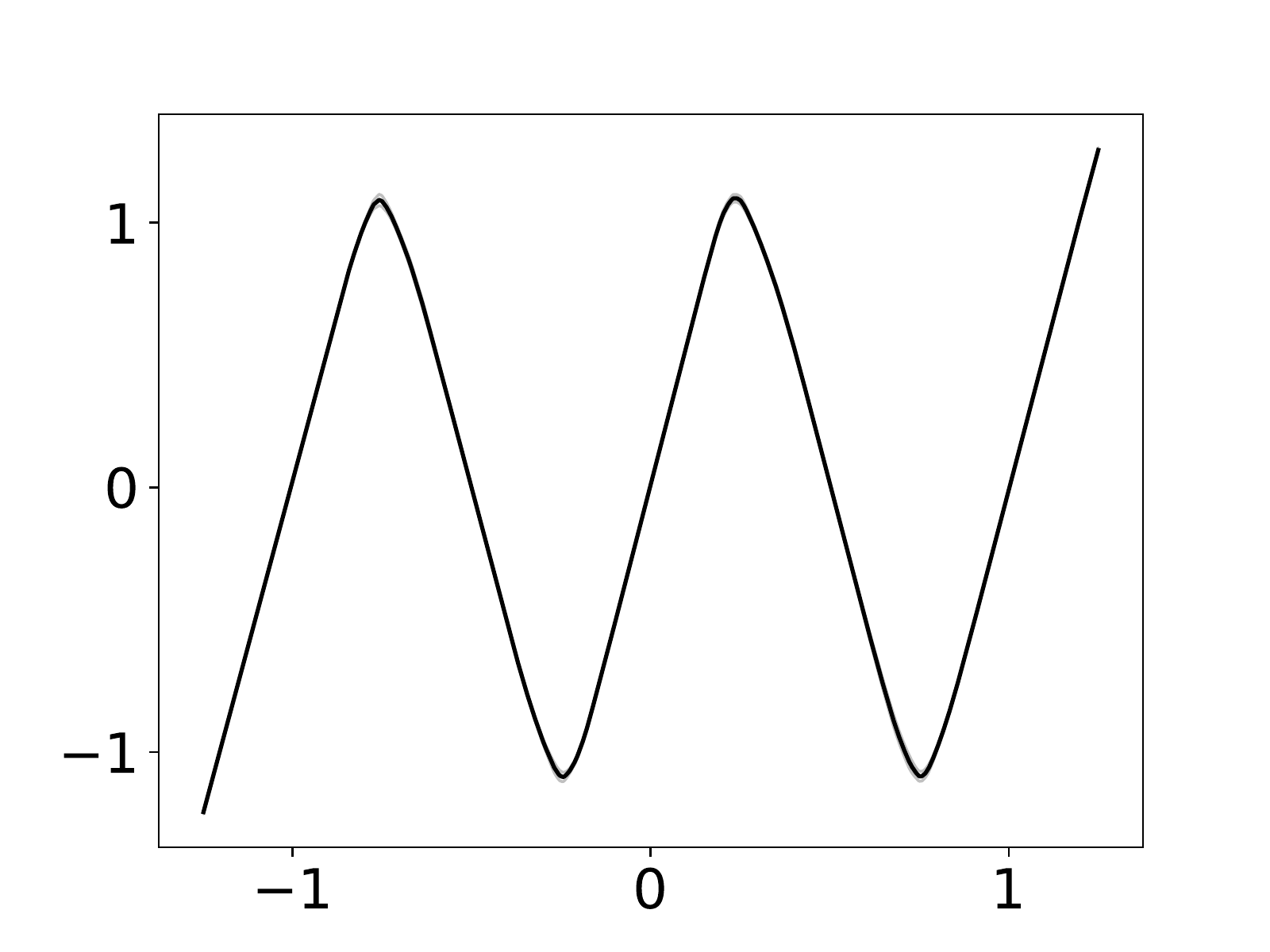}
    }
    \hfill
    \subfigure{
        \includegraphics[width=.31\textwidth]{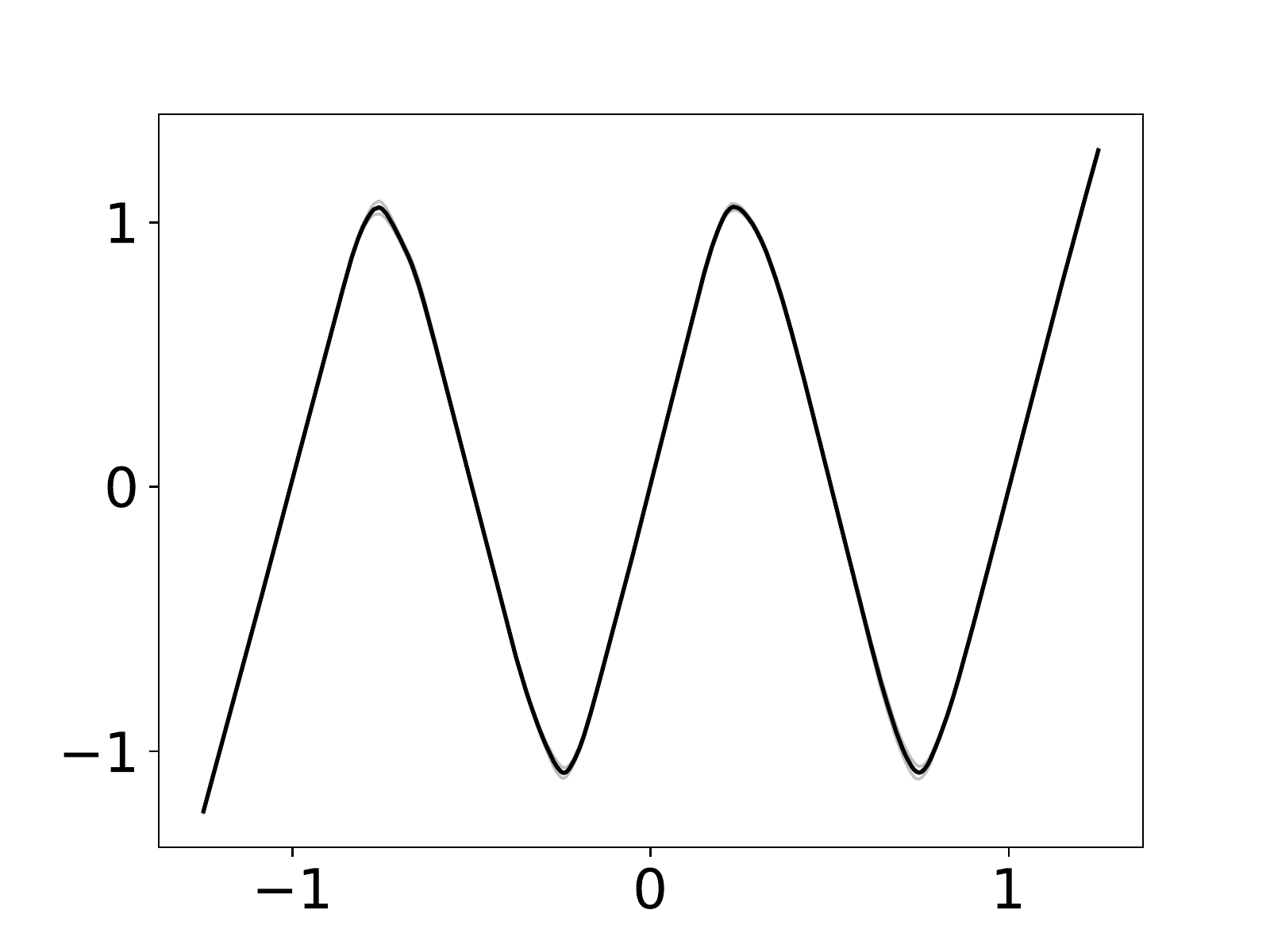}
    }
    \hfill
    \subfigure{
        \includegraphics[width=.31\textwidth]{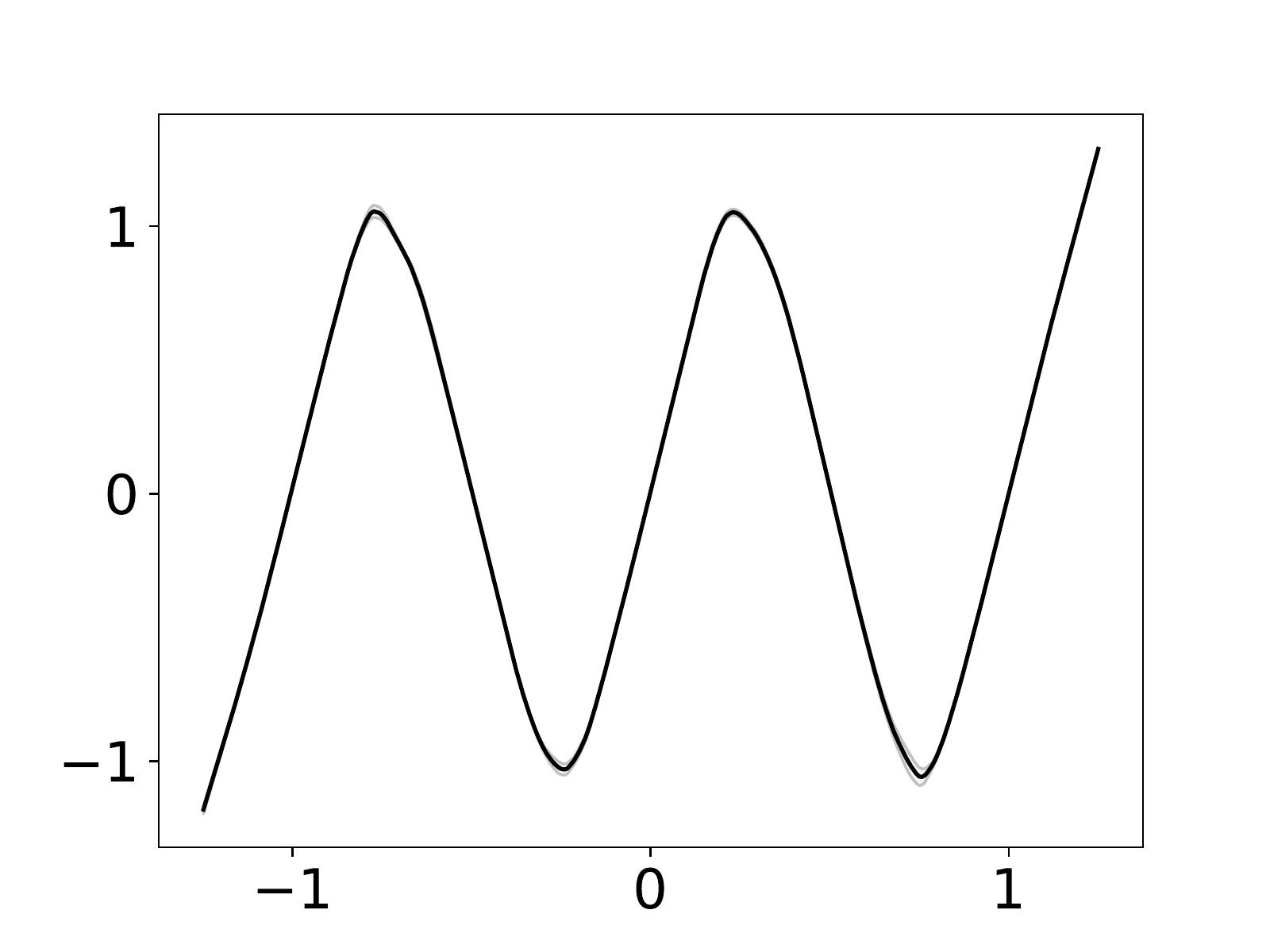}
    }
    
    \subfigure{
        \includegraphics[width=.31\textwidth]{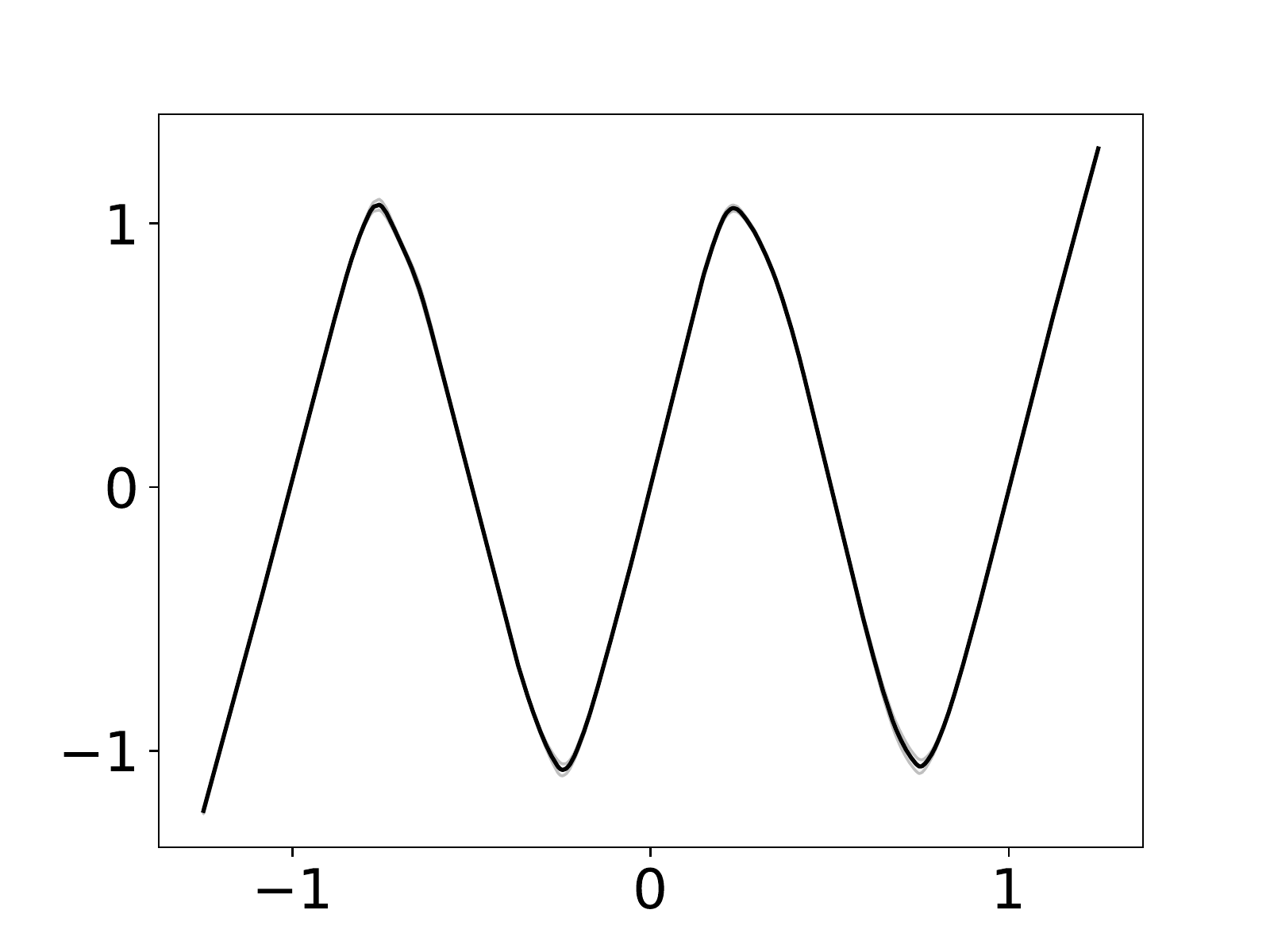}
    }
    \hfill
    \subfigure{
        \includegraphics[width=.31\textwidth]{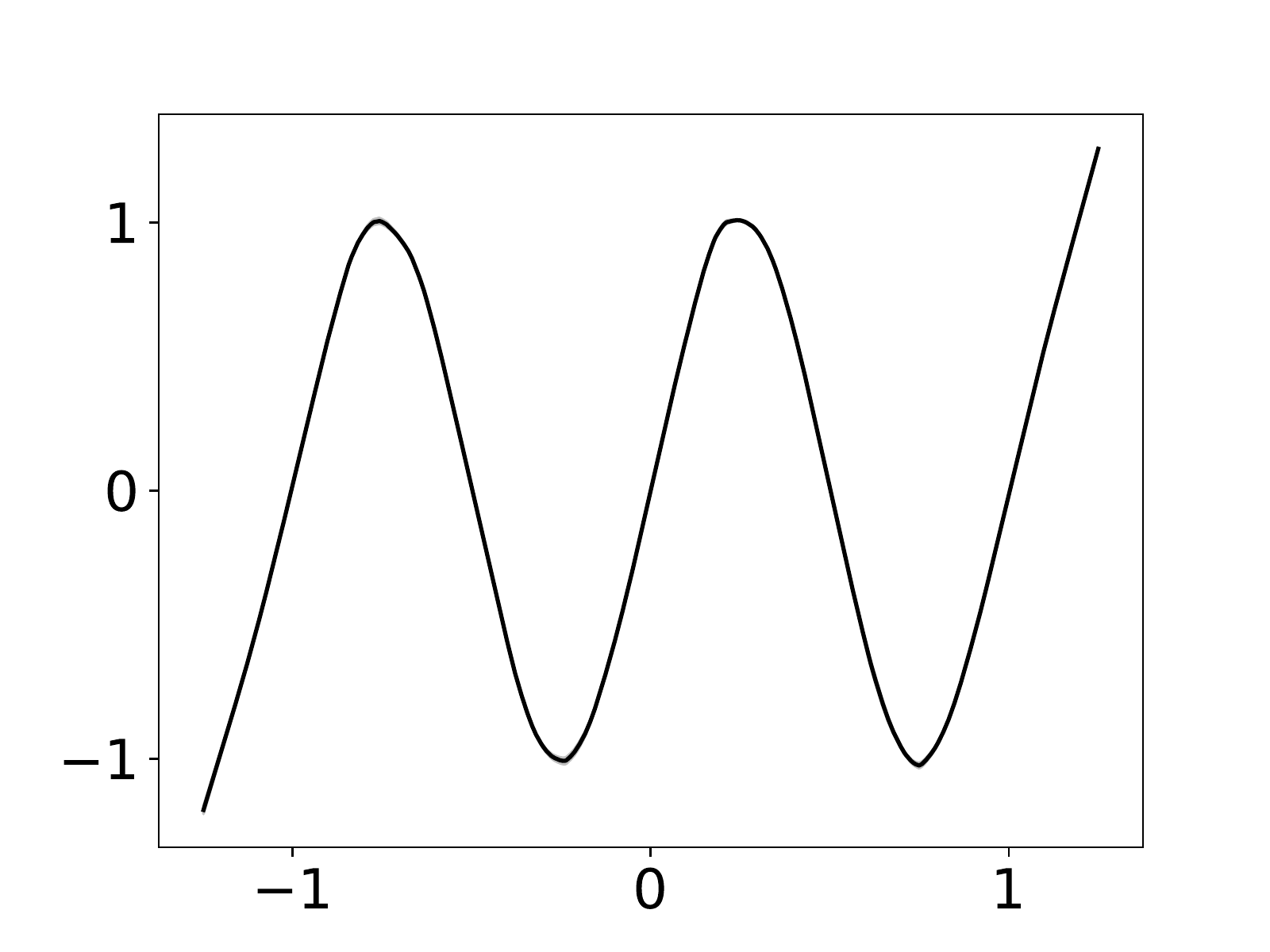}
    }
    \hfill
    \subfigure{
        \includegraphics[width=.31\textwidth]{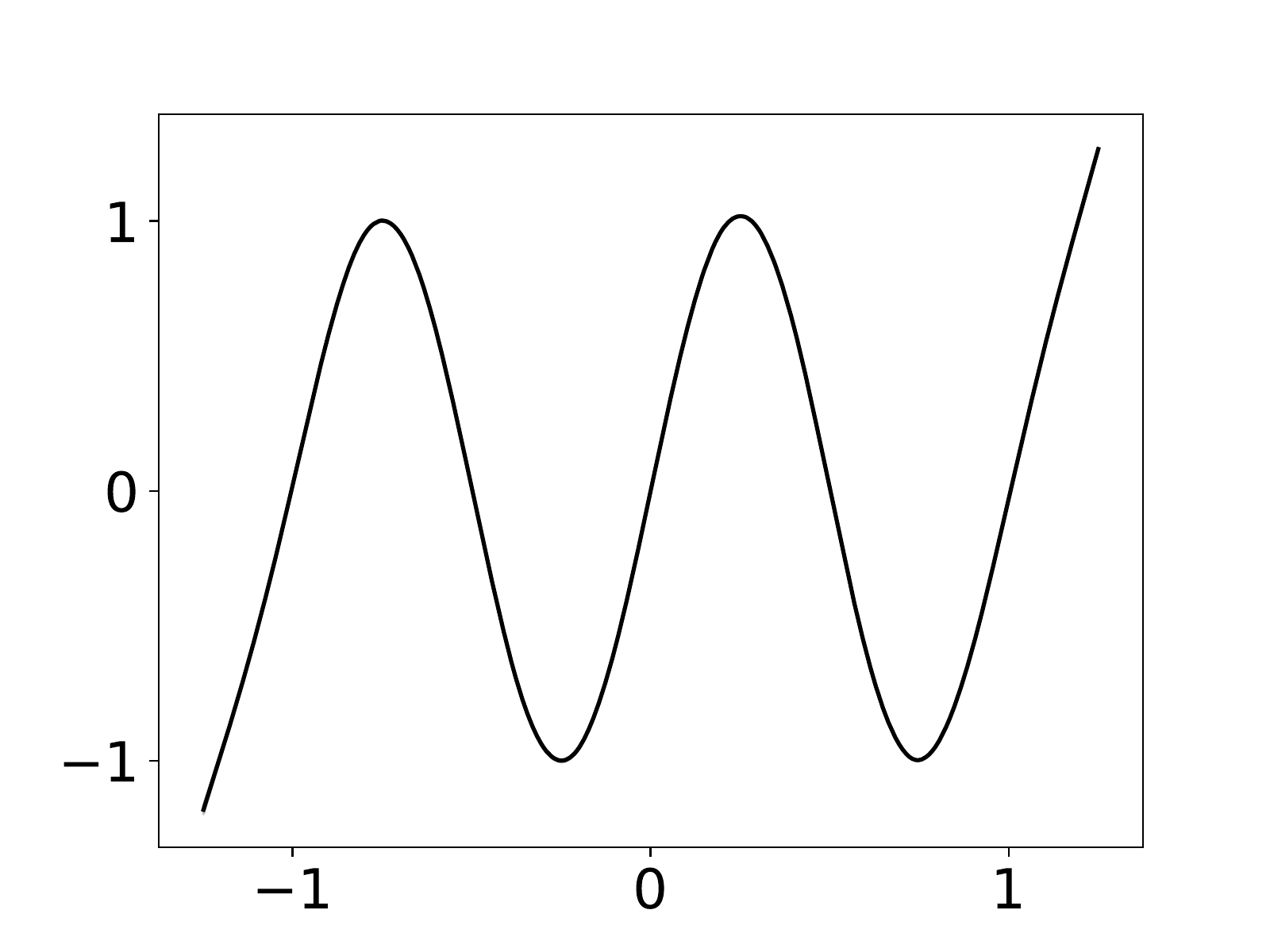}
    }
    \caption[Visualization of the mean prediction and variance of different width networks]{Visualization of the mean prediction and variance of the different width neural networks. Widths in increasing order from left to right and top to bottom: 5, 10, 15, 17, 20, 22, 25, 35, 75, 100, 1000, 10000.}
    \label{fig:app_mean_and_var_vis}
\end{figure}

\begin{figure}[H]
    \centering
    \subfigure{
        \includegraphics[width=.475\textwidth]{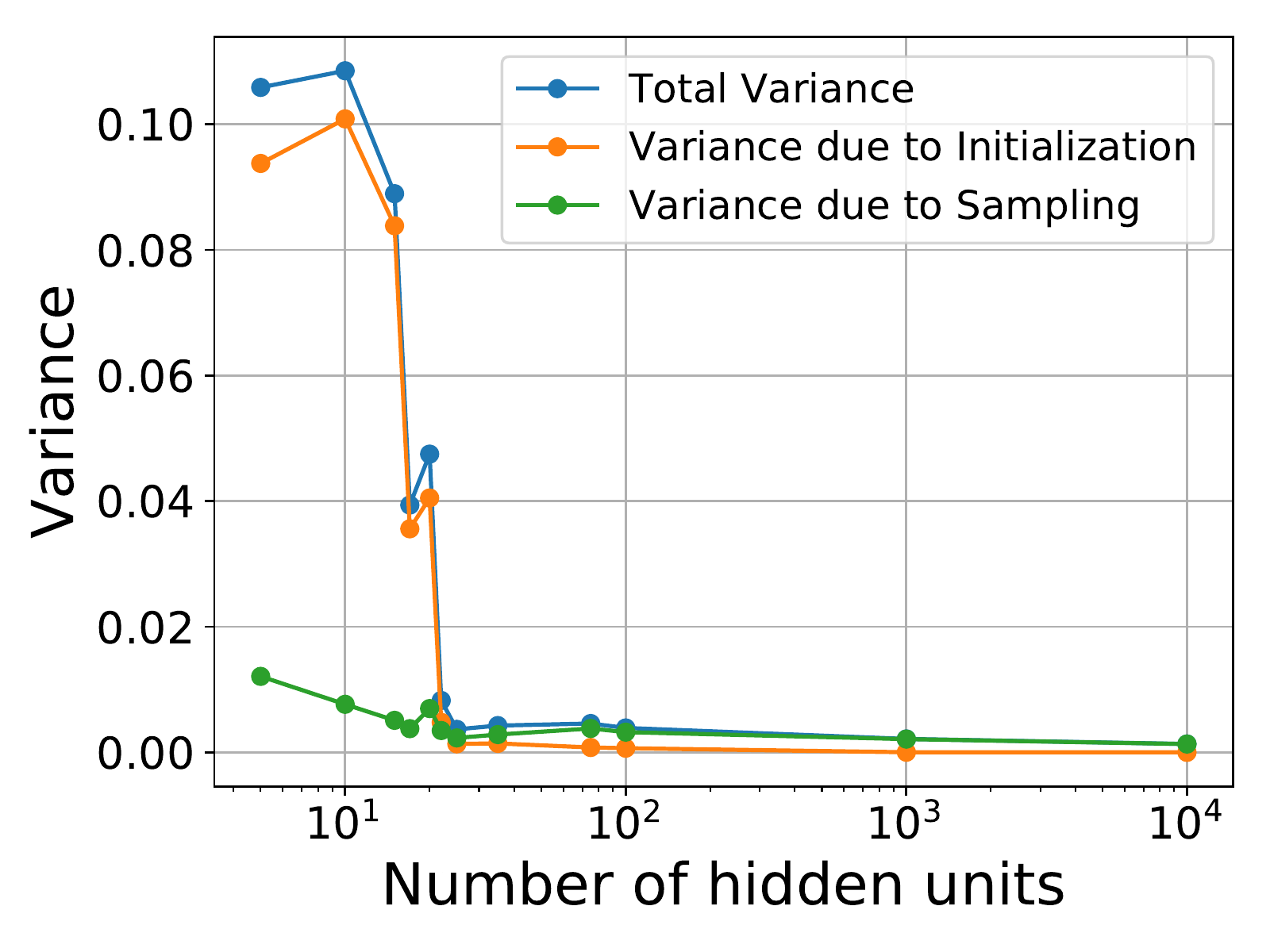}
    }
    \hfill
    \subfigure{
        \includegraphics[width=.475\textwidth]{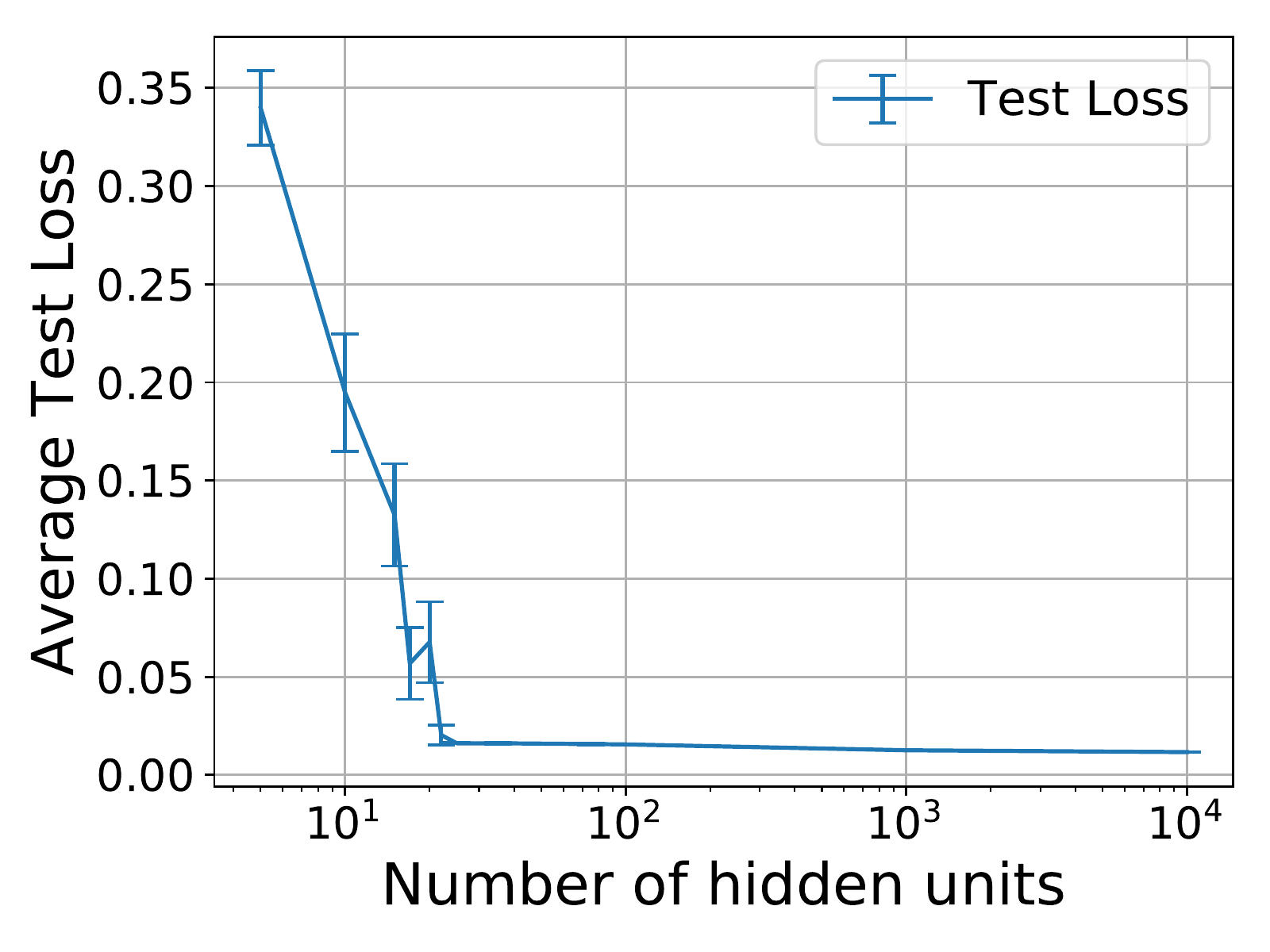}
    }
    \caption[Sinusoid regression task: decomposed variance and test error]{We observe the same trends of decomposed variance (left) and test error (right) in the sinusoid regression setting.}
    \label{fig:sinusoid_curves_app}
\end{figure}

\chapter{Depth and variance}
\label{app:depth}

\section{Main graphs}

\begin{figure}[H]
    \centering
    \subfigure[Bias and variance trends with depth, using dynamical isometry]{
        \includegraphics[width=.475\textwidth]{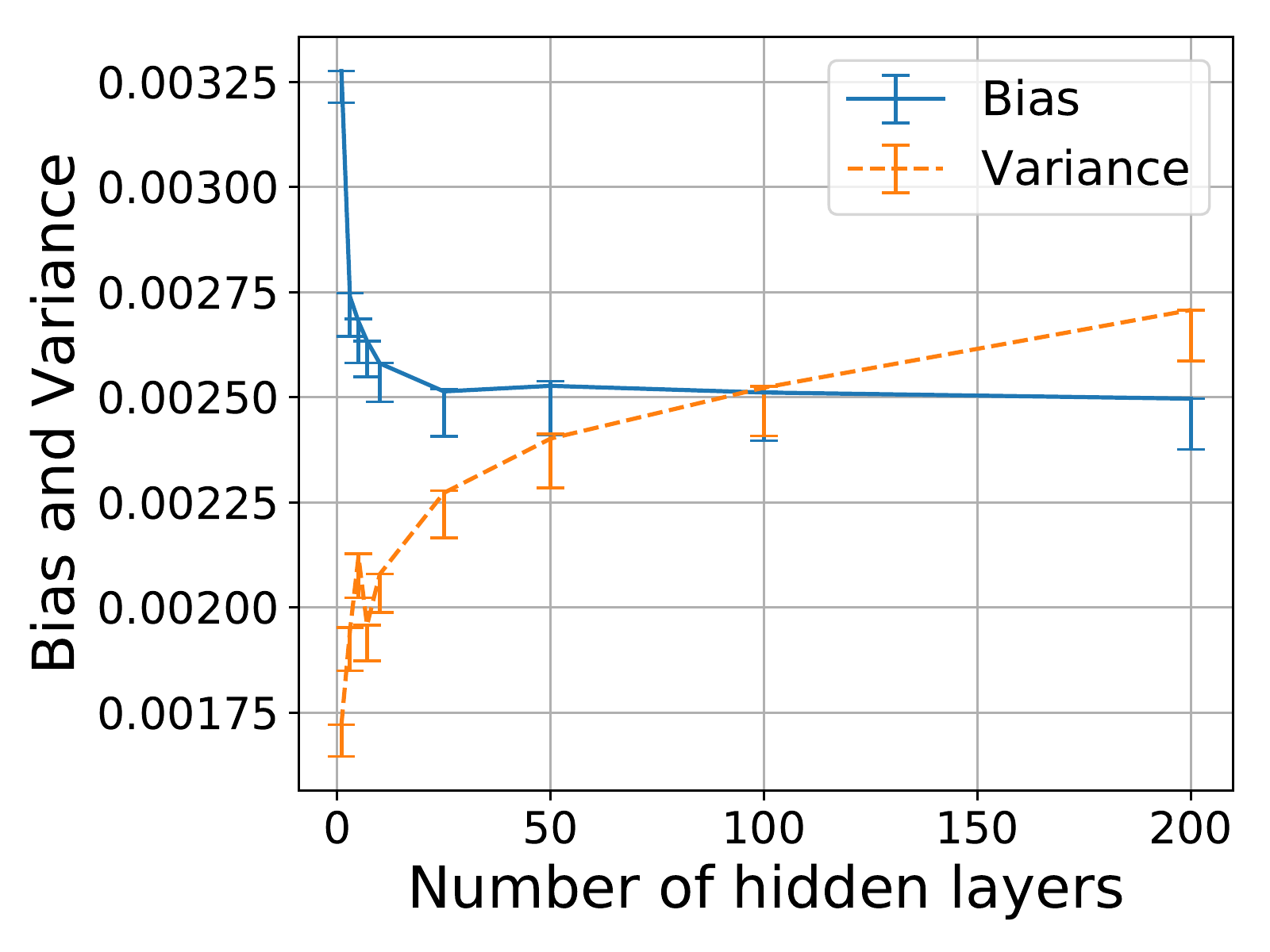}
        \label{fig:dyn_iso_bv}
    }
    \hfill
    \subfigure[Test error trends, using dynamical isometry vs.\ skip connections]{
        \includegraphics[width=.475\textwidth]{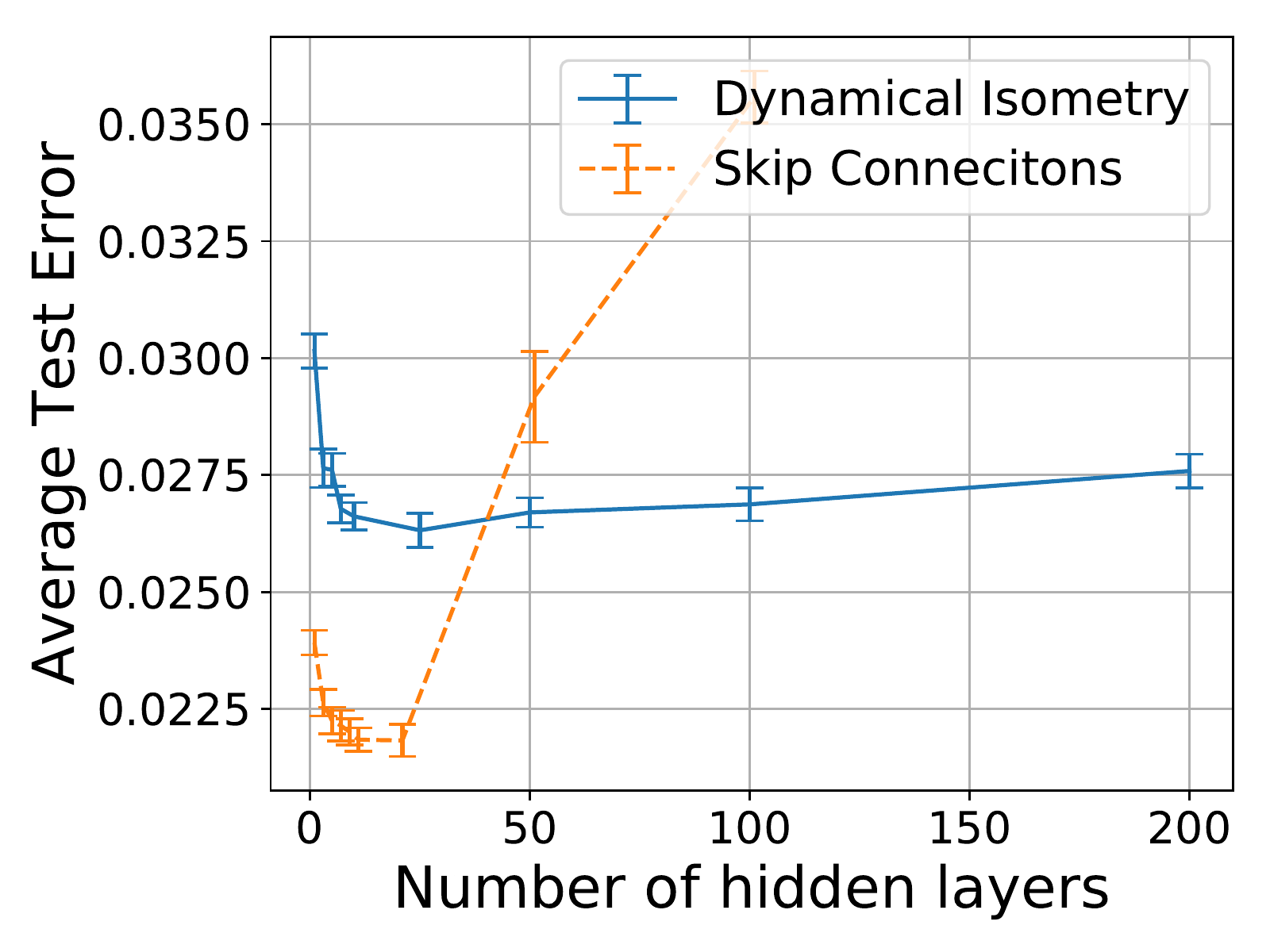}
        \label{fig:depth_test_error}
    }
    \caption[Main bias-variance with increasing network depth experiment]{We can see that, when using dynamical isometry, bias decreases with depth and variance slowly increases with depth (left). This increase in variance is so small that it only translates to a an increase in test error of about 0.1\% for depth 25 to depth 200 (right).}
\end{figure}

\section{Discussion on need for careful experimental design}
\label{app:depth_exp_protocol_discussion}

Depth is an important component of deep learning. We study its effect on bias and variance by fixing width and varying depth. However, there are pathological problems associated with training very deep networks such as vanishing/exploding gradient \citep{Hochreiter:91, long-term_dependencies, xavier2010}, signal not being able to propagate through the network \citep{deep_info_prop}, and gradients resembling white noise \citep{shattered_gradient}. \citet{resnet1} pointed out that very deep networks experience high test set error and argued it was due to high training set loss. However, while skip connections \citep{resnet1}, better initialization \citep{xavier2010}, and batch normalization \citep{batchnorm} have largely served to facilitate low training loss in very deep networks, the problem of high \textit{test set} error still remains.

The current best practices for achieving low test error in very deep networks arose out of trying to solve the above problems in training. An initial step was to ensure the mean squared singular value of the input-output Jacobian, at initialization, is close to 1 \citep{xavier2010}. More recently, there has been work on a stronger condition known as \textit{dynamical isometry}, where \textit{all} singular values remain close to 1 \citep{Saxe14exactsolutions, resurrecting_sigmoid}. \citet{resurrecting_sigmoid} also empirically found that dynamical isometry helped achieve low test set error. Furthermore, \citet[Figure 1]{xiao18} found evidence that test set performance did not degrade with depth when they lifted dynamical isometry to CNNs. This why we settled on dynamical isometry as the best known practice to control for as many confounding factors as possible.

We first ran experiments with vanilla full connected networks (\cref{fig:depth_vanilla}). These have clear training issues where networks of depth more than 20 take very long to train to the target training loss of \mbox{5e-5}. The bias curve is not even monotonically decreasing. Clearly, there are important confounding factors not controlled for in this simple setting. Still, note that variance increases roughly linearly with depth.

We then study fully connected networks with skip connections between every 2 layers (\cref{fig:depth_skip}). While this allows us to train deeper networks than without skip connections, many of the same issues persist (e.g. bias still not monotonically decreasing). The bias, variance, and test error curves are all checkmark-shaped.

\section{Vanilla fully connected depth experiments}
\label{app:depth_vanilla}

\begin{figure}[H]
    \centering
    \subfigure{
        \includegraphics[width=.475\textwidth]{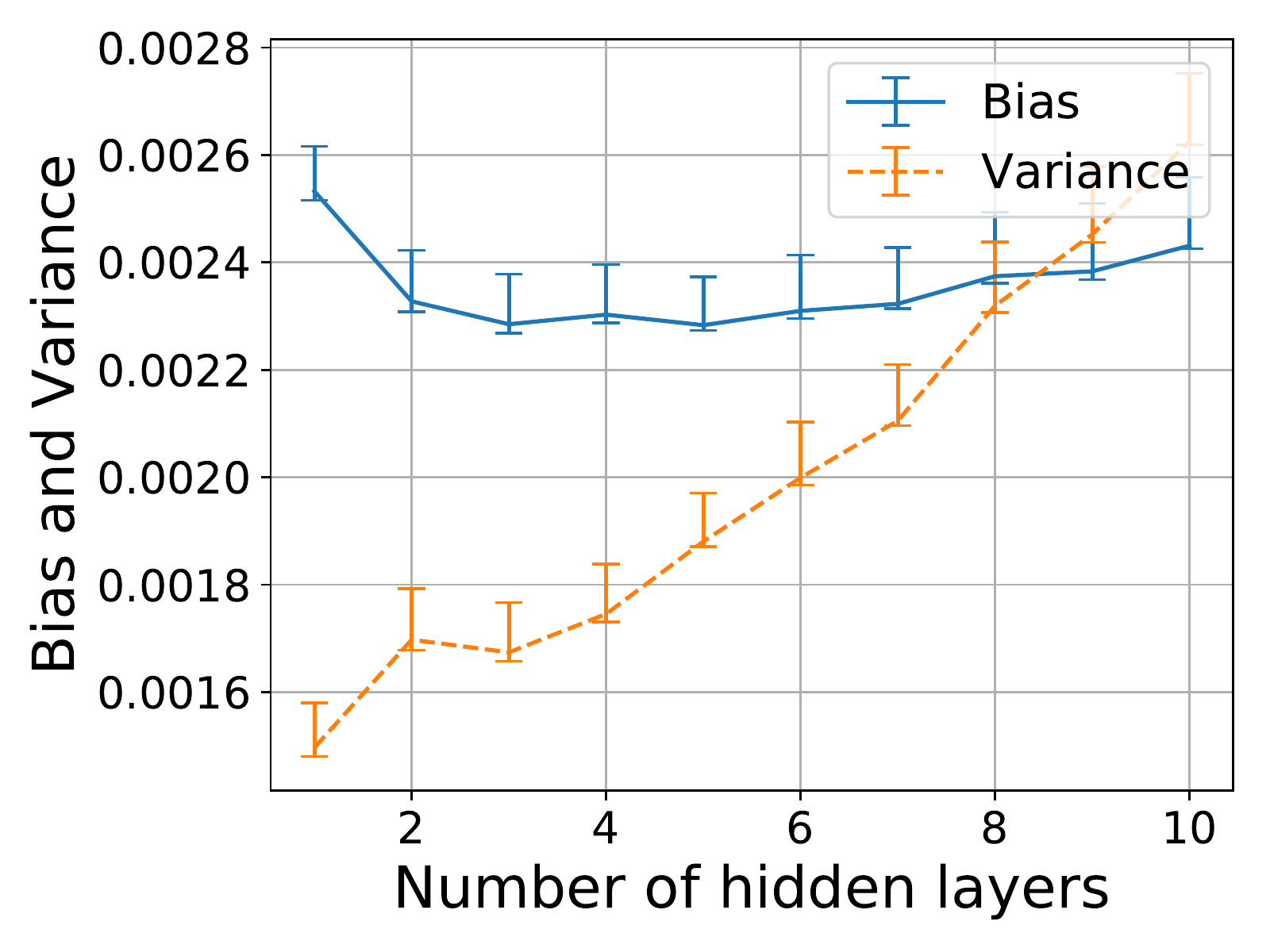}
    }
    \hfill
    \subfigure{
        \includegraphics[width=.475\textwidth]{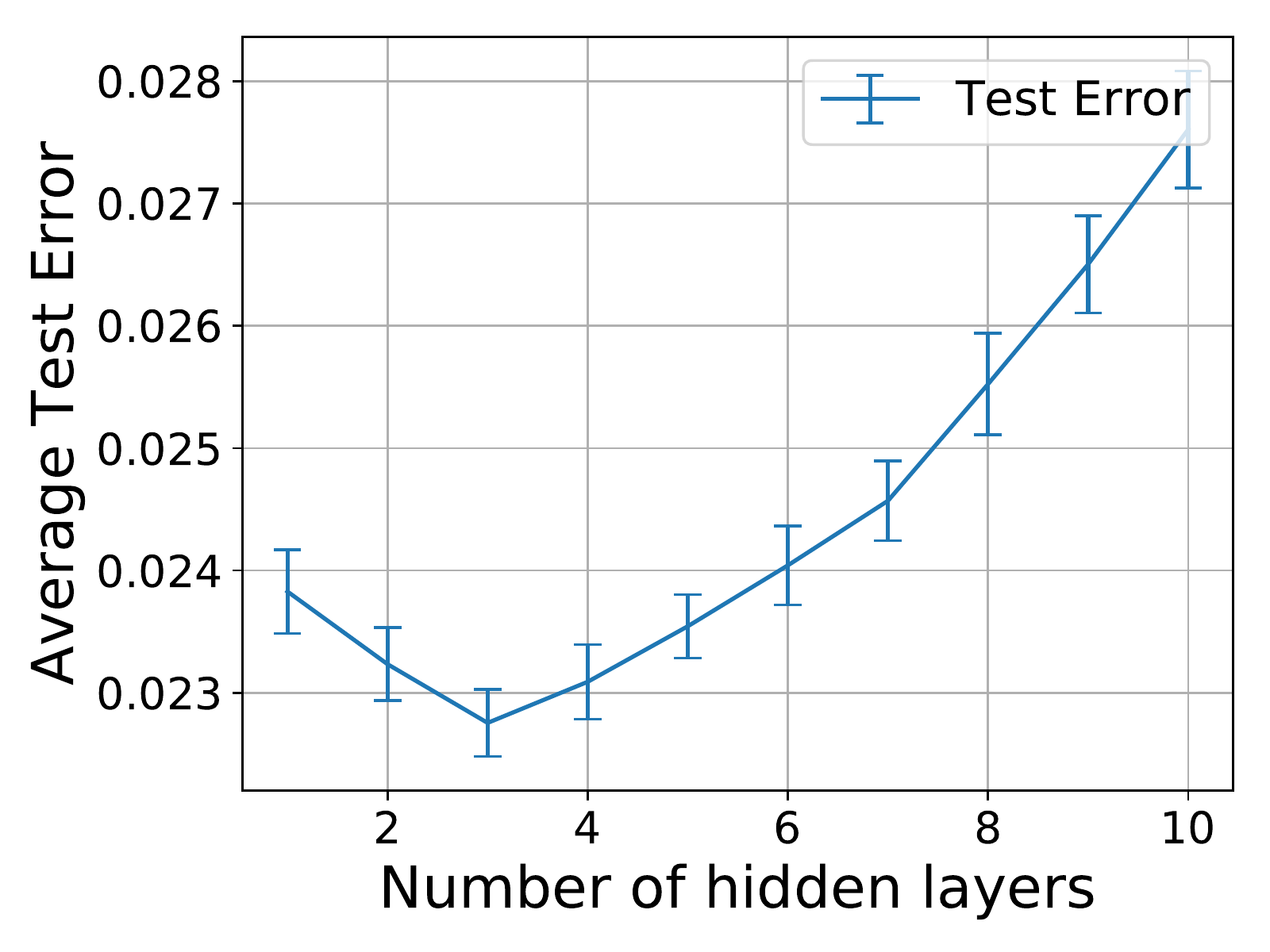}
    }
    \caption[Increasing depth in vanilla fully connected networks]{Test error quickly degrades in fairly shallow fully connected networks, and bias does not even monotonically decrease with depth. However, this is the first indication that variance might \textit{increase} with depth. All networks have training error 0 and are trained to the same training loss of 5e-5.}
    \label{fig:depth_vanilla}
\end{figure}

\section{Skip connections depth experiments}
\label{app:depth_skip}

\begin{figure}[H]
    \centering
    \subfigure{
        \includegraphics[width=.475\textwidth]{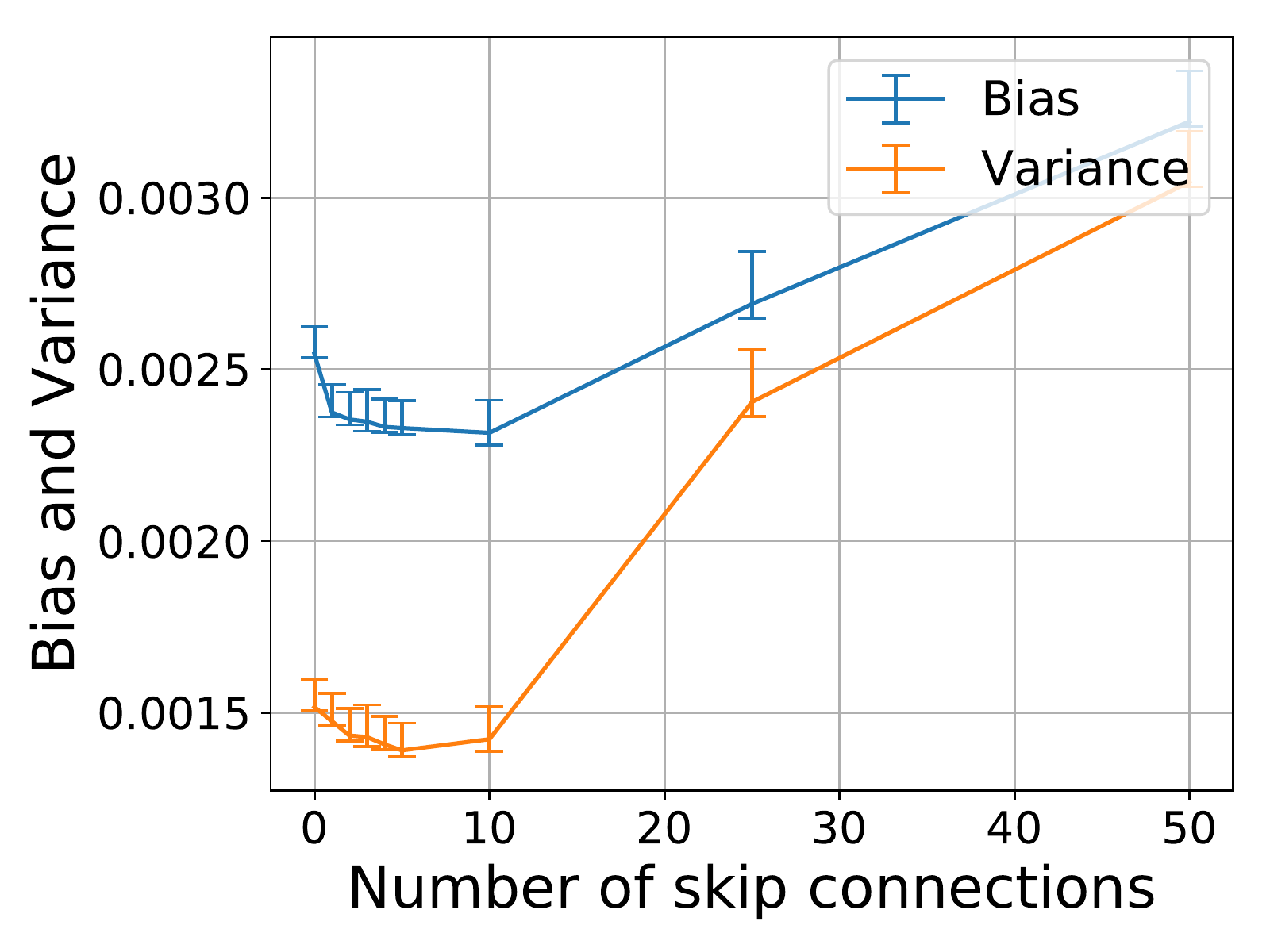}
    }
    \hfill
    \subfigure{\includegraphics[width=.475\textwidth]{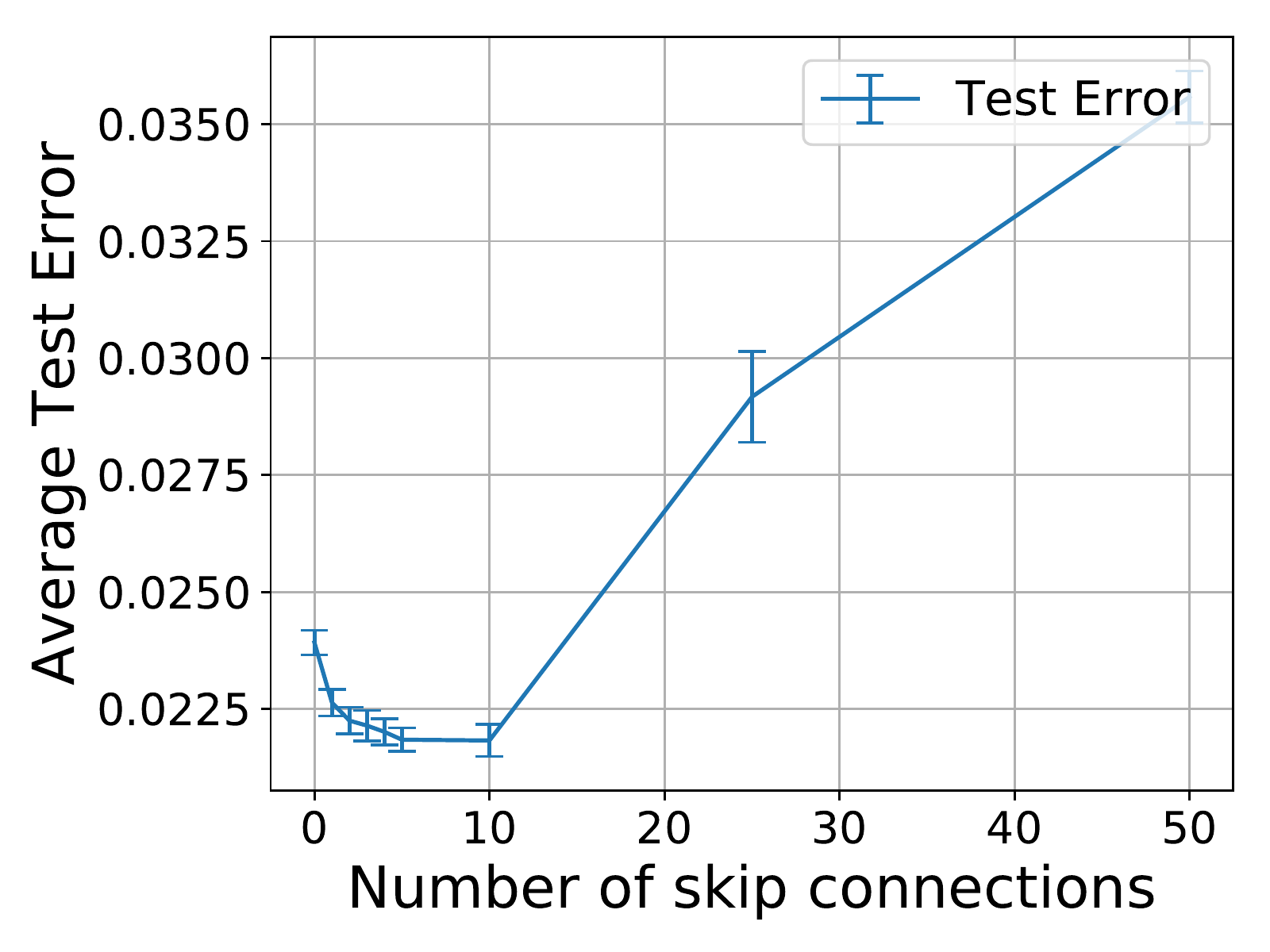}}
    \caption[Increasing depth in fully connected networks with skip connections]{While the addition of skip connections (between every other layer) might push the bottom of the U curve in test error out to 10 skip connections (21 layers), which is further than where the bottom was observed without skip connections (3 layers), test error still degrades noticeably in greater depths. Additionally, bias still does not even monotonically decrease with depth. While skip connections appear to have helped control for the factors we want to control, they were not completely satisfying. All networks have training error 0 and are trained to the same training loss of 5e-5.}
    \label{fig:depth_skip}
\end{figure}

\section{Dynamical isometry depth experiments}
\label{app:depth_dyn_iso}

The figures in this section are included in the main paper, but they are included here for comparison to the above and for completeness.

\begin{figure}[H]
    \centering
    \subfigure{
        \includegraphics[width=.475\textwidth]{figures/depth_dyn_iso/bias-variance}
    }
    \hfill
    \subfigure{
        \includegraphics[width=.475\textwidth]{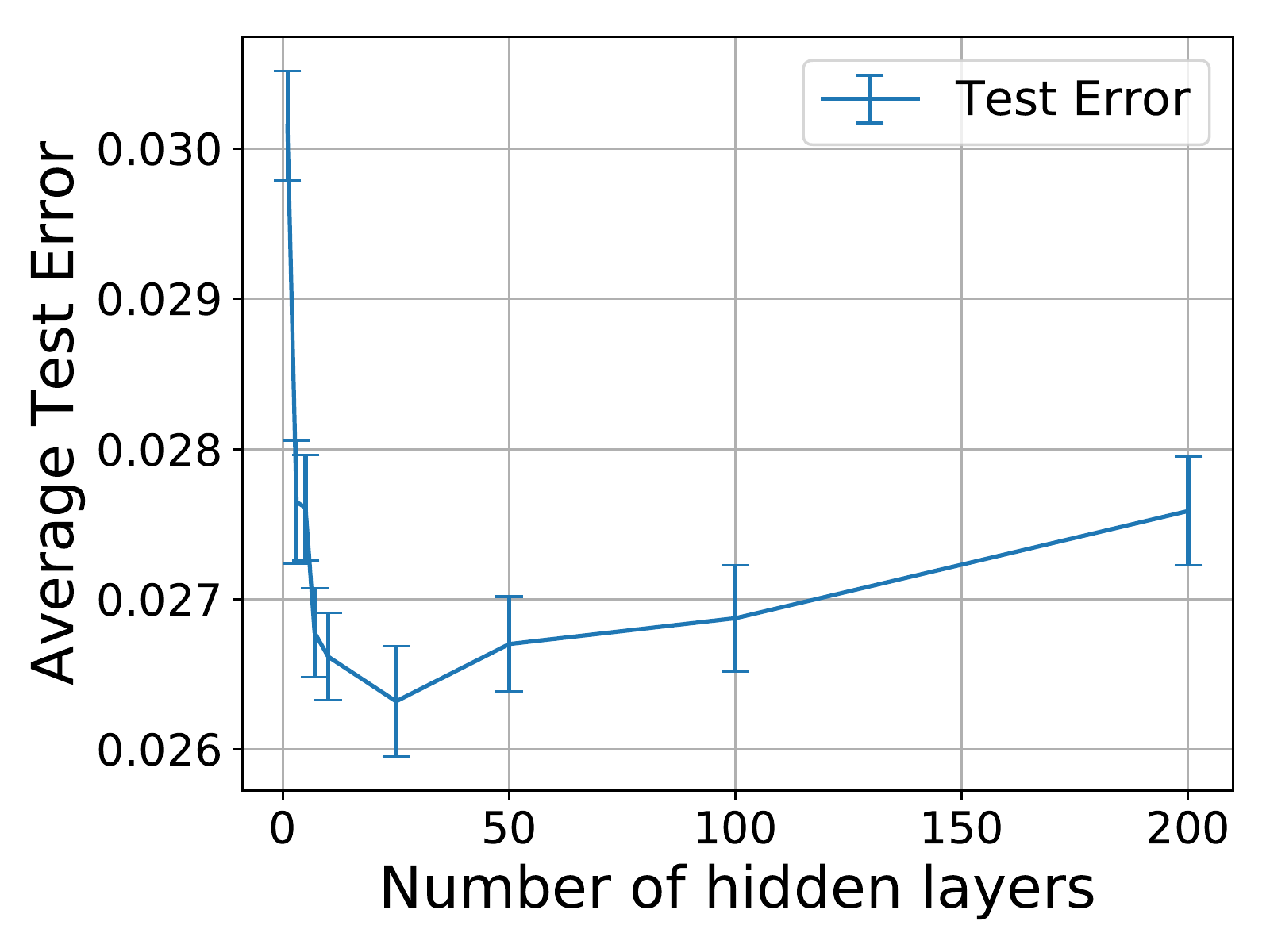}
    }
    \caption[Increasing depth in fully connected networks with dynamical isometry]{Additionally, dynamical isometry seems to cause bias to decrease monotonically with depth. While skip connections appear to have helped control for the factors we want to control, they were not completely satisfying. All networks have training error 0 and are trained to the same training loss of 5e-5.}
\end{figure}

\chapter{Some Proofs} 
\label{app:proofs}

\section{Proof of Classic Result for Variance of Linear Model}
\label{app:linear_underparam}

Here, we reproduce the classic result that variance grows with the number of parameters in a linear model. This result can be found in \citet{hastie_09}'s book, and a similar proof can be found in \citet{linear_var_lecture}'s lecture slides.
\begin{proof} 
For a fixed $x$, we have $h(x) = x^T \hat{\theta}$. 
Taking $\hat{\theta} = \Sigma^{-1} X^T Y$ to be the gradient descent solution, and using $Y= X\theta + \epsilon$, we obtain:
$$
h(x) = x^T \Sigma^{-1} X^T (X\theta + \epsilon) = 
x^T \theta + x^T \Sigma^{-1} X^T \epsilon
$$
Hence $\E_\epsilon[h(x)] = x^T \theta$, and the variance is,  
\begin{align*}
    \Var_\epsilon(h(x)) &= \E_\epsilon[(h(x) - \E_\epsilon[h(x)])^2] \\
    &= \E_\epsilon[(x^T \theta + x^T \Sigma^{-1} X^T \epsilon - x^T \theta)^2] \\
    &= \E_\epsilon[(x^T \Sigma^{-1} X^T \epsilon)^2] \\
    &= \E_\epsilon[(x^T \Sigma^{-1} X^T \epsilon)(x^T \Sigma^{-1} X^T \epsilon)^T] \\
    &= \E_\epsilon[x^T \Sigma^{-1} X^T \epsilon \epsilon^T (x^T \Sigma^{-1} X^T)^T] \\
    &= \sigma_\epsilon^2 x^T \Sigma^{-1}  \Sigma \Sigma^{-1} x\\
    &= \sigma_\epsilon^2 x^T \Sigma^{-1}  \Sigma \Sigma^{-1} x\\
    &= \sigma_\epsilon^2 x^T \Sigma^{-1} x \\
    &= \sigma_\epsilon^2 \Tr(x^T \Sigma^{-1} x) \\
    &= \sigma_\epsilon^2 \Tr(x x^T \Sigma^{-1}) 
\end{align*}

Taking the expected value over the empirical distribution, $\hat{p}$, of the sample, we find an explicit increasing dependence on $N$:
\begin{align*}
    \E_{x \sim \hat{p}} [\Var_\epsilon(h(x))] &= \E_{x \sim \hat{p}} [\sigma_\epsilon^2 \Tr(x x^T \Sigma^{-1})] \\
    &= \sigma_\epsilon^2 \Tr(\E_{x \sim \hat{p}} [x x^T] \Sigma^{-1}) \\
    &= \sigma_\epsilon^2 \Tr \left(\frac{1}{m} \Sigma \Sigma^{-1} \right) \\
    &= \sigma_\epsilon^2 \frac{1}{m} \Tr(I_N) \\
    &= \sigma_\epsilon^2 \frac{N}{m}
\end{align*}

\end{proof}

\section{Proof of Result for Variance of Over-parameterized Linear Models}
\label{app:linear_overparam}

Here, we produce a variation on what was done in \cref{app:linear_underparam} to show that variance does not grow with the number of parameters in over-parameterized linear models. Recall that we are considering the setting where $N > m$, where $N$ is the number of parameters and $m$ is the number of training examples.

\begin{proof}
By the law of total variance, 
$$ 
\Var(h(x)) = \E_\epsilon\Var_{\theta_0}(h(x)) + \Var_{\epsilon}(\E_{\theta_0}[h(x)])
$$
Here have $h(x) = x^T \hat{\theta}$, where $\hat{\theta}$ the gradient descent solution   $\hat{\theta} = P_\perp(\theta_0) + \Sigma^+ X^T Y$, and $\theta_0 \sim \mathcal{N}(0, \frac{1}{N} I)$. Then, 
\begin{align*}
\Var_{\theta_0}(h(x)) &= \E_{\theta_0}[(h(x) - \E_{\theta_0}[h(x)])^2] \\
    &= \E_{\theta_0}[x^T(P_\perp(\theta_0) - \E_{\theta_0}[P_\perp(\theta_0)])^2]\\
    &=\Var_{\theta_0}(x^T P_\perp(\theta_0))\\
    &=\Var_{\theta_0}(P_\perp(x)^T P_\perp(\theta_0))\\
    &= \frac{1}{N} \|P_\perp(x)\|^2
\end{align*}
Since $\E_{\theta_0}(h(x)) = x^T \Sigma^+ X^T Y$, the calculation of $\Var_\epsilon(\E_{\theta_0}) h(x))$ is similar as in  \ref{app:linear_underparam}, where $\Sigma^{-1}$ is replaced by $\Sigma^+$.
Thus, 
$$
    \Var_\epsilon(\E_{\theta_0} h(x))   
    = \sigma_\epsilon^2 \Tr(x x^T \Sigma^{+}) 
$$

Taking the expected value over the empirical distribution, $\hat{p}$, of the sample, we find an explicit dependence on $r = \rank(X)$, not $N$:
\begin{align*}
    \E_{x \sim \hat{p}} [\Var(h(x))] &= 0 +  \E_{x \sim \hat{p}} [\sigma_\epsilon^2 \Tr(x x^T \Sigma^{+})] \\
    &= \sigma_\epsilon^2 \Tr(\E_{x \sim \hat{p}} [x x^T] \Sigma^{+}) \\
    &= \sigma_\epsilon^2 \Tr \left(\frac{1}{m} \Sigma \Sigma^{+} \right) \\
    &= \sigma_\epsilon^2 \frac{1}{m} \Tr(I_r^+) \\
    &= \sigma_\epsilon^2 \frac{r}{m}
\end{align*}
where $I_r^+$ denotes the diagonal matrix with 1 for the first $r$ diagonal elements and $0$ for the remaining $N - r$ elements.
\end{proof}

\section{Proof of \cref{thm:init-var-decay}}
\label{app:more_general_setting}

First we state some known concentration results \citep{ledoux2001concentration} that we will use in the proof.

\begin{lemma}[Levy] \label{Levy} Let $h: S^{n}_R \to \reals$ be a function on the $n$-dimensional Euclidean sphere of radius $R$, with Lipschitz constant $L$; and $\theta \in S^n_R$ chosen uniformly at random for the normalized measure.  Then 
\beq
\Prob(|h(\theta)-\E[h]| >\epsilon) \leq 2 \exp\left( - C \frac{n \epsilon^2}{ L^2 R^2 }\right)
\eeq
for some universal constant $C >0$.
\end{lemma}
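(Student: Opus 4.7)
The plan is to prove Levy's lemma via the classical two-step concentration-from-isoperimetry argument: first obtain concentration of $h$ around its median using the spherical isoperimetric inequality, then replace the median by the mean. I will treat the spherical isoperimetric inequality (due to Lévy and Schmidt) as a cited black box, since its own proof is long and classical.

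First, by homogeneity I would reduce to the unit sphere: if $\tilde h(\omega) := h(R\omega)$ for $\omega \in S^n_1$, then $\tilde h$ has Lipschitz constant $LR$ and the target bound becomes the statement for $R=1$ with Lipschitz constant replaced by $LR$; so it suffices to prove
\[
  \Prob(|\tilde h - \E[\tilde h]| > \epsilon) \leq 2 \exp\!\Bigl(-C\,\tfrac{n\epsilon^2}{(LR)^2}\Bigr).
\]
Next I would invoke spherical isoperimetry in its concentration-of-measure form: for any Borel set $A \subset S^n_1$ with normalized measure $\sigma(A) \geq 1/2$, its $t$-neighborhood $A_t := \{\omega : d(\omega, A) \leq t\}$ (where $d$ is either geodesic or chordal distance; these are comparable up to a constant that gets absorbed into $C$) satisfies $\sigma(A_t^c) \leq \exp(-C n t^2)$.

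Then I would apply this to level sets of $\tilde h$. Let $M$ be a median of $\tilde h$ and set $A := \{\tilde h \leq M\}$, so $\sigma(A) \geq 1/2$. By the Lipschitz property, any $\omega \in A_{\epsilon/(LR)}$ satisfies $\tilde h(\omega) \leq M + \epsilon$, hence $\{\tilde h > M + \epsilon\} \subset A_{\epsilon/(LR)}^c$ and isoperimetry gives $\Prob(\tilde h > M + \epsilon) \leq \exp(-Cn\epsilon^2/(LR)^2)$. Repeating with $A' := \{\tilde h \geq M\}$ yields the same bound on the lower tail, so $\Prob(|\tilde h - M| > \epsilon) \leq 2\exp(-Cn\epsilon^2/(LR)^2)$.

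Finally I would upgrade median to mean via the standard tail-integration bound
\[
  |\E[\tilde h] - M| \leq \int_0^\infty \Prob(|\tilde h - M| > t)\, dt \leq 2\int_0^\infty \exp\!\Bigl(-\tfrac{Cnt^2}{(LR)^2}\Bigr) dt = O\!\Bigl(\tfrac{LR}{\sqrt{n}}\Bigr),
\]
so replacing $M$ by $\E[\tilde h]$ only shifts the threshold by $O(LR/\sqrt n)$, which is absorbed into the constant $C$ (possibly weakening it) by a standard argument treating the regimes $\epsilon \lesssim LR/\sqrt n$ and $\epsilon \gtrsim LR/\sqrt n$ separately. The main obstacle, strictly speaking, is the spherical isoperimetric inequality itself; beyond that, the rest of the argument is a routine packaging that I would write out carefully but without surprises.
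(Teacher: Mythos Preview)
Your proof outline is the standard and correct argument for L\'evy's concentration inequality: reduce to the unit sphere by scaling, use spherical isoperimetry to get sub-Gaussian concentration around the median, and then pass from median to mean by integrating the tail. There is nothing wrong with it.

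However, you should be aware that the paper does not actually prove this lemma at all. It is stated in the appendix explicitly as a ``known concentration result'' cited from \citet{ledoux2001concentration}, and is used purely as a black box to derive the Gaussian concentration lemma and, ultimately, the variance bound of Theorem~\ref{thm:init-var-decay}. So rather than matching or differing from the paper's proof, your proposal supplies a proof where the paper simply cites one. If your goal is to reproduce the paper's treatment, a one-line citation to Ledoux suffices; if your goal is a self-contained exposition, your sketch is the right one, with the spherical isoperimetric inequality as the one genuinely nontrivial ingredient you would still need to cite.
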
 

Uniform measures on high dimensional spheres approximate Gaussian distributions \citep{ledoux2001concentration}. Using this, Levy's lemma yields an analogous concentration inequality for functions of Gaussian variables:

\begin{lemma}[Gaussian concentration] \label{Levy-Gauss} Let $h: \R^n \to \reals$ be a function on the Euclidean space $\R^n$, with Lipschitz constant $L$;  and $\theta \sim \mathcal{N}(0, \sigma \mathbb{I}_n)$  sampled from an isotropic $n$-dimensional Gaussian. Then: 
\beq
\Prob(|h(\theta)-\E[h]| >\epsilon) \leq 2 \exp\left( - C \frac{\epsilon^2}{ L^2 \sigma^2 }\right)
\eeq
for some universal constant $C >0$.
\end{lemma}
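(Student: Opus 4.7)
The plan is to derive Lemma~\ref{Levy-Gauss} from Lemma~\ref{Levy} via the classical Poincar\'e observation that an isotropic Gaussian on $\R^n$ is the weak limit of low-dimensional marginals of uniform measures on high-dimensional spheres. Fix the $L$-Lipschitz function $h:\R^n \to \R$, and for each large integer $N \geq n$ define the lift $\tilde h : \R^N \to \R$ by $\tilde h(\theta_1,\dots,\theta_N) = h(\theta_1,\dots,\theta_n)$. Since coordinate projection is $1$-Lipschitz, $\tilde h$ inherits the same Lipschitz constant $L$.

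Next I would apply Lemma~\ref{Levy} to $\tilde h$ restricted to the Euclidean sphere $S^{N-1}_R \subset \R^N$ of radius $R = \sigma \sqrt{N}$. Writing $\Theta^{(N)}$ for a uniformly distributed point on this sphere, Levy's inequality gives
\begin{equation*}
\Prob\bigl(|\tilde h(\Theta^{(N)}) - \E[\tilde h(\Theta^{(N)})]| > \epsilon\bigr) \;\leq\; 2\exp\!\left(-C\,\frac{(N-1)\,\epsilon^2}{L^2\,\sigma^2\,N}\right).
\end{equation*}
The key algebraic observation is that $R^2 = \sigma^2 N$ in the denominator cancels with the dimension factor $(N-1) \approx N$ in the numerator, producing a bound that is essentially independent of $N$. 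Taking $N \to \infty$ and invoking the Poincar\'e limit, the first $n$ coordinates of $\Theta^{(N)}$ converge in distribution to $\mathcal{N}(0,\sigma^2 I_n)$, so $\tilde h(\Theta^{(N)}) \Rightarrow h(\theta)$ with $\theta \sim \mathcal{N}(0,\sigma^2 I_n)$. Passing both sides of the inequality to the limit recovers the claim, with the extra factor $(N-1)/N \to 1$ absorbed into the universal constant $C$.

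The main obstacle is making the convergence step rigorous on both sides of the inequality. I need: (a) the recentering constants $\E[\tilde h(\Theta^{(N)})]$ converge to $\E_{\theta\sim\mathcal{N}(0,\sigma^2 I_n)}[h(\theta)]$, and (b) the probability of the event $\{|\,\cdot\,|>\epsilon\}$ passes to the limit. For (a), the Lipschitz growth bound $|h(\theta)| \leq |h(0)| + L\,\|\theta\|$ combined with standard Gaussian moment control (or, in the lifted picture, with the known concentration of $\|\Theta^{(N)}\|_{\text{first }n\text{ coords}}^2$) supplies the required uniform integrability. For (b), weak convergence is enough once one notes that the boundary $\{|h(\theta) - \E h| = \epsilon\}$ carries no Gaussian mass for all but countably many $\epsilon$, and any residual discontinuity is handled by shrinking $\epsilon$ slightly. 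A slicker one-shot alternative avoids the limit entirely: couple $\theta \sim \mathcal{N}(0,\sigma^2 I_n)$ to $\Theta^{(N)}$ on a single sphere by writing $\Theta^{(N)} \stackrel{d}{=} \sigma \sqrt{N}\, G/\|G\|$ with $G \sim \mathcal{N}(0,I_N)$, then control the discrepancy using sharp concentration of $\|G\|/\sqrt{N}$ around $1$ together with the Lipschitz property of $h$.
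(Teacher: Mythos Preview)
Your approach is exactly the route the paper indicates: the paper does not give a detailed proof of this lemma but simply precedes the statement with the sentence ``Uniform measures on high dimensional spheres approximate Gaussian distributions \citep{ledoux2001concentration}. Using this, Levy's lemma yields an analogous concentration inequality for functions of Gaussian variables,'' and otherwise defers to Ledoux's book. Your argument via the Poincar\'e limit and the lift to $S^{N-1}_{\sigma\sqrt{N}}$ is the standard way to make that sentence precise, and the technical points you flag (convergence of the centering constants, handling the boundary set in the weak limit, or alternatively the direct coupling through $G/\|G\|$) are the right ones; the paper simply treats the result as known and does not carry any of this out.
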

Note that in the Gaussian case, the bound is dimension free. 

In turn, concentration inequalities  give variance bounds for functions of random variables.

\begin{cor} \label{Levy-variance}
 Let $h$ be a function satisfying the conditions of Theorem \ref{Levy-Gauss},   and $\mbox{Var}(h) = \E[(h - \E[h])^2]$. Then 
 \beq 
  \mbox{Var}(h) \leq \frac{2 L^2 \sigma^2 }{C}
 \eeq
\end{cor}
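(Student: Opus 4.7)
The plan is to pass from the tail bound of \cref{Levy-Gauss} to a bound on the second moment of $h-\E[h]$ using the standard layer cake (or tail-integration) identity. Write $Z = (h(\theta)-\E[h])^2$, which is non-negative, so
\begin{equation*}
\Var(h) = \E[Z] = \int_0^\infty \Prob(Z > t)\, dt = \int_0^\infty \Prob(|h(\theta)-\E[h]| > \sqrt{t})\, dt .
\end{equation*}

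Next I would apply the Gaussian concentration inequality from \cref{Levy-Gauss} directly inside the integrand with $\epsilon = \sqrt{t}$, giving $\Prob(|h(\theta)-\E[h]| > \sqrt{t}) \leq 2\exp(-Ct/(L^2\sigma^2))$, and integrate:
\begin{equation*}
\Var(h) \leq \int_0^\infty 2\exp\!\left(-\frac{Ct}{L^2\sigma^2}\right) dt = \frac{2L^2\sigma^2}{C},
\end{equation*}
which is exactly the claimed bound. Alternatively, one could substitute $t = \epsilon^2$ so $dt = 2\epsilon\, d\epsilon$ and evaluate $\int_0^\infty 4\epsilon \exp(-C\epsilon^2/(L^2\sigma^2))\, d\epsilon$; both routes give the same constant.

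There is no real obstacle here; the proof is a one-line calculation once one recognizes the tail-to-moment identity. The only thing to be mildly careful about is that $h(\theta)-\E[h]$ is integrable and has finite variance, which is automatic since the sub-Gaussian tail bound implies all moments exist and the integral above converges. Everything else (the Lipschitz constant $L$, the scale $\sigma$, and the universal constant $C$) is inherited directly from \cref{Levy-Gauss}, so the resulting bound is dimension-free, matching the Gaussian nature of the concentration statement.
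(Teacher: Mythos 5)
Your proof is correct and follows essentially the same route as the paper: both bound the variance by integrating the Gaussian concentration tail from Lemma \ref{Levy-Gauss} via the layer cake identity (the paper writes $\E[|g|^2]=2\int_0^\infty t\,\Prob(|g|>t)\,dt$ with $g=h-\E[h]$, which is exactly your substitution $t=\epsilon^2$ variant), arriving at the same constant $2L^2\sigma^2/C$. No gaps.
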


\begin{proof} Let $g = h - \E[h]$. Then  $\mbox{Var}(h) = \mbox{Var}(g)$ and 
\beq
\mbox{Var}(g) = \E[|g|^2] 
= 2 \E\int_0^{|g|} t dt 
 =2 \E\int_0^\infty t \mathbbm{1}_{|g|>t} \, dt
\eeq
Now swapping expectation and integral (by Fubini theorem), and by using  the identity $\E \mathbbm{1}_{|g|>t} = \Prob(|g| > t)$, we obtain
\begin{align*}
\mbox{Var}(g) & = 2 \int_0^\infty t \, \Prob_R(|g| > t) \, d t \\
& \leq 2 \int_0^\infty 2 t  \exp\left( - C \frac{t^2}{ L^2\sigma^2}\right) d t \\
 & = 2 \left[-\frac{L^2 \sigma^2 }{C} \exp\left( - C \frac{t^2}{L^2\sigma^2}\right)\right]_0^\infty  = \frac{2 L^2\sigma^2}{C}
 \end{align*} 
\end{proof}

We are now ready to prove Theorem $\ref{thm:init-var-decay}$.
We first recall our assumptions:

\begin{assumption}
\label{assum:invariant-space}
The optimization of the loss function is invariant with respect to $\theta_{\mathcal{M}\perp}$.
\end{assumption}

\begin{assumption}
\label{assum:deterministic-solution}  
Along $\mathcal{M}$, optimization yields solutions independently of the initialization $\theta_0$.
\end{assumption}

We add the following assumptions.

\begin{assumption}
\label{assum:lipschitz}
The prediction $h_{\theta}(x)$ is 
$L$-Lipschitz with respect to $\theta_{\mathcal{M}\perp}$.
\end{assumption}

\begin{assumption}
\label{assum:init}
The network parameters are initialized as 
\beq
    \theta_0 \sim 
    \mathcal{N}(0, \frac{1}{N}\cdot I_{N\times N}).
\eeq
\end{assumption}

We first prove that the Gaussian concentration theorem  translates into concentration of predictions in the setting of \cref{sec:variance-from-optimization}.
\begin{theorem}[Concentration of predictions]
\label{thm:concentration-predictions}
Consider the setting of \cref{sec:back-to-nn} and Assumptions \ref{assum:invariant-space} and \ref{assum:init}.  
Let $\theta$ denote the parameters at the end of the learning process.
Then, for a fixed data set, $S$ we get concentration of the prediction, under initialization randomness, 
\begin{equation}
    \Prob(|h_{\theta}(x)-\E[h_{\theta}(x)]| >\epsilon) \leq 2 \exp\left( - C \frac{N \epsilon^2}{ L^2}\right)
\end{equation}
for some universal constant $C >0$.
\end{theorem}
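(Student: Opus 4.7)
The plan is to reduce the randomness in $h_\theta(x)$ to a Gaussian random vector supported on $\mathcal{M}^\perp$ and then apply the dimension-free Gaussian concentration lemma (Lemma \ref{Levy-Gauss}). The key observation is that Assumptions \ref{assum:invariant-space} and \ref{assum:deterministic-solution} together force the trained parameter vector to have the form $\theta = \theta_\mathcal{M}^* + \theta_{0,\mathcal{M}^\perp}$, where $\theta_\mathcal{M}^*$ is a deterministic (data-dependent) vector on $\mathcal{M}$ and $\theta_{0,\mathcal{M}^\perp}$ is the orthogonal projection of the random initialization onto $\mathcal{M}^\perp$. Thus, for a fixed $S$, all initialization randomness in the prediction funnels through $\theta_{0,\mathcal{M}^\perp}$.

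First, I would fix the data set $S$ and the example $x$, and define the auxiliary function $g : \mathcal{M}^\perp \to \R$ by $g(u) = h_{\theta_\mathcal{M}^* + u}(x)$. By Assumption \ref{assum:lipschitz}, this $g$ is $L$-Lipschitz on $\mathcal{M}^\perp$ (with respect to the Euclidean norm inherited from $\R^N$). By the first two assumptions, we have the identity $h_\theta(x) = g(\theta_{0,\mathcal{M}^\perp})$, so
\begin{equation*}
  \Prob\bigl(|h_\theta(x) - \E[h_\theta(x)]| > \epsilon\bigr)
  = \Prob\bigl(|g(\theta_{0,\mathcal{M}^\perp}) - \E[g(\theta_{0,\mathcal{M}^\perp})]| > \epsilon\bigr).
\end{equation*}

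Next, I would identify the distribution of $\theta_{0,\mathcal{M}^\perp}$. Since $\theta_0 \sim \mathcal{N}(0, \frac{1}{N} I_N)$ is isotropic, its orthogonal projection onto any subspace is again an isotropic Gaussian on that subspace with the same per-coordinate variance $\sigma^2 = 1/N$; choosing an orthonormal basis of $\mathcal{M}^\perp$ of dimension $N - d(N)$, we may regard $\theta_{0,\mathcal{M}^\perp}$ as $\mathcal{N}(0, \frac{1}{N} I_{N-d(N)})$. Applying Lemma \ref{Levy-Gauss} to $g$ with this Gaussian input and Lipschitz constant $L$ yields
\begin{equation*}
  \Prob\bigl(|g(\theta_{0,\mathcal{M}^\perp}) - \E[g]| > \epsilon\bigr)
  \;\leq\; 2 \exp\!\left( -C \frac{\epsilon^2}{L^2 \cdot (1/N)} \right)
  \;=\; 2 \exp\!\left( -C \frac{N \epsilon^2}{L^2} \right),
\end{equation*}
which is exactly the claim. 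The dimension-free nature of Gaussian concentration is crucial here: the factor of $N$ appears only through the small initialization variance $1/N$, not through the possibly large ambient dimension of $\mathcal{M}^\perp$.

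The only subtle step is the reduction in the opening paragraph, namely justifying that under Assumptions \ref{assum:invariant-space} and \ref{assum:deterministic-solution} the trained parameter really splits as $\theta_\mathcal{M}^* + \theta_{0,\mathcal{M}^\perp}$; this is essentially a restatement of the assumptions but should be spelled out so that the Lipschitz bound on $g$ inherits cleanly from Assumption \ref{assum:lipschitz}. Everything after that is a direct invocation of the Gaussian concentration lemma, so I do not anticipate any substantive technical obstacle beyond being careful with the identification of the projected Gaussian's per-coordinate variance.
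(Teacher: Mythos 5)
Your proof is correct and follows essentially the same route as the paper: both decompose the trained parameters as $\theta = \theta_\mathcal{M}^* + \theta_{0,\mathcal{M}^\perp}$, define the same $L$-Lipschitz function $g$ of the $\mathcal{M}^\perp$-component of the initialization, and invoke the dimension-free Gaussian concentration lemma with $\sigma^2 = 1/N$. Your explicit remark that the projection of the isotropic Gaussian onto $\mathcal{M}^\perp$ keeps per-coordinate variance $1/N$ is the same step the paper handles by choosing coordinates aligned with $\mathcal{M}$, so there is no substantive difference.
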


\begin{proof}
In our setting,  
the parameters at the end of learning can be expressed as
\begin{equation}
    \theta = \theta_\mathcal{M}
^* + \theta_{\mathcal{M}^\perp}
\end{equation}
where $\theta_\mathcal{M}^*$ is independent of the initialization $\theta_0$.   To simplify notation, we will assume that, at least locally around $\theta_\mathcal{M}^*$, $\mathcal{M}$ is spanned by the first $d(N)$ standard basis vectors, and $\mathcal{M}^\perp$ by the remaining $N-d(N)$.
This will allow us, from now on, to use the same variable names for $\theta_\mathcal{M}$ and $\thmp$ to denote their lower-dimensional representations of dimension $d(N)$ and $N-d(N)$ respectively.
More generally, we can assume that there is a mapping from $\theta_\mathcal{M}$ and $\thmp$ to those lower-dimensional representations. 

From Assumptions~\ref{assum:invariant-space} and \ref{assum:init} we get
\begin{equation}
    \theta_{\mathcal{M}^\perp}
    \sim \mathcal{N}\left(0, \frac{1}{N} I_{(N-d(N))\times (N-d(N))}\right).
\end{equation}

Let $g(\thmp) 
\triangleq h_{\theta_\mathcal{M}
^* + \thmp}(x)$.
By Assumption~\ref{assum:lipschitz}, 
$g(\cdot)$ is $L$-Lipschitz.
Then, by the Gaussian concentration theorem we get,
\begin{equation}
    \Prob(|g(\thmp)-\E[g(\thmp)]| >\epsilon) \leq 2 \exp\left( - C \frac{N \epsilon^2}{ L^2}\right).
\end{equation}
\end{proof}
The result of Theorem~\ref{thm:init-var-decay} immediately follows from Theorem~\ref{thm:concentration-predictions} and Corollary~\ref{Levy-variance}, with $\sigma^2 = 1/N$:
\beq 
\Var_{\theta_0}(h_\theta(x)) \leq C \frac{2L^2}{N}
\eeq
Provided the Lipschitz constant $L$ of the prediction  grows more slowly than the square of dimension, $L=o(\sqrt{N})$, we conclude that the variance vanishes to zero as $N$ grows.

\section{Bound on classification error in terms of regression error}
\label{app:classification_regression_relation}

\newcommand{\Rcl}{\cR_{\mbox{\tiny classif}}}
\newcommand{\Rreg}{\cR_{\mbox{\tiny reg}}}
In this section we give a bound on  classification risk $\Rcl$ in terms of the regression risk $\Rreg$. 

{\bf Notation.} Our classifier defines a  map $h: \mathcal{X} \to  \mathbb{R}^k$, which outputs probability vectors $h(x) \in \mathbb{R}^k$, with $\sum_{y=1}^k h(x)_y = 1$. The classification loss is defined by 
\begin{align} 
L(h) &= \mbox{Prob}_{x,y} \{h(x)_y < \max_{y'} h(x)_{y'}\} \nonumber \\
&= \mathbb{E}_{(x,y)} I(h(x)_y < \max_{y'} h(x)_{y'})
\end{align}
where $I(a) = 1$ if predicate $a$ is true and 0 otherwise.  Given trained predictors $h_S$ indexed by  training dataset $S$, the classification and regression risks are given by,
\beq 
\Rcl = \mathbb{E}_S L(h_S), \qquad \Rreg = \mathbb{E}_S \mathbb{E}_{(x,y)} ||h_S(x) - Y||^2_2
\eeq
where $Y$ denotes the one-hot vector representation of the class $y$. 

\begin{prop}
The classification risk is bounded by four times the regression risk,  
$\Rcl \leq 4 \Rreg$.  
\end{prop}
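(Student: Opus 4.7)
The plan is to establish the inequality pointwise: for every $(x,y)$, show that $I(h(x)_y < \max_{y'} h(x)_{y'}) \leq 4\|h(x)-Y\|_2^2$, where $Y$ is the one-hot vector of the true class $y$. Once this holds pointwise, taking expectation over $(x,y)$ yields $L(h) \leq 4\,\mathbb{E}_{(x,y)}\|h(x)-Y\|_2^2$ for each trained predictor, and a further expectation over the training dataset $S$ gives $\Rcl \leq 4\Rreg$.

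The crux of the argument is the following observation: if the classifier errs at $(x,y)$, i.e.\ $h(x)_y < \max_{y'} h(x)_{y'}$, then necessarily $h(x)_y < 1/2$. Indeed, since $h(x)$ is a probability vector with $\sum_{y'} h(x)_{y'}=1$, the inequality $h(x)_y \geq 1/2$ would force $h(x)_{y'} \leq 1/2 \leq h(x)_y$ for every $y' \neq y$, contradicting the fact that some other coordinate strictly exceeds $h(x)_y$.

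Once we have $h(x)_y < 1/2$ on the error event, we conclude
\begin{equation*}
\|h(x)-Y\|_2^2 \;\geq\; (h(x)_y - 1)^2 \;>\; \tfrac{1}{4},
\end{equation*}
so $1 < 4\|h(x)-Y\|_2^2$ on that event, which is exactly the pointwise inequality we need (the inequality is trivial on the complementary event where the indicator vanishes).

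I do not expect any genuine obstacle here: the only subtle point is justifying $h(x)_y < 1/2$ on errors, and this follows immediately from the probability-simplex constraint. The remainder is routine monotonicity of expectation combined with Fubini/tower to go from the pointwise bound to $\Rcl \leq 4\Rreg$.
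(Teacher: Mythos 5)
Your proposal is correct and follows essentially the same route as the paper: both hinge on the observation that a misclassification forces the true-class probability below $\tfrac12$, hence the squared distance to the one-hot target exceeds $\tfrac14$. The only cosmetic difference is that you conclude via a pointwise bound on the indicator (the indicator of error is at most $4\|h(x)-Y\|_2^2$) and then take expectations, whereas the paper averages the same inequality in the form of Markov's inequality applied to $\|h_S(x)-Y\|_2^2$; these are equivalent.
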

\begin{proof}  First note that, if $h(x) \in \R^k$ is a probability vector, then 
\[
h(x)_y < \max_{y'} h(x)_{y'}\,  \Longrightarrow \, h(x)_y < \frac12
\]
By taking the expectation over $x, y$,  we obtain the inequality $L(h) \leq {\widetilde L}(h)$ where
\beq {\widetilde L} (h) = \mbox{Prob}_{x,y} \{ h(x)_y < \frac12\}\eeq  

We then have,
\begin{align*} \Rcl :=  \mathbb{E}_S L(h_S) 
&\leq \mathbb{E}_S \tilde{L}(h_S) \\
& = \mbox{Prob}_{S; \, x,y} \{ {h_S(x)}_y < \frac12 \} \\
&= \mbox{Prob}_{S; \, x,y} \{|h_S(x)_y - Y_y| >\frac12\} \\
&\leq \mbox{Prob}_{S; \, x,y} \{||h_S(x) - Y||_2 > \frac12 \}  \\
& = \mbox{Prob}_{S; \, x,y} \{||h_S(x) - Y||^2_2 > \frac14 \}
 \leq 4 \Rreg
\end{align*}

where the last inequality follows from  Markov's inequality.  

\end{proof}
\chapter{Common intuitions from impactful works} \label{app:intuitions}

``Neural Networks and the Bias/Variance Dilemma'' from \citep{geman}: ``How big a network should we employ? A small network, with say one hidden unit, is likely to be biased, since the repertoire of available functions spanned by $f(x; w)$ over allowable weights will in this case be quite limited. If the true regression is poorly approximated within this class, there will necessarily be a substantial bias. On the other hand, if we overparameterize, via a large number of hidden units and associated weights, then the bias will be reduced (indeed, with enough weights and hidden units, the network will interpolate the data), but there is then the danger of a significant variance contribution to the mean-squared error. (This may actually be mitigated by incomplete convergence of the minimization algorithm, as we shall see in Section 3.5.5.)''

``An Overview of Statistical Learning Theory'' from \citep{Vapnik:1999}: ``To avoid over fitting (to get a small confidence interval) one has to construct networks with small VC-dimension.''

``Stability and Generalization'' from \citet{Bousquet2002}: ``It has long been known that when trying to estimate an unknown function from data, one needs to find a tradeoff between bias and variance. Indeed, on one hand, it is natural to use the largest model in order to be able to approximate any function, while on the other hand, if the model is too large, then the estimation of the best function in the model will be harder given a restricted amount of data." Footnote: ``We deliberately do not provide a precise definition of bias and variance and resort to common intuition about these notions."

Pattern Recognition and Machine Learning from \citet{Bishop:2006}: ``Our goal is to minimize the expected loss, which we have decomposed into the
sum of a (squared) bias, a variance, and a constant noise term. As we shall see, there
is a trade-off between bias and variance, with very flexible models having low bias
and high variance, and relatively rigid models having high bias and low variance.''

``Understanding the Bias-Variance Tradeoff'' from \citet{fortmann-roe_2012}: ``At its root, dealing with bias and variance is really about dealing with over- and under-fitting. Bias is reduced and variance is increased in relation to model complexity. As more and more parameters are added to a model, the complexity of the model rises and variance becomes our primary concern while bias steadily falls. For example, as more polynomial terms are added to a linear regression, the greater the resulting model's complexity will be.''
\begin{figure}[h]
 \centering
 \includegraphics[width=.7\textwidth]{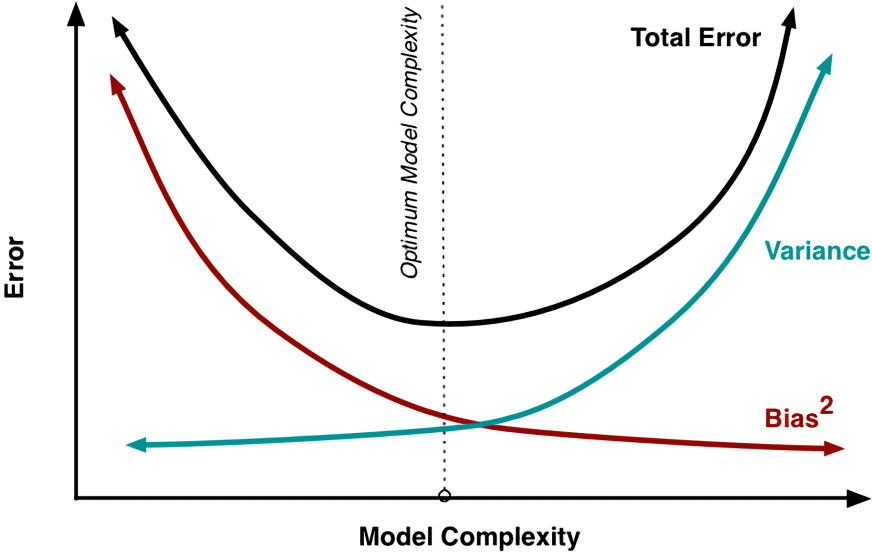}
 \caption{Illustration of common intuition for bias-variance tradeoff \citep{fortmann-roe_2012}}
\end{figure}

\end{document}